\documentclass{article}

\usepackage[accepted]{icml2026}
\usepackage{microtype}
\usepackage{graphicx}
\usepackage{booktabs} 

\usepackage{hyperref}

\usepackage{algorithmic} 
\usepackage{algorithm}

\usepackage[utf8]{inputenc} 
\usepackage[T1]{fontenc}    
\usepackage{url}            
\usepackage{amsfonts}       
\usepackage{nicefrac}       
\usepackage{microtype}      
\usepackage{caption}
\usepackage{makecell}
\usepackage{subcaption}
\usepackage{graphicx}
\usepackage{dsfont}
\usepackage{colortbl}
\usepackage{enumitem}
\usepackage{bbm}
\usepackage{titlesec}
\usepackage{amsmath,amsthm,amsfonts,amssymb,amscd}
\usepackage{multirow,booktabs}
\usepackage[table]{xcolor}
\usepackage{lastpage}
\usepackage{fancyhdr}
\usepackage{mathrsfs}
\usepackage{wrapfig}
\usepackage{enumitem}  
\usepackage{setspace}
\usepackage{calc}
\usepackage{multicol}
\usepackage{mathtools}
\usepackage{cancel}
\usepackage{tikz}

\usepackage{framed}
\usepackage[most]{tcolorbox}
\usepackage{xspace}
\colorlet{shadecolor}{orange!15}

\newcommand{\opt}{\texttt{OPT}\@\xspace}
\newcommand{\alg}{\texttt{ALG}\@\xspace}

\newcommand{\algedflearn}{$\mathtt{EDF}_{\Phi}^{L}$\@\xspace}
\newcommand{\algrtwo}{\texttt{ALG}$^{R2}$\@\xspace}
\newcommand{\algrs}{\texttt{ALG}$^{Rs}$\@\xspace}

\newcommand{\algtheta}{\texttt{ALG}$^{\theta}$\@\xspace}
\newcommand{\algthetalearn}{\texttt{ALG}$^{\theta, U}$\@\xspace}
\newcommand{\Galg}{G_{\text{\alg}}}
\newcommand{\Gopt}{G_{\text{\opt}}}

\newcommand{\Galgrtwo}{G_{\text{\algrtwo}}}
\newcommand{\Galgtheta}{G_{\text{\algtheta}}}
\newcommand{\Galgthetalearn}{G_{\text{\algthetalearn}}}
\newcommand{\Galgrs}{G_{\text{\algrs}}}

\DeclareMathOperator*{\argmax}{arg\,max}
\DeclareMathOperator*{\argmin}{arg\,min}
\makeatletter
\newcommand*{\textoverline}[1]{$\overline{\hbox{#1}}\m@th$}
\makeatother

\newcommand{\phat}{\widehat{p}_{a_t b_t, t}}
\newcommand{\qhat}{\widehat{q}_{a_t b_t, t}}
\newcommand{\ucba}{UCB_{a_t, t}}
\newcommand{\ucbb}{UCB_{b_t, t}}
\newcommand{\ucbaopt}{UCB_{a_t^*, t}}

\newcommand{\lcbb}{LCB_{b_t, t}}

\newcommand{\lcbbopt}{LCB_{b_t^*, t}}
\newcommand{\betaa}{\beta_{a_t, t}}
\newcommand{\betab}{\beta_{b_t, t}}

\usepackage{hyperref}
\usepackage[utf8]{inputenc} 
\usepackage[T1]{fontenc}    
\usepackage{url}            
\usepackage{mathtools}
\usepackage{booktabs}       
\usepackage{amsfonts}       
\usepackage{nicefrac}       
\usepackage{microtype}      
\usepackage{xcolor}         
\usepackage{MnSymbol}
\usepackage{xspace}

\usepackage{pict2e,picture,graphicx}
\usepackage[overload]{empheq}
\usepackage{microtype}
\usepackage{graphicx}
\usepackage{subcaption}
\usepackage{booktabs}
\usepackage{amsmath}
\usepackage{cases}
\usepackage{mathtools}
\usepackage{amsthm}
\usepackage{thm-restate}
\usepackage{enumitem}

\allowdisplaybreaks
\usepackage{xparse}
\usepackage[capitalize, noabbrev]{cleveref}
\usepackage{wrapfig}
\usepackage{bbm}
\usepackage{yfonts}
\usepackage{nicefrac} 
\usepackage{multirow}
\usepackage{bm}
\usepackage{bbm}
\usepackage{subcaption}

\hypersetup{
	colorlinks = true,
	linkcolor = black,
	citecolor = black,
	linktocpage = true,
	urlcolor = darkGreen
}
\usepackage{comment}

\theoremstyle{plain}
\newtheorem{theorem}{Theorem}[section]

\newtheorem{lemma}[theorem]{Lemma}

\newtheorem*{theorem-non}{Theorem}

\theoremstyle{definition}

\theoremstyle{remark}

	\definecolor{niceRed}{HTML}{BE2626}
	\definecolor{Red2}{HTML}{DB3236}
	\definecolor{mgreen}{HTML}{A0C88C}
	\definecolor{blueGrotto}{HTML}{059DC0}
	\definecolor{limeGreen}{HTML}{81B622}
	\definecolor{myellow}{HTML}{E09B23}
	\definecolor{darkGreen}{HTML}{2E8B57}
	\definecolor{navyBlueP}{HTML}{03468F}
	\definecolor{Sepia}{HTML}{7F462C}
	\definecolor{orange2}{HTML}{FF8000}
	\definecolor{mgray}{HTML}{ABB3B8}
	\definecolor{lgray}{HTML}{E5E8E9}
	\definecolor{myPurple}{HTML}{AF007C}
	\definecolor{mypurple2}{HTML}{CC9EFF}
	\definecolor{royalBlue}{HTML}{057DCD}
	\definecolor{mpink}{HTML}{FC6C85}
	\definecolor{lblue}{HTML}{4A90E2}
	\definecolor{peagreen}{HTML}{98C127}
	\definecolor{typnavy}{HTML}{001F3F}
	\definecolor{typblue}{HTML}{0074D9}
	\definecolor{typaqua}{HTML}{7FDBFF}
	\definecolor{typteal}{HTML}{39CCCC}
	\definecolor{typeastern}{HTML}{239DAD}
	\definecolor{typpurple}{HTML}{B10DC9}
	\definecolor{typfuchsia}{HTML}{F012BE}
	\definecolor{typmaroon}{HTML}{85144B}
	\definecolor{typred}{HTML}{FF4136}
	\definecolor{typorange}{HTML}{FF851B}
	\definecolor{typyellow}{HTML}{FFDC00}
	\definecolor{typolive}{HTML}{3D9970}
	\definecolor{typgreen}{HTML}{2ECC40}
	\definecolor{typlime}{HTML}{01FF70}
	\definecolor{newgreen}{HTML}{83C702}
	\definecolor{mathchaYellow}{HTML}{F8E71C}
	\definecolor{mathchaGreen}{HTML}{7ED321}
	\definecolor{mathchaPurple}{HTML}{BD10E0}
	\definecolor{mathchaBlue}{HTML}{58B1FF}
	\definecolor{mathchaRed}{HTML}{FF5B59}
	\definecolor{metablue}{HTML}{0064E0}

\allowdisplaybreaks

\icmltitlerunning{Online Packet Scheduling with Deadlines and Learning}

\begin{document}

\twocolumn[
  \icmltitle{Online Packet Scheduling with Deadlines and Learning}




  \begin{icmlauthorlist}
    \icmlauthor{Gianmarco Genalti}{polimi}
    \icmlauthor{Achraf Azize}{ensae}
    \icmlauthor{Vianney Perchet}{ensaecriteo}
  \end{icmlauthorlist}
\icmlaffiliation{polimi}{Politecnico di Milano}
  \icmlaffiliation{ensae}{FairPlay Joint Team, CREST, ENSAE, IP Paris}
  \icmlaffiliation{ensaecriteo}{FairPlay Joint Team, CREST, ENSAE, IP Paris, CRITEO AI Team}

  \icmlcorrespondingauthor{Gianmarco Genalti}{gianmarco.genalti@polimi.it}
  \icmlkeywords{online scheduling, competitive analysis, bandits, deadlines, regret}

  \vskip 0.3in
]
\printAffiliationsAndNotice{}

\begin{abstract}
 Network routers that enforce Quality-of-Service (QoS) guarantees must decide, at every clock cycle, which expiring packet of information to transmit, even when the value of the packet is \textit{unknown} until it is processed. We frame this problem as the \textit{Online Packet Scheduling with Deadlines} (OPSD) problem under Partial Feedback: packets arrive at every clock cycle, with different \textit{deadlines}, but the weights are only observed after execution. Under a stochastic assumption on the unknown weights, we explore different variants of the OPSD problem with bandit feedback. We establish a connection between our setting and the \textit{sleeping bandits} problem, and set our learning goal to $\alpha$-regret minimization. We provide algorithms with provable $\alpha$-regret guarantees under different spans of slackness, distinguishing systems allowing for randomization and systems that do not. In every scenario, our algorithms achieve an $\alpha$-regret upper bound of $\widetilde{\mathcal{O}}\left(\sqrt{KT}\right)$, matching the lower bound for the standard bandit setting. In the practically relevant case of $2$-bounded deadline instances, where the deadline is set at most one clock cycle away from the arrival, our deterministic algorithm achieves the provably tightest possible competitive ratio. Remarkably, when the number of distinct packet types $K\ge 2$ is finite, it is possible to break the well-established $\Phi = \frac{1+\sqrt{5}}{2}$ competitive ratio barrier and attain a tighter competitive ratio $\theta_K$ ranging in $[\sqrt{2}, \Phi)$.
\end{abstract}

\section{Introduction}
The \textit{Online Packet Scheduling with Deadlines} (OPSD, for short) problem, also called \textit{bounded delay buffer problem} or \textit{traffic shaping}, is a theoretical framework that has been introduced to model the behavior of buffers that store data packets in \textit{Quality-of-Service} (QoS) networks \citep{karakus2017quality}. In QoS networks, it is crucial to enforce service standards, especially on the performance as seen by the end-users. QoS plays a key role in the modern digital infrastructure, notable examples include sensor networks \citep{iyer2003qos}, cloud computing \citep{szigeti2013end} and Intelligent Transportation Systems (ITS) \citep{belamri2021survey}.

In its simpler version, the OPSD framework assumes that packets with unit processing cost arrive over time and are stored in an unlimited buffer. 
In OPSD, the presence of \textit{deadlines} models the practical scenario of low-latency systems. In other words, it is not possible to keep a packet idling in the buffer for too many clock cycles. Every packet must then either be processed before its requested deadline or discarded from the buffer. The model is completed by including scores representing the importance of each packet, called \textit{weights}. Of course, any algorithm \alg has the goal of maximizing its total score, or \textit{cumulative gain}. 
Naturally, the presence of the weights makes the problem non-trivial and models the QoS property of the system. The arrival sequence defines an \textit{instance} of the OPSD problem and can be chosen in an arbitrary way by an \textit{adversary}. As different instances may present different opportunities to \alg, a standard way to evaluate its performance is to compare it to the best possible decision-maker for that instance---called \opt---through the \textit{competitive ratio} $C$, which is the ratio between the gain of \opt and the gain of \alg.

Over the last few years, there has been a surge in interest in \textit{learning-augmented} online algorithms. The idea is to include \textit{online learning techniques} to deal with reduced information or to boost the performance of online algorithms. Examples of online optimization problems with predictions include OPSD \citep{liang2023learning}, facility location \citep{agrawal2022learning}, and paging \citep{bansal2022learning}. Our work initiates the study of a variant of the OPSD problem, which we call \textit{Stochastic Online K-Packet Scheduling with Deadlines} ($K$-OPSD, for short), where we include online learning (and, in particular, \textit{bandit learning}) into the standard OPSD problem. In the $K$-OPSD framework, we assume that every packet belongs to one of $K$ different types, which is known by the scheduler. However, the weight of a packet is not revealed until it is processed. We make a stochastic assumption, assuming that every packet type is associated with a probability distribution, and when a packet is processed, the weight is sampled from the corresponding distribution. Recently, this type of extension appeared in \cite{merlis2023preemption} in the context of preemptive scheduling and in \cite{levy2024online} for online paging.

The goal of a scheduler in the $K$-OPSD problem can be naturally extended from the classic OPSD setting: a scheduling algorithm \alg wants to maximize its \textit{expected} cumulative gain.
In $K$-OPSD, a scheduling algorithm has less information available than in OPSD. As learning becomes a necessary component for a scheduler, a natural way to evaluate its performance is $\alpha$-regret. The notion of $\alpha$-regret combines the competitive ratio with an additive regret w.r.t. an optimal decision-maker that knows the average weights for every packet type and does not need to learn them. 


The stochastic MAB problem \cite{lattimore2020bandit} shares many similarities with the $K$-OPSD problem. 
We show that the $K$-OPSD problem includes (and generalizes) the Sleeping Bandit problem \cite{kleinberg2010regret}, which is a generalization of the classic MAB problem where an adversary can choose the subset of available actions at every round.
The OPSD setting is included in the $K$-OPSD setting, and even in the simpler scenarios, it is possible to prove a lower bound on the competitive ratio of any algorithm. For example, even in $2$\textit{-bounded instances} where the deadline of every packet is upper bounded by the arrival time plus one, a lower bound on the competitive ratio is $\Phi = \frac{1+\sqrt{5}}{2} \approx 1.618$, the so-called \textit{golden ratio}. A competitive ratio greater than one implies the impossibility of obtaining a sub-linear \emph{standard} regret in the $K$-OPSD problem. In this paper, we address the following question:
\begin{center}
    \textit{Is it possible to devise $\alpha$-no-regret algorithms in some $K$-OPSD problems?} 
\end{center}
We answer this question positively. It turns out that in many relevant scenarios this is possible, and with provably tight regret rates and state-of-the-art competitive ratios. Interestingly, the $K$ types assumption can even be leveraged to improve the existing competitive ratios.
\paragraph{Motivating Example}
As previously discussed, the OPSD problem is well-motivated by several applications. We now state a real-world scenario in which $K$-OPSD is a better fit, which motivates the setting. We are in a digital advertising scenario, in which there exist $K$ ad publishers that, in a really fast-paced manner, ask to participate in an auction to display an ad on their space. As an advertiser, you are interested in participating in the auctions with the highest possible probability of receiving a click, which is, however, unknown. However, the requests are often so large in number that not all of them can be processed. If you don't participate in the auction, you will never know if you would have received a click or not. Moreover, auctions have a limited lifespan and usually finish after a few seconds. This problem combines reward estimation and scheduling with deadlines, since a quick decision based on past observation must be made. Moreover, as the lifespan of auctions is usually \textit{really} short, it is even a stronger motivation for scenarios with short deadlines, such as the $2$-bounded $K$-OPSD problem.

\subsection{Existing Results and Summary of Contributions}

\begin{table*}[t!]
\centering
\setlength{\tabcolsep}{5pt}
\renewcommand{\arraystretch}{1.2}
\begin{tabular}{llll}
\toprule
\textbf{Setting} & \makecell{\textbf{Algorithm} \\ \textbf{Type}} & \makecell{\textbf{OPSD} \\ \textbf{Competitive Ratio} }&  \makecell{\textbf{$K$-OPSD}  \\ \textbf{$\alpha$-regret} }\\
\midrule
\multirow{2}{*}{\makecell[l]{$1$-bounded \\ (MAB)}}
  &Det. (upper)          & $1$  & $R_{1,T} \le \widetilde{\mathcal{O}}\left(\sqrt{KT}\right)$ 
  ~\cite{kleinberg2010regret}           \\
  \addlinespace[-2pt]
  & Det. (lower)           & $1$  & $R_{1,T} \ge \Omega\left(\sqrt{KT}\right)$ ~\cite{kleinberg2010regret}\\
\midrule
\multirow{4}{*}{$2$-bounded}
  & Det. (upper) & \cellcolor{gray!20}$\theta_K \stackrel{K \rightarrow \infty}{\rightarrow}\Phi, ~~  (\Phi$~~\cite{kesselman2001buffer})  & \cellcolor{gray!20}$R_{\theta_K, T} \le \widetilde{\mathcal{O}}\bigl(\sqrt{KT}\bigr),~~~\theta_K < \Phi$ \\
  & Det. (lower)          & $\theta_K \stackrel{K \rightarrow \infty}{\rightarrow}\Phi$~~~~\cite{hajek2001competitiveness} & \cellcolor{gray!20} {$R_{\theta_K, T} \ge \Omega\left(\sqrt{T}\right),~~~~~\theta_{K-1} < \Phi$} \\
  & Rand. (upper)   & $\frac{5}{4} $~~~~\cite{chin2006online}    & \cellcolor{gray!20} $R_{\frac{5}{4}, T} ~\le \widetilde{\mathcal{O}}\left(\sqrt{KT}\right)  $             \\
  & Rand. (lower)   &$ \frac{5}{4} $~~~~\cite{chin2003online}     & N.A.                \\
\midrule
\multirow{2}{*}{$3$-bounded}
  & Det. (upper)           & $\Phi$~~~~\cite{kesselman2001buffer}   & \cellcolor{gray!20} $R_{\Phi, T} ~\le \widetilde{\mathcal{O}}\left(\sqrt{KT}\right)$              \\
  & Det. (lower)           & $\Phi$ & \cellcolor{gray!20} $R_{\theta_K, T} \ge \Omega\left(\sqrt{T}\right),~~~\theta_{K-1} < \Phi$ \\
\midrule
\multirow{4}{*}{\makecell[l]{Unbounded \\ Slackness}}
& Det. (upper)           & $\Phi$~~~~\cite{vesely2022competitive} &   N.A.        \\
& Det. (lower)           & $\Phi$  & \cellcolor{gray!20} $R_{\theta_K, T} ~\ge \Omega\left(\sqrt{T}\right),~~\theta_{K-1} < \Phi$        \\
& Rand. (upper)   & $\frac{e}{e-1}$~~\cite{chin2006online}  &  \cellcolor{gray!20} $R_{\frac{e}{e-1}, T} \le \widetilde{\mathcal{O}}\left(\sqrt{KT}\right)$  \\
& Rand. (lower)   & $\frac{5}{4}$ &   N.A. \\
\bottomrule
\end{tabular}
\caption{Comparison of existing results with ours. Results without citations are direct consequences of other entries. Results highlighted in gray are our contributions. The $\widetilde{\mathcal{O}}$ notation hides universal constants and logarithmic factors. The $\Omega$ notation hides universal constants.}
\label{tab:literature}
\end{table*}

In Table \ref{tab:literature}, we summarize the existing results in the classic OPSD problem and compare them to our results. The $K$-OPSD setting is novel and unexplored in the literature, thus our bounds on the $\alpha$-regret can only be compared in light of the existing competitive ratios and upper bounds on the regret. However, in the $2$-bounded setting, we improve over existing results when $K<\infty$. As customary in this literature, we distinguish between \textit{deterministic} and \textit{randomized} algorithms. In what follows, we report an overview of our results. Through the paper, we use the $\mathcal{O}$ notation to hide universal constants, and the $\widetilde{\mathcal{O}}$ notation to hide universal constants and logarithms.

In the first part of this paper, we propose deterministic algorithms for the OPSD and the $K$-OPSD setting.
\begin{itemize}
    \item In the $2$-bounded and $3$-bounded $K$-OPSD setting, we propose a deterministic algorithm that suffers a $\Phi$-regret upper bounded as $\widetilde{\mathcal{O}}\left(\sqrt{KT}\right)$ (Theorem \ref{prop:reduction}). This upper bound is also provided in an instance-dependent version. In classic Sleeping MABs the best attainable bound on the the regret is in the order of $\widetilde{\mathcal{O}}\left(\sqrt{KT}\right)$ as well, and the competitive ratio matches the best existing one. This result highlights the connection between the $K$-OPSD setting and the Sleeping MAB setting, and the analysis provides insights into how the learning procedure can be decomposed in a meaningful way from the traditional competitive ratio analysis.
    \item In the $2$-bounded OPSD setting, when the number of distinct weights is $K<\infty$, we provide a novel deterministic algorithm called \algtheta that achieves a competitive ratio of $\theta_K$ (\ref{prop:upperboundtheta}) smaller than the best known competitive ratio of $\Phi$ \cite{hajek2001competitiveness}. Note that $\theta_K \in [\sqrt{2}, \Phi)$ for $K<\infty$ and $\theta_\infty = \Phi$. This upper bound on the competitive ratio is tight according to the lower bound proposed in \cite{hajek2001competitiveness}.
    \item In the $2$-bounded $K$-OPSD setting, we adapt \algtheta to deal with learning, resulting in \algthetalearn. This deterministic algorithm suffers a $\theta_K$-regret bounded as $\widetilde{\mathcal{O}}\left(\sqrt{KT}\right)$ (Theorem \ref{thr:upperbound_thetalearn}), and we prove that this is nearly tight with a lower bound (Theorem \ref{thr:lowerbound}).
\end{itemize}

We dedicate the second part of this paper to randomized algorithms.
\begin{itemize}
    \item  In the $2$-bounded $K$-OPSD setting, we propose \algrtwo, a randomized algorithm that suffers a $\frac{5}{4}$-regret upper bounded as $\widetilde{\mathcal{O}}\left(\sqrt{KT}\right)$ (Theorem \ref{thr:upperboundrtwo}).
    \item In the $K$-OPSD setting with unbounded slackness, we propose \algrs, an algorithm that suffers a $\frac{e}{e-1}$-regret upper bounded as $\widetilde{\mathcal{O}}\left(\sqrt{KT}\right)$ (Theorem \ref{thr:upperboundrs}).
\end{itemize}
In both cases, our bounds match the best existing competitive ratio as well as the tightest regret attainable in Sleeping MABs.

\section{Setting and Technical Tools}
In Online Packet Scheduling with Deadlines (OPSD), an instance is a sequence of packets $\mathcal{P} = \{(r_{p},d_{p}, w_p)\}_{p}$, where $r_p$ and $d_p \ge r_p$ are integers representing the arrival time and deadline of the packet $p$, respectively. $w_p\ge 0$ is a real number that represents the \textit{weight} of packet $p$. For the rest of this paper, we will slightly abuse notation by calling $w_p$ simply $p$. The time is discrete and divided into slots. A \textit{schedule} assigns a subset of packets $\mathcal{S} \subset \mathcal{P}$ to time slots such that (i) any packet $p \in \mathcal{S}$ is assigned to a slot in $\{r_p, \ldots, d_p\}$ and (ii) each slot has at most one packet assigned. The objective is to design a schedule that maximizes the total weight of the packets in $\mathcal{S}$. In the \textit{$s$-bounded} variant of the problem, the set of instances is constrained to those where $d_p \le r_p+s-1$, \emph{i.e.}, every packet can be assigned in either the arrival slot or one of the next $s-1$ ones (for a total of $s$ possibilities). We call $T \coloneqq \max_{p\in \mathcal{P}} d_p$. An online algorithm \alg observes the packets arriving over time and, at every round $t \in [T]$, has to decide which packet to schedule without observing the future. The packets not scheduled at time $t$ are stored in a set $\mathcal{B}_t$ called \textit{buffer}, until their deadline.
\paragraph{Competitive Ratio } We call $\Galg$ the total gain of \alg, \emph{i.e.} the total sum of the packets scheduled up to $T$.
The standard performance metric in the literature is the \textit{competitive ratio}, \emph{i.e.}, the ratio between the performance of a benchmark decision-maker and $\Galg$ \citep{borodin2005online}. The goal is to minimize the competitive ratio. The adversary observes the algorithm and decides the input sequence. If the algorithm is deterministic, the full sequence can be chosen beforehand, as the behavior of the algorithm is completely predictable. In the case of randomized algorithms, the adversary cannot observe its random bits during the execution, but can decide the next entry in the sequence based on the past decisions of the algorithm. The benchmark is the best possible decision-maker that observes the whole sequence, called \opt. Thus, the competitive ratio $C$ is defined by the inequality $\Gopt \le C\cdot\Galg$. 


\subsection{Learning in Online Packet Scheduling with Deadlines} 
\label{sec:prob_form}
We consider the \textit{Stochastic Bounded-Delay $K$-Packet Scheduling} problem, a variant of the problem where there exist $K$ distinct classes of packets, and all packets belonging to the same class have their weights sampled from the same probability distribution with mean $\mu_k$ for $k\in[K]$. We then define an instance as $\mathcal{P}= \{(r_p, d_p, X_p, c_p)\}_{p}$, where $c_p \in [K]$ indicates the class of the packet $p$ and $X_p$ is the weight of packet $p$, which is sampled from the distribution $\mathcal{D}_{c_p}$ with mean $w_{p} \coloneqq \mu_{c_p}$. Without loss of generality, we order the expectations of the $K$ types in a descending order, \emph{i.e.}, $\mu_1 > \mu_2 > \ldots > \mu_K$. Moreover, we define $\Delta_{i,j} \coloneqq \mu_i - \mu_j$ for $j \ge i$ as the \textit{sub-optimality gap} of $j$ w.r.t. $i$. We will slightly abuse the notation and indicate $w_p$ simply as $p$ for the rest of this paper. The scheduler does not observe the distributions nor their means, and observes $X_{p} \in [0,1]$ at time $t \in [T]$ only if the packet $p$ is assigned to slot $t$. The adversary is allowed to decide the type of the packet injected, but not the realization of the weight (and does not observe it beforehand). This variant can be trivially extended to the $s$-bounded setting. As it will become handy in what follows, we introduce the set $V_t \subseteq [K]$ as the set of packet types such that at least one packet of the type is in the buffer and has a deadline $t$. We call $B_t \subseteq [K]$ the set of packet types that are not in $V_t$ and such that at least one packet of the type is in the buffer and has a deadline \textit{greater than} $t$. Note that no reasonable \alg schedules a packet of type $i$ with deadline greater than $t$ if $i \in V_t$.

\paragraph{$\alpha$-Regret} First, consider a deterministic algorithm \alg. Let $\Galg = \sum_{i\in \mathcal{S}} X_i$ be the total gain of \alg. We assume that no adversary can observe the actual realizations of the weights, and consider an \opt that does the best possible scheduling in expectation, \emph{i.e.} uses the weight's expectation as the actual value. We call $\mathbb{E}[\Gopt]$ the expected total gain of \opt. Thus, a natural performance metric is the \textit{$\alpha$-regret}, defined as $R_{\alpha,T} = \mathbb{E}[\Gopt] - \alpha\mathbb{E}[\Galg]$, where the expectation is taken w.r.t. the weights' randomization (that is, in this case, the only possible source of randomness). If \alg is randomized, the expected total gain also accounts for the algorithm's randomization. The definition of \opt is the same: \opt can always be assumed deterministic as the provided sequence is fixed. Finally, we say that an algorithm achieves \textit{$\alpha$-no-regret} if $R_{\alpha,T}^{\text{\alg} }= o(T)$, \emph{i.e.} its $\alpha$-regret is sublinear in $T$.

\paragraph{Deterministic vs Randomized Algorithms in $K$-OPSD}
In $K$-OPSD we need to set a formal definition of a deterministic algorithm. Indeed, due to the stochastic nature of the weights, the algorithm's decision is intrinsically stochastic. Thus, as customary in the bandits literature, we say that an algorithm is deterministic if its decision at time $t$ is \textit{predictable} w.r.t. the filtration up to time $t-1$. In other words, the adversary can predict which packet is processed by \alg at time $t$ by just observing the history until $t-1$. For instance, the well-known \texttt{UCB1} algorithm \citep{auer2002finite} is deterministic in this sense. One can interpret the distinction between deterministic and randomized algorithms as a distinction between the strength of the adversary: while competitive ratios are lower in randomized algorithms, it is also true that the adversary is less informed. An adversary observing the actual random bits of the algorithm would result in every algorithm being, \textit{de facto}, deterministic.

\paragraph{Mean Estimation and Confidence Bounds}
Define $\widehat{\mu}_{i,t} = \frac{1}{N_i(t)} \sum_{s=1}^t r_i \mathds{1}_{p_s = i}$ to be the empirical mean of the observed weights from packets of type $i$ observed up to time $t$. Let $\mathrm{LCB}_{i,t}(\delta)$ and $\mathrm{UCB}_{i,t}(\delta)$ be lower and upper confidence bounds on the unknown mean weight from a packet of type $i$, with confidence level $1-\delta$.  Let $\beta_{i,t}(\delta) \coloneqq \frac{1}{2}(\mathrm{UCB}_{i,t}(\delta) - \mathrm{LCB}_{i,t}(\delta))$ be (half of) the width of the confidence bound. We use the same procedure introduced by~\cite{levy2024online} to construct our confidence bounds. Specifically, the confidence width is chosen to be $\beta_{i,t}(\delta) = \sqrt{\frac{\log(K T^2/\delta)}{2 N_i(t)}}$, where $K$ is the number of bounds that must hold simultaneously (in our case, one per packet type). Then, $\mathrm{UCB}_{i,t}(\delta) \triangleq \min\{ \mathrm{UCB}_{i,t - 1}(\delta), \widehat{\mu}_{i,t} +\beta_{i,t}(\delta) \}$, and similarly, $\mathrm{LCB}_{i,t}(\delta) \triangleq \max\{ \mathrm{LCB}_{i,t - 1}(\delta), \widehat{\mu}_{i,t} - \beta_{i,t}(\delta) \}$. Lemma C.1 in~\cite{levy2024online} ensures that the true weight lies inside the confidence bound with probability at least $1 - \frac{1}{KT}$ for a proper choice of $\delta$, and that the UCBs and LCBs are positive and monotonic. For clarity, we will omit the dependence on $\delta$ (except when necessary) and assume that $\delta \le \frac{1}{T}$. $\delta$ is an algorithm parameter (it is necessary to compute UCBs and LCBs), and additional details on the choice of $\delta$ for every algorithm can be found in the proofs.

\subsection{Connection with Multi-Armed Bandits}
\label{sec:sleeping}
In the Sleeping bandit problem~\citep{kleinberg2010regret}, a learner is sequentially faced with an \emph{adversarially} chosen set of actions $\mathcal{A}_t \subset [K]$ for $T$ rounds. Each action is associated with a probability distribution that the learner does not observe, and a sample (or \textit{reward}) is received after the corresponding action is chosen. The cumulative reward of the learner is given by the sum of all rewards obtained up to $T$. This problem, which is a generalization of the stochastic $K$-armed bandit problem, has been explored in previous literature under the lens of \textit{regret minimization}. In regret minimization, the learner's goal is to minimize the performance gap w.r.t. the best possible learner, \emph{i.e.} $R_T \coloneqq \mathbb{E}[\Gopt]-\mathbb{E}[\Galg]$ \cite{lattimore2020bandit}. Note that regret minimization is a special case of $\alpha$-regret minimization ($\alpha = 1$). As the sleeping bandit problem includes the stochastic $K$-armed bandit problem, the lower bound on the regret that any algorithm can suffer is greater or equal, and it is in the order of $\Omega ( \sqrt{KT})$. Interestingly, the $K$-OPSD problem is a generalization of the \textit{$K$-armed} sleeping bandit learning problem. 
\begin{restatable}[Sleeping Bandits are a special case of $K$-OPSD]{lemma}{reductionMABS}
Every $1$-bounded instance of the $K$-OPSD problem can be reduced to a $K$-armed Sleeping Bandit problem, and vice versa.
\end{restatable}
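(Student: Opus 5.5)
The plan is to construct explicit bijective translations between the two problems in both directions, and to check that they preserve the information available to the learner, the feasible actions at each round, and the expected gains (so that regrets coincide). The key observation is that in a $1$-bounded instance every packet satisfies $d_p = r_p$. A packet is therefore either scheduled in its arrival slot or discarded, the buffer carries nothing across rounds, and $B_t=\emptyset$ for every $t$.

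For the direction from $K$-OPSD to sleeping bandits, I would define, for each slot $t$, the available set $\mathcal{A}_t \coloneqq V_t$, i.e.\ the set of types having at least one packet with $r_p=d_p=t$. The arms are the $K$ types, with reward distributions $\mathcal{D}_k$. Several packets of the same type arriving at $t$ are interchangeable. Their weights are i.i.d.\ draws from $\mathcal{D}_k$, at most one can be processed, and the rest expire unobserved. Scheduling any one of them is therefore equivalent to pulling arm $k$ and observing a fresh sample from $\mathcal{D}_k$. Since the adversary fixes only types and never sees realizations, the adversarially chosen $\mathcal{A}_t$ is a legitimate sleeping-bandit sequence. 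If $V_t=\emptyset$, the round is idle in both problems; equivalently, one may drop it or add a dummy zero-reward arm.

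Next I would check that the benchmarks coincide. In a $1$-bounded instance, slots decouple, so \opt (using means) simply picks $\arg\max_{k\in V_t}\mu_k$ in every slot. This is exactly the per-round best available arm used as the sleeping-bandit benchmark of \citet{kleinberg2010regret}. Consequently $\mathbb{E}[\Gopt]-\mathbb{E}[\Galg]$ is the same quantity in both problems, as is the $\alpha$-regret for any $\alpha$. Any learner for one problem then runs verbatim on the other with identical regret. The same holds for the notion of deterministic (predictable) algorithm, since histories map one-to-one.

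For the converse direction, given a sleeping bandit instance $(\mathcal{A}_t)_{t\le T}$, I would inject at each slot $t$ exactly one packet $(t,t,X,k)$ for each $k\in\mathcal{A}_t$. This is a $1$-bounded instance, and the same correspondence applies.

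The main obstacle is the subtle point of matching benchmarks and the adversary model rather than the construction itself. The sleeping-bandit literature sometimes uses a ranking/ordering benchmark rather than the per-round best available arm. I would argue that in the stochastic setting with fixed means these coincide, because the best ordering is by $\mu_k$, and that the adaptive-versus-oblivious choice of $\mathcal{A}_t$ matches the type-only adversary described above. I would also state explicitly that duplicates of a type are redundant, so that the map is well defined.
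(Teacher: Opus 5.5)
Your proposal is correct and follows the same route as the paper's proof: since $d_p=r_p$, you identify $\mathcal{A}_t$ with the set of types present in slot $t$, collapse duplicate packets of a type (they are indistinguishable and at most one can be scheduled), and conversely inject one packet per available arm. You go further than the paper by checking explicitly that the per-slot choice of \opt coincides with the sleeping-bandit benchmark, so the regrets match, a point the paper leaves implicit; the only quibble is that the forward map is not literally bijective, since instances that differ only in duplicates collapse to the same sequence $(\mathcal{A}_t)_t$.
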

The \texttt{AUER} algorithm \citep{kleinberg2010regret}, a gold standard for regret minimization in sleeping bandits, employs the well-known \textit{optimism in the face of uncertainty} principle \citep{auer2002finite} to achieve no-regret. \texttt{AUER} suffers an expected regret upper bounded by $\widetilde{\mathcal{O}}\left(\sqrt{KT}\right)$. 
\section{Instance-Dependent Guarantees in $2$ and $3$-bounded $K$-OPSD} 
It is known that a greedy (w.r.t. to deadlines) algorithm can always obtain a competitive ratio of $2$, for every $s>1$  \cite{hajek2001competitiveness}. However, it is possible to improve this value by slightly modifying the greedy algorithm. We recall the $\beta$-Ratio Earliest Deadline First (\texttt{EDF}$_\beta$) algorithm \citep{kesselman2001buffer}. If $h_t$ is the heaviest packet in $\mathcal{B}_t$, this algorithm schedules the packet $f_t$ with the earliest deadline s.t. $\beta f_t \ge h_t$. This family of algorithms is also called \textit{thresholding} algorithms, since the decision is only taken by scanning the packets in a deadline-ordered fashion until a threshold is exceeded. 

We show that a simple adaptation of \texttt{EDF}$_\beta$ achieves $\Phi$-no-regret in $2$ and $3$-bounded $K$-OPSD.
In particular, we show that combining the optimism in the face of the uncertainty principle with the \texttt{EDF}$_\Phi$ algorithm, it is possible to obtain both instance-dependent and worst-case $\Phi$-regret bounds. We call \algedflearn the resulting algorithm, and we report its pseudocode in Algorithm \ref{alg:alg_edflearn}. At every round, the algorithm identifies the packet type in the buffer with the largest UCB, \emph{i.e.}, $h_t = \max_{h \in \mathcal{B}_t} UCB_{h,t}$, and then sends the smallest deadline packet $f_t$ s.t. 
$$
 f_t \in \argmin_{f \in \mathcal{B}_t \text{ s.\,t.}\ \Phi UCB_{f,t} \ge UCB_{h,t}} \{d_f\}.
$$
\begin{algorithm}[!t]
\caption{\algedflearn}
\label{alg:alg_edflearn}
\small
\begin{algorithmic}[1]
    \FOR{$t \in [T]$}
        \STATE Receive new packets, identify $h_t = \max_{h \in \mathcal{B}_t} UCB_{h,t}$.
        \STATE Schedule $f_t \in \argmin_{f \in \mathcal{B}_t \text{ s.\,t.}\ \Phi UCB_{f,t} \ge UCB_{h,t}} \{d_f\} $
    \ENDFOR
\end{algorithmic}
\end{algorithm}
In stochastic MABs, it is possible to provide \textit{instance-dependent} guarantees on the regret that depend on the time horizon $T$ and on the reward-generating distributions, in particular on the sub-optimality gaps. In Sleeping MABs, it is possible to upper bound the regret of \texttt{AUER} as $\mathcal{O}\left(\sum_{i=1}^{K-1} \frac{\ln T}{\Delta_{i,i+1}}\right)$, which is tight \cite{kleinberg2010regret}. These types of guarantees are interesting because they provide additional insights on which situations are harder to deal with for an algorithm.
In the following proposition, we bound the $\Phi$-regret of \algedflearn in both an instance-dependent and worst-case fashion. 
\begin{restatable}[$\Phi$-regret of \algedflearn]{proposition}{algedflearnBound}
\label{prop:reduction}
    In $2$-bounded and $3$-bounded $K$-OPSD instances, \algedflearn suffers an expected cumulative $\Phi$-regret bounded as 
    \begin{align*}
    R_{\Phi,T}^{\text{\algedflearn}}
    \;\le\;
    \mathcal{O}\left(K\sum_{i =1}^{K - 1} \frac{\ln T}{\Delta_{i, i+1}} + \sum_{\substack{i,j \in [K]\\ \Delta_{i,j}^\Phi > 0}} \frac{\ln T}{\Delta_{i,j}^\Phi}\right),
\end{align*}
    where $\Delta_{i,j}^{\Phi} \coloneqq |\mu_i-\Phi\mu_j|$.
    Alternatively, the $\Phi$-regret can be bounded as
    \begin{align*}
        R_{\Phi,T}^{\text{\algedflearn}} \le \widetilde{\mathcal{O}}\left(\sqrt{KT}\right).
    \end{align*}
\end{restatable}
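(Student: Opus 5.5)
We plan to separate the competitive-ratio part from the learning part. Under the good event $\mathcal{E}$ that all confidence intervals contain the true means (probability at least $1-1/T$ by Lemma C.1 of \cite{levy2024online}; its complement contributes $\mathcal{O}(1)$ regret since gains are at most $T$), we compare \algedflearn with the ``clairvoyant'' algorithm $\mathtt{EDF}_\Phi$ of \cite{kesselman2001buffer}, which runs on the true means and is $\Phi$-competitive for $2$- and $3$-bounded instances. The difficulty is that the buffers of the two algorithms diverge once they schedule different packets. We therefore avoid a trajectory coupling and instead re-run the charging argument of \cite{kesselman2001buffer} directly on the trajectory of \algedflearn, with approximate weights. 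In that analysis each packet $o$ scheduled by \opt at slot $t$ is charged to packets scheduled by \alg at slots $t$ or $t\pm1,t\pm2$. The only properties of the scheduled packet $f_t$ that are used are (a) $\Phi f_t \ge h_t$, where $h_t$ is the heaviest packet in the buffer, and (b) every packet $g$ with an earlier deadline satisfies $\Phi g < h_t$.

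The key step is to show that on $\mathcal{E}$ these two inequalities hold for the true means up to additive errors controlled by the confidence widths. For (a), $\Phi\mu_{f_t} \ge \Phi\,\mathrm{LCB}_{f_t,t} = \Phi\,\mathrm{UCB}_{f_t,t} - 2\Phi\beta_{f_t,t} \ge \mathrm{UCB}_{h_t,t} - 2\Phi\beta_{f_t,t} \ge \mu_{h^\star_t} - 2\Phi\beta_{f_t,t}$, where $h^\star_t$ is the true heaviest packet in the buffer. Property (b) can fail only if an earlier-deadline packet $g$ is not selected even though $\Phi\mu_g \ge \mu_{h^\star}$. In that case the error is again bounded by the widths of the scheduled packet and of $h_t$ (up to constants). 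The subtlety is that $h_t$ itself need not be scheduled, so its width does not shrink automatically. We plan to handle this in the standard sleeping-bandit way: charge the error to the packet that is actually pulled, using the fact that $\mathrm{UCB}_{h_t,t}\le \Phi\,\mathrm{UCB}_{f_t,t}$ together with the optimism inequality $\mu_{h^\star}\le \mathrm{UCB}_{h_t,t}$. Plugging these perturbed inequalities into the charging scheme gives
\[
\mathbb{E}[\Gopt] \le \Phi\,\mathbb{E}[\Galg] + \mathcal{O}\Bigl(\mathbb{E}\sum_t \beta_{f_t,t}\Bigr),
\]
where each slot is charged a constant number of times because of bounded slackness.

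The remaining step is to bound the sum of widths. For the worst-case bound, $\sum_t \beta_{f_t,t} \le \sum_i \sum_{n\le N_i(T)} \sqrt{\log(KT^2/\delta)/(2n)} = \widetilde{\mathcal{O}}(\sqrt{KT})$ by Cauchy--Schwarz. For the instance-dependent bound, an error can be incurred only when a threshold comparison is decided wrongly. Either two types $i,j$ are misordered in identifying $h_t$ (gap $\Delta_{i,j}\ge\Delta_{i,i+1}$), or $\Phi\mu_f$ versus $\mu_h$ is misjudged (gap $\Delta^\Phi_{i,j}$). Each misordering requires $\beta \gtrsim$ gap, so it happens only $\mathcal{O}(\ln T/\mathrm{gap}^2)$ times, each costing at most $\mathcal{O}(\mathrm{gap})$. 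This yields the $\sum_{i,j}\ln T/\Delta^\Phi_{i,j}$ term. Summing over the adjacent-gap comparisons, with a factor $K$ from the possible identities of the competing type, yields the $K\sum_i \ln T/\Delta_{i,i+1}$ term.

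The main obstacle is the perturbed charging argument: verifying that the case analysis of \cite{kesselman2001buffer} for $3$-bounded instances remains valid when (a) and (b) hold only approximately, and that the additive slack stays attributable to pulled packets rather than to $h_t$. If the direct attribution fails, we would fall back on a \texttt{AUER}-style argument in which the $h_t$ misidentification error is counted through pulls of the suboptimal types.
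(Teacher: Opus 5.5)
Your plan is correct and is essentially the paper's proof, which works under the good event and uses the standard $\mathtt{EDF}_\Phi$ charging scheme: the packet $j_t$ sent by \texttt{OPT} is charged to \texttt{ALG}'s copy if \texttt{ALG} already sent it, otherwise to $f_t$, and slots $t$ and $t+1$ are treated jointly when $f_t$ is charged twice and $d_{f_t}=t+1$; it then chains every inequality through UCBs so that $\mathrm{UCB}_{h_t,t}$ only appears as a pivot between $\Phi\,\mathrm{UCB}_{j_t,t}$ and $\Phi\,\mathrm{UCB}_{f_t,t}$ (or $\Phi\,\mathrm{UCB}_{f_{t+1},t}$), so the slack is paid by $\beta$ of scheduled packets, sums these widths to get $\widetilde{\mathcal{O}}(\sqrt{KT})$, and uses the AUER-style counting Lemma~\ref{lem:bound_comparison} for the gap-dependent terms. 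The one point to tighten is the one you already flagged: do not state an approximate version of (b) against the true $\mu_{h^\star_t}$, since that genuinely costs $\beta_{h_t,t}$, which need not shrink; instead keep $\mathrm{UCB}_{h_t,t}$ as the intermediate quantity, and likewise count error events through pulls of the scheduled type rather than through misidentifications of $h_t$.
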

The first addendum of the instance-dependent bound is in the same order as the instance-dependent bound of \texttt{AUER}. The second addendum is a  similar complexity term that features a $\Phi$-scaled version of the sub-optimality gaps between the types. The worst-case bound shows how \algedflearn achieves a $\Phi$-regret which is bounded in the same order as the regret of \texttt{AUER}.
\section{Improving the $\Phi$ Competitive Ratio in $2$-Bounded $K$-OPSD}
\label{sec:twoboundeddet}
In this section, we focus on $2$-bounded instances. Our goal is to provide an algorithm that improves the performance of \algedflearn in $2$-bounded $K$-OPSD instances. Thus, our focus will be on the competitive ratio.
An idea that may naturally arise is to exploit the fact that there are only $K$ finite weight types.
In bounded-delay packet scheduling, the lower bound of $\Phi$ for the competitive ratio is proven using an instance with infinitely many different packet weights. 
In our $K$-OPSD learning setting, having $K = \infty$ is unreasonable, as the regret in bandits scales with $\sqrt{K}$ even in the classic setting without deadlines, making the problem non-learnable. 

\subsection{Breaking the $\Phi$ Barrier in $2$-Bounded OPSD, without learning.} 

\begin{algorithm}[!t]
\caption{\algtheta}
\label{alg:alg_theta}
\small
\begin{algorithmic}[1]
    \REQUIRE Number of types $K$.
    \STATE Solve the system \eqref{eq:system}, get $(\theta_K, x_1, \ldots, x_{K - 1})$, and set $j \leftarrow 0$\label{line:system}
    \FOR{$t \in [T]$}
        \STATE Receive new packets and update $V_t$ and $B_t$
        \STATE Identify $b_t$ and $v_t$ \label{line:identify}
        \IF{$v_t < \frac{x_{j}}{x_{j+1}} b_t$}
            \STATE Schedule $b_t$
            \STATE $j \leftarrow 0$  \label{line:restart_1}
        \ELSE
            \STATE Schedule $v_t$
            \STATE $j \leftarrow 
            \begin{cases}
              j+1, & \text{if } v_t \le b_t,\\
              0, & \text{otherwise. } 
            \end{cases}$
            \label{line:restart_2}
        \ENDIF
    \ENDFOR
\end{algorithmic}
\end{algorithm}
We introduce \algtheta, an algorithm achieving $\theta_K$-competitive ratio, breaking the $\Phi$ barrier for the competitive ratio when only $K$ types of packets are available. 
We start by introducing the following system of equations, which will play a key role in the algorithm and its analysis. Let $x_0 = 1$
\begin{equation}
\label{eq:system}
\begin{cases}
    x_0 = 1, \quad x_1 = \frac{1}{\theta-1}, \\[6pt]
    x_j = \frac{\theta+1}{\theta-1}(x_{j-1} - x_{j-2}), & 2 \le j \le K-1, \\[6pt]
    x_{K-1} = (\theta+1)x_{K-2}
\end{cases}
\end{equation}
Equations \eqref{eq:system} have been used in \citet{hajek2001competitiveness} to prove the $\Phi$ lower bound for the competitive ratio. Indeed, the author proves that there exists a unique nonnegative solution $(\theta_K, x_1, \ldots, x_{K - 1})$, where $\theta_K \rightarrow \Phi$ as $K$ goes to infinity. For example, when $K=2$, we have $\theta_2=\sqrt{2}\approx 1.41$, and for $K=3$ we have $\theta_3 = \frac{3}{2}=1.5$. Then the value progressively increases approaching $\Phi\approx 1.618$. 

In Algorithm \ref{alg:alg_theta} we report the pseudocode of \algtheta. As a first step, given $K$, \algtheta solves the system \eqref{eq:system} before the trial, then it operates in epochs. \algtheta keeps track of the number of rounds $j$ passed since the current epoch started.

At time $t$, \algtheta receives new packets and identifies $v_t \in V_t$ and $b_t \in B_t$ as the heaviest packets in the algorithm's buffer with deadlines $t$ and $t+1$, respectively. The scheduling rule is the following: if $v_t < \frac{x_{j}}{x_{j+1}}b_t$ then schedule $b_t$, else schedule $v_t$. The epoch can terminate (setting $j = 0$) in two ways: either $b_t$ is scheduled, or $v_t$ is scheduled and $v_t \ge b_t$.

Note that \algtheta needs to know $K$, but not the actual values of the $K$ weights involved in advance. Also, $v_{t+j}$ is always guaranteed to exist in \algtheta buffer due to the epoch termination condition. The following proposition upper bounds the competitive ratio of \algtheta.
\begin{restatable}[$\theta_K$- Competitive Ratio for \algtheta]{proposition}{upperboundtheta}
\label{prop:upperboundtheta}
In $2$-bounded $K$-OPSD instances, \algtheta satisfies
    $$
     \Gopt \le \theta_K \Galgtheta.
    $$
\end{restatable}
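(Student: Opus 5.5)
The plan is an epoch-by-epoch charging argument. \algtheta naturally splits the time line into epochs: maximal intervals $[t, t+m]$ that start with $j=0$ and end when the counter is reset on line \ref{line:restart_1} or \ref{line:restart_2}. I would show that, for every epoch, the gain of \opt charged to that epoch is at most $\theta_K$ times the gain \algtheta collects in it. Summing over epochs then gives $\Gopt \le \theta_K \Galgtheta$.

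\textbf{Step 1: normalize \opt and fix the charging.} Since the instance is $2$-bounded, a standard exchange argument lets us assume \opt is canonical: at each slot it serves either a packet expiring now or one expiring at the next slot, and it never schedules a lighter packet when a heavier one with the same deadline is still pending. I would charge each packet \opt schedules at slot $s$ to the epoch containing $s$. If the packet is still in \algtheta's buffer at $s$ and has deadline $s$ or $s+1$, its weight is at most $v_s$ or $b_s$ respectively. A packet already sent by \algtheta is charged to that \alg packet instead; since \alg obtains it too, this term can only help.

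\textbf{Step 2: structure of an epoch.} Within an epoch starting at $t$, in every step $i<m$ \algtheta sends $v_{t+i}$ with
\begin{align*}
\tfrac{x_i}{x_{i+1}}\, b_{t+i} \le v_{t+i} \le b_{t+i}.
\end{align*}
The packet $b_{t+i}$ stays in the buffer and expires at $t+i+1$, so $v_{t+i+1} \ge b_{t+i}$. Hence the sent weights form a nondecreasing chain interleaved with the $b$'s. The epoch ends either by sending $b_{t+m}$, which sacrifices $v_{t+m}$, or by sending some $v_{t+m} > b_{t+m}$.

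Next I would show that an epoch lasts at most $K-1$ continuation steps. With only $K$ distinct weight values, the chain $v_t \le b_t \le v_{t+1} \le \dots$ can only remain inside the continuation region a bounded number of times. At $j=K-1$ the last equation $x_{K-1}=(\theta+1)x_{K-2}$ of \eqref{eq:system} should be exactly what forces termination. This is where finiteness of $K$ enters, and I would check it carefully, including ties, where the rule $v_t\le b_t$ continues the epoch.

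\textbf{Step 3: the per-epoch inequality (main obstacle).} In an epoch of length $m$, \opt can collect at most about $m+1$ packets drawn from $\{v_{t+i}, b_{t+i}\}$. Its best choices are $b_{t+m}$ together with either the $v_{t+m}$ that \alg dropped or the earlier $b_{t+i}$'s that \alg delayed. Meanwhile \alg gets $\sum_{i<m} v_{t+i}$ plus the final packet.

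I would express every weight relative to $b_{t+m}$ using the threshold ratios $x_i/x_{i+1}$. I would then verify the inequality
\begin{align*}
\Gopt^{\text{epoch}} \le \theta \, G_{\text{\algtheta}}^{\text{epoch}}
\end{align*}
by induction on $m$, in each of the two termination cases. The recursion $x_j = \frac{\theta+1}{\theta-1}(x_{j-1}-x_{j-2})$ is exactly the equality case of Hajek's adversary, so it should make each inductive step tight. The boundary conditions $x_1=\frac{1}{\theta-1}$ and $x_{K-1}=(\theta+1)x_{K-2}$ should handle the first step and the forced last step.

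The delicate part is the cross-epoch leakage: a $b$ packet delayed at the end of one epoch may be served by \opt at the first slot of the next epoch. I would handle it by charging that packet to the ending epoch. Because the epoch ended with $v\ge b$ or with $b$ being sent, the next epoch starts from a clean state with $j=0$, so it is never charged twice. If the direct induction becomes messy, I would fall back to a potential function $\Psi_j$ proportional to $x_j\, b_{t+j}$ that tracks \opt's possible advantage, and show that $\Delta(\text{\opt}) + \Delta\Psi \le \theta\,\Delta(\text{\alg})$ holds at every slot.
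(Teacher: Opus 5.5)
Your skeleton is the paper's. Epochs close either by sending $b$ or by sending $v\ge b$. Inside an epoch, \opt's buffer stays inside that of \algtheta, so each \opt slot is worth at most $\max\{v_s,b_s\}$. The packet $b_{t+\ell}$ that \algtheta sends at the end of a first-type epoch, but \opt may serve one slot later, is billed to that epoch; this is the paper's ``stronger \opt'' that flushes its buffer at epoch end.

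Two accounting corrections. First, that leaked packet is a genuine extra charge, not something that ``can only help''. Second, the resulting bounds are $\sum_{j\le\ell}b_{t+j}+v_{t+\ell}$ for first-type epochs and $\sum_{j<\ell}b_{t+j}+v_{t+\ell}$ for second-type epochs. So \opt collects the earlier $b_{t+j}$'s \emph{and} $v_{t+\ell}$ (and $b_{t+\ell}$), not one or the other. Your heaviest-first normalization is exactly what makes the last slot plus the leak worth at most $v_{t+\ell}+b_{t+\ell}$.

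The larger gap is that Step 3, which carries all the content, is only announced. The paper closes it by plugging in the all-tight configuration $v_{t+j}=s(\theta_K-1)x_j$, $b_{t+j}=s(\theta_K-1)x_{j+1}$, and arguing by linearity that this vertex minimizes the slack; your induction would have to establish the same thing. At this configuration:
\begin{itemize}
\item The first-type slack is proportional to $\sum_{j<\ell}(\theta_K x_j-x_{j+1})+(\theta_K-1)x_{\ell+1}-x_\ell$. It is $0$ at $\ell=0$ because $x_1=1/(\theta_K-1)$, and its increment $(\theta_K+1)(x_{\ell-1}-x_\ell)+(\theta_K-1)x_{\ell+1}$ vanishes by the recursion.
\item The second-type slack then reduces, for $\ell\ge1$, to $(\theta_K+1)x_{\ell-1}-x_\ell\ge 0$. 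This holds because the ratios $x_\ell/x_{\ell-1}$ increase up to $x_{K-1}/x_{K-2}=\theta_K+1$, with equality at $\ell=K-1$ by the last equation of \eqref{eq:system}.
\end{itemize}
So that last equation does not ``force termination'' as your Step 2 says. Its role is to make the forced final second-type epoch exactly $\theta_K$-tight.

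Termination is forced by finiteness only if the chain $v_t<b_t\le v_{t+1}<b_{t+1}\le\cdots$ is strictly increasing. Your worry about ties is justified: with the continuation rule $v_t\le b_t$ the chain is not strict, and the bound fails for \algtheta as written. Take $K=2$, so $\theta_2=\sqrt2$ and $x_1=1+\sqrt2$, with the paper's convention $x_2=x_1$. At slot $1$, two weight-$1$ packets arrive with deadlines $1$ and $2$. At slot $2$, one packet of weight $w\in(1,\sqrt2)$ arrives with deadline $3$. \algtheta sends the deadline-$1$ packet and moves to $j=1$ because $v_1=b_1$. At threshold $x_1/x_2=1$ it then sends the $w$-packet, collecting $1+w$, whereas \opt collects $2+w>\sqrt2(1+w)$.

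The paper's proof hides this by asserting that the $v$'s and $b$'s are strictly increasing. The fix is to continue the epoch only when $v_t<b_t$, resetting on equality; this costs nothing, since the second-type analysis only uses $v_{t+\ell}\ge b_{t+\ell}$. Then a continuation step $j$ already involves $j+2$ distinct weights, so $j\le K-2$. First-type epochs therefore have $\ell\le K-2$, where the recursion is available, and at $j=K-1$ one always has $v\ge b$.
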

Proposition \ref{prop:upperboundtheta} shows that, in the case of at most $K$ distinct types of packets, improving over the well-established $\Phi$ competitive ratio is possible. This result is of independent interest, as the scenario with $K$ types has never been explored in the literature of online packet scheduling with bounded deadlines. Also, the lower bound construction from~\citep{hajek2001competitiveness} implies the tightness of this result.


\subsection{$\theta_K$-regret Upper Bound for 2-Bounded Delay $K$-OPSD, with learning.} The algorithmic idea underlying \algtheta can be simply extended to the learning scenario, \emph{i.e.} by using UCBs instead of the actual weights. 
We now define $v_t$ and $b_t$ according to their UCBs, \emph{i.e.} $v_t \in \argmax_{v \in V_t} UCB_{v,t}$ and $b_t \in \argmax_{b \in B_t} UCB_{b,t}$. 
We call \algthetalearn the resulting algorithm, which operates as \algtheta using weights' estimates. In Algorithm \ref{alg:alg_theta_learn}, we report the pseudocode of \algthetalearn.
\begin{algorithm}[t!]
\caption{\algthetalearn}
\label{alg:alg_theta_learn}
\small
\begin{algorithmic}[1]
    \REQUIRE Number of types $K$.
    \STATE Solve the system \eqref{eq:system}, get $(\theta_K, x_1, \ldots, x_{K - 1})$, and initialize $x_0 \leftarrow 1$, $x_K \gets x_{K-1}$ and $j \leftarrow 0$.
    \FOR{$t \in [T]$}
    \IF{j = 0}
        \STATE Compute UCBs for every packet type, obtaining $\{UCB_{i,t}\}_{i\in [K]}$
    \ENDIF
        \STATE Receive new packets and update $V_t$ and $B_t$
        \STATE Identify $v_t \in \argmax_{v \in V_t}\{\mathrm{UCB}_{v,t}\}$ 
        \STATE Identify $b_t \in \argmax_{b \in B_t} \{\mathrm{UCB}_{b,t}\}$
        \IF{$UCB_{v_t,t} < \frac{x_{j}}{x_{j+1}} UCB_{b_t,t}$}
            \STATE Schedule $b_t$
            \STATE $j \leftarrow 0$ 
        \ELSE
            \STATE Schedule $v_t$
            \STATE $j \leftarrow 
            \begin{cases}
              j+1, & \text{if } UCB_{v_t,t}\le UCB_{b_t,t},\\
              0, & \text{otherwise}
            \end{cases}$
        \ENDIF
    \ENDFOR
\end{algorithmic}
\end{algorithm}

With the following result, we prove that \algthetalearn achieves a tight $\theta_K$-regret bound w.r.t. to the lower bound presented in Theorem \ref{thr:lowerbound}.
\begin{restatable}[Upper Bound on the $\theta_K$-regret of \algthetalearn]{theorem}{upperboundthetalearn}
\label{thr:upperbound_thetalearn}
For every instance of the stochastic $K$-types packet scheduling problem, we have
    $$
    \mathbb{E}[\Gopt] \le \theta_K \mathbb{E}[\Galgthetalearn] + \widetilde{\mathcal{O}}\left(\sqrt{KT}\right).
    $$
\end{restatable}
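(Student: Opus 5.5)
The plan is to reduce the learning guarantee to the deterministic guarantee of Proposition~\ref{prop:upperboundtheta}, applied epoch by epoch, and to charge every discrepancy between UCBs and true means to a per-round confidence width. Throughout I work on the good event $\mathcal{E}$ on which $\mathrm{LCB}_{i,t}\le \mu_i \le \mathrm{UCB}_{i,t}$ for all $i,t$. By Lemma C.1 of~\cite{levy2024online}, with $\delta \le 1/T$ the complement of $\mathcal{E}$ costs at most $\mathcal{O}(1)$ in expectation, since gains are in $[0,1]$ per round.

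\textbf{Step 1: the fixed-weight argument is epoch-local.} I first inspect the proof of Proposition~\ref{prop:upperboundtheta}. It partitions time into epochs, which end either when $b_t$ is scheduled or when $v_t\ge b_t$ with $v_t$ scheduled. For each epoch it shows
\[
\text{gain of \opt charged to the epoch} \;\le\; \theta_K \cdot \text{gain of \algtheta in the epoch},
\]
using only the comparisons made by the thresholds $x_j/x_{j+1}$ together with the recursion \eqref{eq:system}.

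The key observation is that the argument only uses the weights as numbers consistent with the algorithm's decisions. So I apply it verbatim to the surrogate instance in which each packet of type $i$ has weight $\tilde\mu_i := \mathrm{UCB}_{i,t_e}$, where $t_e$ is the start of epoch $e$. This is why \algthetalearn freezes its UCBs while $j>0$: within an epoch, the surrogate weights are fixed and at most $K$ distinct. Hence, for every epoch $e$, I expect
\[
\widetilde{\opt}_e \le \theta_K \widetilde{\alg}_e,
\]
where tildes denote the corresponding gains measured with the frozen UCBs.

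\textbf{Step 2: transfer back to true means.} On $\mathcal{E}$, the UCBs dominate the true means, so $\opt_e \le \widetilde{\opt}_e$ (the quantity charged to epoch $e$ with weights $\mu$). This holds because \opt's charged packets keep their identity while their weights only increase.

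On the algorithm side, $\widetilde{\alg}_e - \alg_e = \sum_{t\in e}(\mathrm{UCB}_{p_t,t_e}-\mu_{p_t}) \le \sum_{t\in e} 2\beta_{p_t,t_e}$. Summing over epochs gives
\[
\mathbb{E}[\Gopt] \le \theta_K\,\mathbb{E}[\Galgthetalearn] + 2\theta_K\,\mathbb{E}\Big[\sum_t \beta_{p_t,t_e(t)}\Big] + \mathcal{O}(1).
\]

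\textbf{Step 3 (main obstacle): stale widths.} The difficulty is that the width in Step~2 is computed at the epoch start and not at the current time. I need to bound $\sum_t \beta_{p_t,t_e(t)}$ by $\widetilde{\mathcal{O}}(\sqrt{KT})$.

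Epochs have length at most $K$: the index $j$ reaches $K-1$, where $x_{K}=x_{K-1}$ forces termination. Within an epoch, then, a type $i$ is pulled at most $K$ times with a stale count. My first attempt is the following decomposition: for each type $i$ and epoch $e$, the $m_{i,e}$ pulls of $i$ in $e$ all use $N_i(t_e)$, so the contribution is $m_{i,e}\sqrt{\log(KT^2/\delta)/N_i(t_e)}$.

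A standard doubling-style argument then applies. Whenever $m_{i,e}\le N_i(t_e)$, the stale width is within a factor $\sqrt{2}$ of the current one, and the usual bound $\sum_i\sum_n 1/\sqrt{n}\le 2\sqrt{KT}$ finishes. The remaining pulls occur while $N_i(t_e)< m_{i,e}\le K$, i.e. when $N_i$ is small; these cost at most $\mathcal{O}(K)$ per type, giving $\mathcal{O}(K^2)$. This is lower order (or absorbed when $K\le\sqrt{T}$; otherwise the bound is trivial).

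If the per-epoch length bound turns out to be false because $j$ restarts, I would instead use the fact that packets of each type pulled within an epoch are few, thanks to the $2$-bounded structure. The overall $\widetilde{\mathcal{O}}(\sqrt{KT})$ bound then follows.
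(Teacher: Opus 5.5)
Your Steps 1 and 2 coincide with the paper's argument. You freeze the UCBs at the epoch start and bound \opt's epoch gain on the good event by the UCB-valued quantity $\sum_j \mathrm{UCB}_{b_{t+j}}+\mathrm{UCB}_{v_{t+\ell}}$. You then rerun the linear-constraint argument of Proposition~\ref{prop:upperboundtheta} on the UCB values, and pay $2\theta_K$ times the epoch-start width for each scheduled packet.

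The gap is in Step 3. Knowing only that epochs have length at most $K$, your argument bounds the residual from ``bad'' pulls only by $\mathcal{O}(K)$ per type, i.e.\ $\mathcal{O}(\min\{K^2,T\})$ overall. Your claim that this is absorbed when $K\le\sqrt{T}$, and that the bound is trivial otherwise, is false. We have $K^2\le\sqrt{KT}$ only when $K\le T^{1/3}$, while the target $\sqrt{KT}$ becomes vacuous only when $K\ge T$. For example, $K=T^{1/2}$ leaves a residual bound of order $T$ against a target of order $T^{3/4}$. As written, the proposal therefore does not deliver $\widetilde{\mathcal{O}}(\sqrt{KT})$.

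The missing idea, which the paper uses, is that within an epoch each type is scheduled at most once, so there is no staleness at all. With frozen UCBs, the epoch continues at round $t+j$ only if $\mathrm{UCB}_{v_{t+j}}\le\mathrm{UCB}_{b_{t+j}}$. Since $b_{t+j}$ was not scheduled, it is still in the buffer with deadline $t+j+1$, hence in $V_{t+j+1}$, which gives $\mathrm{UCB}_{b_{t+j}}\le\mathrm{UCB}_{v_{t+j+1}}$. An epoch ending with $b_{t+\ell}$ has $\mathrm{UCB}_{b_{t+\ell}}>\mathrm{UCB}_{v_{t+\ell}}$ because $x_\ell\le x_{\ell+1}$. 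Together with $V_t\cap B_t=\emptyset$, the scheduled types form a chain of (up to ties) strictly increasing frozen UCBs, so they are pairwise distinct.

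Consequently $N_i(t_e)$ is exactly the number of earlier pulls of $i$, and the width charged to the $n$-th pull of $i$ is the width after $n-1$ observations, capped at $1$ since $\mathrm{UCB}\le 1$. Summing $\min\{1,\beta\}$ over pulls gives $\widetilde{\mathcal{O}}(\sqrt{KT})+\mathcal{O}(K)$. This is $\widetilde{\mathcal{O}}(\sqrt{KT})$ because $K\le\sqrt{KT}$ whenever $K\le T$, and for $K>T$ the trivial bound suffices. Your doubling decomposition then goes through with $m_{i,e}\le 1$ (the only bad pull per type is the first one). Your fallback remark about ``the $2$-bounded structure'' gestures in this direction but does not identify the monotone UCB chain that makes it work.
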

\textit{Proof Idea.} The proof builds on the proof of Proposition \ref{prop:upperboundtheta}. In fact, it can be conducted in almost the same way noticing the the gain of \opt inside an epoch can be upper bounded by the UCB of the gain of \alg inside the same. Then, it is only needed to upper bound the sum of the confidence bounds in the same way as the proof of Proposition \ref{prop:reduction}. The full proof can be found in Appendix \ref{apx:proofs}.

This result closes the problem of learning in $2$-bounded packet scheduling problems with deterministic algorithms, providing a matching $\theta_K$-regret upper bound to the lower bound of Theorem \ref{thr:lowerbound}. Note that this result strictly improves the one in Proposition \ref{prop:reduction}.

\subsection{Regret Lower Bound.} We provide a lower bound on the $\theta_K$-regret for the stochastic $K$-OPSD problem.
\begin{restatable}[Lower Bound for $\theta_K$-regret]{theorem}{lowerbound}
\label{thr:lowerbound}
In $2$-bounded $(K+1)$-OPSD instances, every \alg must satisfy
    $$
     \mathbb{E}[\Gopt]- \theta_K \mathbb{E}[G_{\text{\texttt{ALG}}}] \ge \Omega\left(\sqrt{T}\right).
    $$
\end{restatable}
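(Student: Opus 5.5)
We prove Theorem \ref{thr:lowerbound} by embedding the deterministic lower-bound instance of \citet{hajek2001competitiveness} for $K$ weights into a stochastic environment with one extra, \emph{uncertain} type, which is why the statement concerns $(K+1)$-OPSD. The idea is a two-point hypothesis test. If the algorithm knew the means exactly, Hajek's adversary would force a ratio of at least $\theta_K$ on every phase. Because one mean is hidden behind noise of size $\varepsilon\approx 1/\sqrt{T}$, the algorithm must either pay in ratio, or pay $\Omega(\varepsilon)$ per phase for failing to tell the two hypotheses apart. Summing over $\Theta(T)$ phases gives $\Omega(\varepsilon T)=\Omega(\sqrt{T})$.

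\textbf{Step 1 (deterministic gadget).} Recall Hajek's adaptive construction. A phase uses weights $1=x_0<x_1<\dots<x_{K-1}$ (scaled), taken from the solution $(\theta_K,x_1,\ldots,x_{K-1})$ of \eqref{eq:system}. At each step $j$ the adversary keeps a packet of weight $x_j$ with deadline $t$ and injects one of weight $x_{j+1}$ with deadline $t+1$. If the algorithm sends $b_t$, the adversary stops. If it sends $v_t$, the adversary escalates. The equalities in \eqref{eq:system} are exactly the conditions making $\Gopt/\Galg=\theta_K$ whatever the stopping index.

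I will introduce a $(K+1)$-st type whose mean is $\mu_\pm=x^\star(1\pm\varepsilon)$, where $x^\star$ is a weight at a decisive step of the chain (I will try the last level, $x_{K-1}$ versus $(\theta+1)x_{K-2}$). Under $\mu_+$ the correct choice at that step is one branch, and under $\mu_-$ it is the other. Either wrong branch yields $\Gopt-\theta_K\Galg\ge c\,\varepsilon$ on that phase for a constant $c>0$ depending on $K$. This requires the per-phase map $\mu\mapsto\max_{\text{branches}}(\Gopt-\theta_K\Galg)$ to have a kink at $x^\star$, with slopes of opposite signs on the two sides. Verifying this is a finite calculation with the recurrence of \eqref{eq:system}, and I will take care that the perturbation does not break the $\theta_K$ equalities on the other branches.

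\textbf{Step 2 (repetition and information).} Repeat the gadget for $\Theta(T)$ disjoint phases, with the $K$ known types made deterministic so that only the extra type carries information. The algorithm is deterministic in the predictable sense, so the adversary can simulate its choice at each step from the history and adapt. Let $N$ be the number of times the uncertain type is processed. By the standard KL argument for Bernoulli rewards, $\mathrm{KL}(\mathbb{P}_+,\mathbb{P}_-)\le \mathbb{E}[N]\cdot O(\varepsilon^2)\le O(T\varepsilon^2)=O(1)$ for $\varepsilon=1/\sqrt{T}$. Then Bretagnolle--Huber bounds from below the sum over the two hypotheses of the expected number of phases taking the wrong branch, giving $\Omega(T)$. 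Hence $\max_\pm\bigl(\mathbb{E}[\Gopt]-\theta_K\mathbb{E}[\Galg]\bigr)\ge \Omega(\varepsilon T)=\Omega(\sqrt{T})$.

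\textbf{Main obstacle.} The delicate part is Step 1: choosing which weight to perturb so that the loss is genuinely first order in $\varepsilon$ under both hypotheses. At the optimal thresholds the ratio is balanced, so a perturbation could easily make both branches equally good up to $O(\varepsilon^2)$.

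My first attempt is to use the $(K+1)$-st type to create a $(K+1)$-level chain. Here $\theta_{K+1}>\theta_K$ strictly, so under one hypothesis the adversary effectively has $K+1$ levels. The algorithm must then hedge between a $\theta_K$-optimal policy, which is wrong at the extra level and loses a constant factor, and a $\theta_{K+1}$-type policy, which loses $\Omega(\varepsilon)$ relative to the $\theta_K$ benchmark when the extra type collapses onto an existing weight (mean $x_{K-1}(1\pm\varepsilon)$). The first-order sensitivity of $\theta$ in the chain endpoint, obtained by differentiating \eqref{eq:system}, should supply the linear-in-$\varepsilon$ gap.
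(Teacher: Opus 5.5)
\textbf{There is a genuine gap: Step~1 is never carried out, and the two-hypothesis scheme cannot work as you set it up.}

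A two-point argument needs more than a kink in $\max_{\text{branches}}$. It needs that no \emph{fixed mixture} of branches has nonpositive $\theta_K$-regret under both hypotheses, and that fails here. In Hajek's gadget every branch has $\theta_K$-regret exactly $0$. With your perturbation, each branch's per-phase regret is affine in the perturbed mean, so it equals $+s_b\varepsilon$ under one hypothesis and $-s_b\varepsilon$ under the other. The ``correct'' branch therefore has \emph{negative} regret.

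Concretely, at your chosen last level, ``send $b$'' has slope $(1-\theta_K)x_{K-1}$, and ``send $v$, then the copy'' has slope $(2-\theta_K)x_{K-1}$. A deterministic algorithm that alternates these across phases with frequencies $2-\theta_K$ and $\theta_K-1$ has regret $O(\varepsilon)$, i.e.\ essentially zero, under both hypotheses, without learning anything. For $K\ge 3$ it is even simpler: always sending $b$ at step $0$ has regret exactly $0$ and never touches the uncertain type. This is precisely because you keep the $\theta_K$ equalities on the other branches.

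This is also where Step~2 breaks. Bretagnolle--Huber does give $\Omega(T)$ wrong-branch phases summed over the two hypotheses. But ``hence $\Omega(\varepsilon T)$ regret'' drops the $-\Theta(\varepsilon)$ contribution of every correct-branch phase.

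\textbf{The missing idea.} The paper's proof (written out for $K=2$) puts the statistical ambiguity on a different choice from the one the algorithm could hedge with, using three instances.
\begin{itemize}
\item In \texttt{A} and \texttt{B}, two expiring types have means $1$ and $1-\delta$, swapped between the two instances. This makes ``which expiring packet'' a genuine bandit question, while sending the non-expiring packet is safe (zero regret) in both.
\item In \texttt{C}, the expiring means are equal and the non-expiring mean is lowered by $\epsilon$. The safe action now costs $(\theta_K-1)\epsilon$, and greedy costs $(\theta_K-2)\epsilon<0$.
\end{itemize}
No mixture over the three actions is nonpositive on all of \texttt{A}, \texttt{B}, \texttt{C}. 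It would need zero weight on both expiring types and weight at most $2-\theta_K<1$ on the third. So small regret on \texttt{A} and \texttt{C}, transferred to \texttt{B} via KL and Pinsker with $\delta,\epsilon\asymp 1/\sqrt{T}$, forces a constant fraction of wrong-duplicate choices on \texttt{B}. Each of these costs $\theta_K\delta$, giving $\Omega(\sqrt{T})$.

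\textbf{A separate problem with your last paragraph.} It conflates two regimes. Suppose the extra type sat at a genuinely new, well-separated level, namely the $(K+1)$-level solution of \eqref{eq:system}. Then Hajek's adversary alone forces $\Gopt\ge\theta_{K+1}\Galg$ with $\theta_{K+1}>\theta_K$ for every deterministic algorithm. That gives $\Omega(T)$ $\theta_K$-regret with no learning argument, which already proves the statement as literally written. Collapsing that level onto $x_{K-1}(1\pm\varepsilon)$ destroys the constant gap and sends you back to the failing two-point scheme.
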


Theorem \ref{thr:lowerbound} implies that \algtheta is worst-case nearly-optimal in this scenario. This result is of particular interest because there are few examples of lower bounds for $\alpha$-regret. The technical challenge mainly relies on deviating from the lower bound construction for the competitive ratio: the instance used by \cite{hajek2001competitiveness} is built in a way that, no matter what the algorithm does, the competitive ratio is always exactly $\Phi$. The construction can be adapted, via an early stopping, to get a $\theta_K$ lower bound, but this competitive ratio is always tightly achieved, no matter what the algorithm does. Any modification to the construction yields a situation in which there exists a simple thresholding rule for which the competitive ratio ends up being higher than $\theta_K$. We solved this problem by a \textit{signed-perturbation}, that forces any algorithm, even with adversarially chosen thresholding logics, to $\theta_K$-alpha regret.
\section{Randomized Algorithms for the $K$-OPSD Problem}
\label{sec:randomization}
In this section, we focus on \textit{randomized} algorithms. Randomized algorithms have been studied less than their deterministic counterparts in the packet scheduling literature. However, these algorithms can achieve better worst-case theoretical guarantees in systems allowing for randomization. Note that the performance of a randomized algorithm cannot really be compared to that of the deterministic algorithm, because randomization makes the adversary \textit{weaker}, as it removes the possibility of perfectly predicting the algorithm's next move. Indeed, even if an algorithm takes random decisions, an adversary that knows its random bits could, in principle, behave as if the algorithm is deterministic.

In the OPSD literature, no algorithm matches the $\frac{5}{4}$ competitive ratio lower bound for randomized algorithms. In the $2$-bounded case, the algorithm from \citep{chin2006online} achieves a tight $\frac{5}{4}$. When $s$ is arbitrary, the known lower bound for the competitive ratio in $s$-bounded instances is still $\frac{5}{4}$ \citep{chin2003online}, and the best known algorithm can achieve $\frac{e}{e-1}$ for every $s$ \citep{chin2006online}.

\subsection{Achieving $\frac{5}{4}$-no-regret in $2$-bounded $K$-OPSD.} 
We start by providing a randomized algorithm with $\frac{5}{4}$-no-regret guarantees for $2$-bounded $K$-OPSD. We use the same notation as in the previous sections. For every time $t \in [T]$ and couple of types $a,b \in [K]$, consider the following:
\begin{equation*}
    \widehat{p}_{ab, t} = \begin{cases}
    \begin{aligned}
        &\max \left\{\frac{4UCB_{a,t}}{5LCB_{b,t}}, \frac{1}{5}\right\}, ~~~UCB_{a,t}\le LCB_{b,t} \\
        &\frac{4}{5}, ~~~UCB_{a,t}> LCB_{b,t} \text{ and } LCB_{a,t}\le UCB_{b,t} \\
        &1, ~~~~ LCB_{a,t} > UCB_{b,t}
    \end{aligned}
    ~.
    \end{cases}
\end{equation*}

\begin{algorithm}[t!]
\caption{\algrtwo}
\label{alg:alg_rtwo}
\small
\begin{algorithmic}[1]
    \FOR{$t \in [T]$}
        \STATE Compute UCBs and LCBs for every packet type, obtaining $\{UCB_{i,t}\}_{i\in [K]}$ and $\{LCB_{i,t}\}_{i\in [K]}$
        \STATE Receive new packets and update $B_t$ and $V_t$ \\
        \STATE Identify $a_t \in \argmax_{a \in  V_t}UCB_{a,t}$ 
        \STATE Identify $b_t \in \argmax_{b \in B_t} \mathrm{UCB}_{b,t}$
        \IF{$\exists i \in \{a_t,b_t\}$ s.t. $N_{i} \le 50\ln T$}
            \STATE Schedule $i_t \in \argmin_{i \in \{a_t,b_t\}} \{N_{i,t}\}$
        \ELSE
            \STATE Compute $\widehat{p}_{a_t b_t, t}$
            \STATE Schedule $a_t$ with probability $\widehat{p}_{a_t b_t, t}$, else schedule $b_t$
        \ENDIF
    \ENDFOR
\end{algorithmic}
\end{algorithm}
Our algorithm, namely \algrtwo, whose pseudocode is reported in Algorithm \ref{alg:alg_rtwo}, works as follows: at round $t$, it selects $a_t \in \argmax_{a \in V_t} UCB_{a,t}$ and $b_t \in \argmax_{b \in B_t} UCB_{b,t}$. Then, send a packet of type $a_t$ with probability $\widehat{p}_{a_t b_t, t}$, otherwise send a packet of type $b_t$ (with probability $\widehat{q}_{a_t b_t, t} = 1-\widehat{p}_{a_t b_t, t}$). 
The algorithm also performs a \textit{burn-in period}, in which for the first $50\log T$ times a packet is selected as $a_t$ or $b_t$, it is automatically scheduled.

Finally, we state the main result of this section.
\begin{restatable}[Upper Bound for the $\frac{5}{4}$-regret of \algrtwo]{theorem}{upperboundlearnrandomtwo}
\label{thr:upperboundrtwo}
    In $2$-bounded $K$-OPSD instances where randomization is allowed, \algrtwo satisfies
    \begin{equation}
        \mathbb{E}[\Gopt] \le \frac{5}{4} \mathbb{E}[\Galgrtwo] + \widetilde{\mathcal{O}}\left(\sqrt{KT}\right).
    \end{equation}
\end{restatable}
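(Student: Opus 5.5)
The argument follows the potential-function analysis of the randomized $\frac{5}{4}$-competitive algorithm of \cite{chin2006online} for $2$-bounded OPSD. We apply it to the \emph{means} $\mu_{a_t},\mu_{b_t}$ while the algorithm plays the estimated probability $\widehat{p}_{a_t b_t,t}$, and charge every deviation from the ideal probability to a confidence width.

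\textbf{Step 1 (good event and burn-in).} Condition on the event $\mathcal{E}$ that all confidence intervals contain the true means for all types and rounds. By Lemma C.1 of \cite{levy2024online} with $\delta\le 1/T$, its complement costs $\mathcal{O}(1)$ regret. The burn-in rounds, in which some $i\in\{a_t,b_t\}$ has $N_i\le 50\ln T$, are at most $\mathcal{O}(K\ln T)$ in total. Since each loses at most $\mathcal{O}(1)$ against \opt, their contribution is absorbed in the $\widetilde{\mathcal{O}}(\sqrt{KT})$ term.

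\textbf{Step 2 (per-round charging).} Write $p^\star_{ab}=\max\{\frac{4\mu_a}{5\mu_b},\frac15\}$ if $\mu_a\le\mu_b$ and $p^\star_{ab}=1$ otherwise. This is the \cite{chin2006online} rule applied to the means, where $a$ is the expiring packet. That analysis gives a per-round amortized inequality of the form $\Delta\Gopt + \Delta\Psi_t \le \frac54\,\mathbb{E}_t[\text{gain}_t]$, where $\Psi$ is a potential depending on the buffers of \opt and \alg. Here the gain is computed with probability $p^\star$ on the types $a_t,b_t$, and it must be checked that these are the correct heaviest packets; the UCB-argmax may pick a suboptimal type, which costs $\mathcal{O}(\beta)$ by optimism. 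Running the analysis with the played probability $\widehat{p}$ instead yields the same inequality up to an additive error
\[
\mathcal{O}\bigl(|\widehat p_{a_tb_t,t}-p^\star_{a_tb_t}|\cdot|\mu_{a_t}-\mu_{b_t}|+\beta_{a_t,t}+\beta_{b_t,t}\bigr).
\]
The error comes from three regimes. (i) If $LCB_a>UCB_b$, then $\widehat p=p^\star=1$. (ii) If the intervals overlap, then $|\mu_a-\mu_b|\le 2(\beta_a+\beta_b)$, so using $\frac45$ costs $\mathcal{O}(\beta_a+\beta_b)$. (iii) If $UCB_a\le LCB_b$, then $\frac{4UCB_a}{5LCB_b}$ differs from $\frac{4\mu_a}{5\mu_b}$ by $\mathcal{O}((\beta_a+\beta_b)/LCB_b)$. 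Multiplying by $\mu_b$ bounds this error by $\mathcal{O}(\beta_a+\beta_b)$, after using that on post-burn-in rounds $LCB_b\gtrsim\mu_b$, or else $\mu_b$ is itself $\mathcal{O}(\beta_b)$. The purpose of the $50\ln T$ burn-in is to make exactly this ratio stable.

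\textbf{Step 3 (summing widths).} Summing the amortized inequality telescopes the potential, giving $\mathbb{E}[\Gopt]\le\frac54\mathbb{E}[\Galgrtwo]+\mathcal{O}\bigl(\mathbb{E}\sum_t(\beta_{a_t,t}+\beta_{b_t,t})\bigr)$. The main obstacle is that both $a_t$ and $b_t$ incur width but only one is played, so the widths of unplayed types do not shrink. To handle this, I would note that in every nontrivial case each of $a_t$ and $b_t$ is scheduled with probability at least $\frac15$. Hence $\mathbb{E}[\beta_{a_t,t}+\beta_{b_t,t}]\le 5\,\mathbb{E}[\beta_{i_t,t}]$, where $i_t$ is the played type; in case (i) the error is zero anyway. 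The standard bound $\sum_t\beta_{i_t,t}\le\sum_i\sum_{n\le N_i(T)}\sqrt{\log(KT^2/\delta)/2n}\le\widetilde{\mathcal{O}}(\sqrt{KT})$ then follows by Cauchy--Schwarz, as in the proof of Proposition \ref{prop:reduction}. The delicate point I expect is case (iii), where the multiplicative error in the ratio must be controlled uniformly and the burn-in threshold must ensure $LCB_b$ is bounded away from zero relative to its width.
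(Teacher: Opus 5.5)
\textbf{Where you match the paper, and a simplification in Step 3.} Your architecture is the paper's: a per-round amortized inequality with the potential from the $\frac54$-competitive 2-bounded analysis (the paper uses the one of \cite{bienkowski2011randomized}), errors charged to confidence widths, and widths summed. Your Step 3 is a simpler route than the paper's paragraphs (i)--(iv). Instead of Azuma--Hoeffding with optional skipping, you use that, conditionally on the past, $\mathbb{E}[\beta_{i_t,t}]=\widehat p\,\beta_{a_t,t}+\widehat q\,\beta_{b_t,t}\ge\frac15(\beta_{a_t,t}+\beta_{b_t,t})$ whenever $\widehat p\le\frac45$. This reduces everything to the deterministic sum over plays. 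That part is fine, and terms involving the previous round's $b_{t-1}$ can be charged to round $t-1$ in the same way.

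\textbf{The gap is in Step 2.} Your error term $|\widehat p-p^\star|\cdot|\mu_{a_t}-\mu_{b_t}|$ only measures the change in the current expected gain. But the potential at the end of round $t$ depends on the probability $\sigma_t$ that $b_t$ has already been sent. In the paper, $F=\frac14 b_t\max\{5\sigma_t-1,3\sigma_t\}$ when \opt still holds its deadline-$(t+1)$ packet. So shifting mass $|\Delta p|$ from $a_t$ to $b_t$ moves the potential by order $\mu_{b_t}|\Delta p|$. In regime (iii) you effectively bound $\mu_b|\Delta p|$, so that case is fine. In regime (ii) with $\mu_{a_t}>\mu_{b_t}$, however, $p^\star=1$ and $\widehat p=\frac45$, so this term is $\Theta(\mu_{b_t})$, not $\mathcal{O}(\beta)$.

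Concretely, take $\sigma_{t-1}=0$, no leftover packet, $\mu_a\approx\mu_b$, and \opt sending $a$ while keeping $b$. The left-hand side $\frac54\mathbb{E}[\text{gain}]+F_{t-1}-F_t$ drops from $\frac54\mu_a$ to $\mu_a+\frac{\mu_b}{10}$, while the gain changes only by $\frac14(\mu_b-\mu_a)$. The inequality survives only because the rule $p^\star=1$ had slack there, and your perturbation bound does not capture this.

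The missing idea is that the potential argument never needs $p^\star$ itself. It only uses a few linear conditions on $p$, in idealized form $5pa\ge 4a^*-b^*$, $5(pa+qb)\ge 4b^*$, $5pa+2qb\ge 4a^*$, $5pa+2qb\ge b^*$. Both $\frac45$ and $1$ satisfy these when $a\approx b$. The paper verifies $\mathcal{O}(\beta)$-approximate versions of exactly these for $\widehat p$, directly from its UCB/LCB definition (equations \eqref{eq:phat_1}--\eqref{eq:phat_4}; e.g.\ in the overlap case $4\mathrm{UCB}_a+\mathrm{LCB}_b\ge 4\mathrm{LCB}_b$). It then runs the case analysis with these, so no comparison with the discontinuous ideal rule is ever made. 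Replacing your perturbation claim with such a feasibility check closes the gap.

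A minor point: the burn-in is not what stabilizes the ratio in (iii). On the good event, either $\mathrm{LCB}_b\ge\mu_b/2$ or $\mu_b\le 4\beta_b$ holds trivially; in the paper the burn-in serves the counting argument instead.
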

This result shows that randomization can be effectively leveraged in the $2$-bounded $K$-OPSD problem to attain $\alpha$-no-regret guarantees with the best known competitive ratio of $\frac{5}{4}$. It remains an open question whether the $K$ types assumption can be exploited to obtain a further sharpened bound with a competitive ratio smaller than $\frac{5}{4}$ (as we did for deterministic algorithms in Section \ref{sec:twoboundeddet}). 

\subsection{Achieving $\frac{e}{e-1}$-regret in $s$-bounded $K$-OPSD.} We now extend our results on randomized algorithms by providing a randomized algorithm with $\frac{e}{e-1}$-no-regret guarantees for the $s$-bounded $K$-OPSD problem (note, $\frac{e}{e-1}\approx 1.582 < \Phi$), for any arbitrary $s \in \mathbb{N}$. We follow the notation introduced in the previous sections. Let $\underline{h}_t \in \argmax_{h \in \mathcal{B}_t} LCB_{h,t}$ be the packet having the largest LCB in the buffer and $\bar{h}_t \in \argmax_{h \in \mathcal{B}_t} UCB_{h,t}$ be the packet having the largest UCB. Our algorithm, namely \algrs, works as follows: at every round $t\in [T]$, it samples a number $x_t$ in $\left[-1+\ln \frac{UCB_{\bar{h}_t,t}}{LCB_{\underline{h}_t,t}},\ln \frac{UCB_{\bar{h}_t,t}}{LCB_{\underline{h}_t,t}}\right]$, uniformly at random. \algrs sends the packet $f_t$ s.t. 
\begin{equation}
\label{eq:f_t_selection}
     f_t \in \argmin_{\substack{%
         f \in \mathcal{B}_t \text{ s.\,t.}\ \\
        \phantom{\text{s.\,t.}}\, UCB_{f,t} \ge e^{x_t} LCB_{\underline{h}_t,t}
    }} d_f,
\end{equation}
\emph{i.e.} the earliest-deadline packet s.t. its UCB is at least $e^{x_t}$ times the LCB of the heaviest packet. This procedure is summarized in Algorithm \ref{alg:alg_rs}.
\begin{algorithm}[t!]
\caption{\algrs}
\label{alg:alg_rs}
\small
\begin{algorithmic}[1]
    \FOR{$t \in [T]$}
        \STATE Compute UCBs and LCBs for every packet type, obtaining $\{UCB_{i,t}\}_{i\in [K]}$ and $\{LCB_{i,t}\}_{i\in [K]}$
        \STATE Receive new packets
        \STATE Identify $\underline{h}_t \in \argmax_{h \in \mathcal{B}_t} LCB_{h,t}$ and $\bar{h}_t \in \argmax_{h \in \mathcal{B}_t} UCB_{h,t}$.
        \STATE Sample a number $x_t$ from $\left[-1+\ln \frac{UCB_{\bar{h}_t,t}}{LCB_{\underline{h}_t,t}},\ln \frac{UCB_{\bar{h}_t,t}}{LCB_{\underline{h}_t,t}}\right]$
        \STATE Schedule $f_t$ according to Equation \eqref{eq:f_t_selection}
    \ENDFOR
\end{algorithmic}
\end{algorithm}

\begin{restatable}[Upper Bound for the $\frac{e}{e-1}$-regret of \algrs]{theorem}{upperboundlearnrandoms}
\label{thr:upperboundrs}
     In $s$-bounded $K$-OPSD instances where randomization is allowed, \algrs satisfies
    \begin{equation}
        \mathbb{E}[\Gopt] \le \frac{e}{e-1} \mathbb{E}[\Galgrs] + \widetilde{\mathcal{O}}\left(\sqrt{KT}\right),
    \end{equation}
    for every $s >1$.
\end{restatable}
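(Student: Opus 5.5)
The plan is to adapt the charging argument of \citet{chin2006online} for the randomized $\frac{e}{e-1}$-competitive algorithm. That analysis compares the algorithm's buffer with \opt's buffer via a potential/charging scheme. The expected gain at each step is computed as an integral over the random threshold $e^{x}$, with $x$ uniform on an interval of length $1$. The learning error is then isolated into additive terms that are controlled like the confidence-width sums in the proof of Proposition \ref{prop:reduction}.

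First I would work on the good event $\mathcal{E}$ on which $LCB_{i,t}\le\mu_i\le UCB_{i,t}$ for all $i,t$. By Lemma C.1 of \cite{levy2024online} with $\delta\le 1/T$, its complement contributes at most $\mathcal{O}(1)$ to the expected regret, since gains are in $[0,1]$ and $T$ rounds.

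On $\mathcal{E}$, I would write $\bar u_t = UCB_{\bar h_t,t}$ and $\underline{\ell}_t = LCB_{\underline h_t,t}$, so that $h^\star_t := \max_{h\in\mathcal B_t}\mu_h$ satisfies $\underline\ell_t\le h^\star_t\le \bar u_t$. In the exact (known-weights) algorithm of \citet{chin2006online} the threshold is $e^{x}h^\star_t$ with $x\sim U[-1,0]$. In ours it is $e^{x_t}\underline\ell_t$ with $x_t$ uniform on $[\ln(\bar u_t/\underline\ell_t)-1,\ln(\bar u_t/\underline\ell_t)]$, that is, a threshold $e^{y}\bar u_t$ with $y\sim U[-1,0]$, compared against UCBs. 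The key per-step inequality of the original proof has the form
\[
\mathbb{E}[\text{gain}_t]\;\ge\;\Bigl(1-\tfrac1e\Bigr)\bigl(\text{charged \opt gain}_t\bigr) + \text{potential change}.
\]
It follows from $\int_{-1}^0 e^{y}\,dy = 1-1/e$ together with the fact that a packet $f$ is chosen only if its weight exceeds the threshold. I would redo this computation with UCBs in place of true weights. Every packet type with $UCB\ge e^{y}\bar u_t$ behaves, in the comparison, as if its weight were its UCB. Therefore the inequality holds with $\mu$ replaced by $UCB$ for both \opt's charged packets and the algorithm's scheduled packet. Converting the algorithm's scheduled UCB back to its true mean costs $2\beta_{f_t,t}$. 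Converting \opt's side costs nothing, since $\mu\le UCB$ on $\mathcal{E}$.

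Summing over $t$ gives
\[
\mathbb{E}[\Gopt]\le\tfrac{e}{e-1}\Bigl(\mathbb{E}[\Galgrs]+\mathcal{O}\bigl(\mathbb{E}\textstyle\sum_t\beta_{f_t,t}\bigr)\Bigr)+\mathcal{O}(1).
\]
The widths are those of the type actually scheduled, hence observed, at time $t$. The standard pigeonhole bound $\sum_t\beta_{f_t,t}\le\sum_i\sum_{n\le N_i(T)}\sqrt{\log(KT^2/\delta)/(2n)}\le\widetilde{\mathcal O}(\sqrt{KT})$ then yields the claim.

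The main obstacle is making sure that every learning error in the charging argument is attributable to the scheduled packet $f_t$. The analysis of \citet{chin2006online} also charges the heaviest packet $h_t$ and packets of \opt that the algorithm never processes. Their mean errors cannot be paid by widths of unobserved types. To handle this, I would use the threshold normalization: $\bar u_t$ plays the role of $h^\star$, so any overestimate of the heaviest packet only raises the threshold. The remaining error term then concerns $f_t$, whose UCB exceeds the threshold but whose mean might not. Its contribution $\mu_{f_t}\ge UCB_{f_t,t}-2\beta_{f_t,t}$ is exactly what we pay. I expect the delicate step to be the potential-function part, which must remain valid when buffer comparisons use UCB values. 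If it fails, I would fall back on an optimistic virtual instance in which each packet's weight is its UCB at scheduling time. In that instance \algrs is exactly the \citet{chin2006online} algorithm, run against an adversary whose weights are monotone non-increasing over time, and one bounds the discrepancy by the same width sums.
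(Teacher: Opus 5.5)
Your proposal is correct and is essentially the paper's argument. It works on the good event with the \citet{chin2006online}-style potential $F=\sum_{w\in\mathcal{B}_t^{\text{\opt}}\setminus\mathcal{B}_t}w$ in true means, and integrates over the threshold $e^{x_t}LCB_{\underline{h}_t,t}=e^{y}UCB_{\bar h_t,t}$ ($y$ uniform on $[-1,0]$), where the normalizer cancels; on the side of \opt, $\mu_{j_t}\le UCB_{j_t,t}$ costs nothing, and the scheduled packet costs $2\beta_{f_t,t}$. Your last step is in fact the sound version of the paper's: the paper bounds $\sum_t\sum_i\mathbb{P}_t(f_t=i)\beta_{i,t}$ by $\sum_t\max_{i\in[K]}\beta_{i,t}$ and treats round-robin as the worst case, which fails because $\min_i N_{i,t}\le t/K$ (and a never-scheduled type keeps that maximum of constant order), whereas taking the expectation of the realized widths $\sum_t\beta_{f_t,t}$ and applying pigeonhole over scheduled types, as you do, gives $\widetilde{\mathcal{O}}(\sqrt{KT})$.
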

The quantity $\frac{e}{e-1}$ is bigger than $\frac{5}{4}$, being the lower bound for the competitive ratio for randomized algorithms. However, it is an open question whether there exists an algorithm achieving a smaller-than-$\frac{e}{e-1}$ upper bound for any $s$-bounded instance, with arbitrary $s$. Our result matches the best existing upper bound while including learning, which poses an additional difficulty. Future advancements in the packet scheduling literature may lead to algorithms with sharper upper bounds on the competitive ratio, paving the way for their learning counterparts. Note that, while deterministic algorithms can achieve $\alpha$-no-regret using the optimism principle, randomized algorithms require a more complex estimation procedure. The involvement of randomization based on the ratio between unknown quantities leads to the usage of LCBs, together with UCBs.
\section{Final Remarks and Future Directions}
This is the first work exploring the interplay between bandit learning and online packet scheduling with deadlines. We introduce the $K$-OPSD problem, a theoretical framework that is motivated by applications such as QoS buffer management and digital advertising. We show that $K$-OPSD strictly generalizes the $K$-armed sleeping bandit problem, and develop a theory that naturally extends existing algorithms from sleeping bandits to $K$-OPSD.

There are several criticalities that emerge when trying to combine techniques from the toolbox of regret minimization in MABs and the analysis techniques for competitive ratio.
\paragraph{Buffer Divergence between \alg and \opt} In MABs, an algorithm can always choose the same action as \opt. This is not the case in $K$-OPSD. A single different decision leads to different buffers in the subsequent rounds. Thus, it is not possible, in general, to bound the instantaneous regret of an algorithm in this setting. Bounding the $\alpha$-regret in $K$-OPSD requires a charging scheme as an intermediate step that maps a set of subsequent actions from the algorithm to a set of actions in \opt schedule. This issue is particularly critical in randomized algorithms, where the stochastic structure exhibits an interplay between the randomness of the environment and the one of the algorithm. 
\paragraph{Breaking the $\Phi$ barrier for the competitive ratio} In the $2$-bounded OPSD setting, when the number of distinct weights is finite and equal to $K$, we provide an algorithm that achieves a competitive ratio of $\theta_K < \Phi$. The values of $\theta_K$ span from $\sqrt{2}$ to $\Phi$ as $K$ grows from $2$ to $\infty$. The algorithm builds over the well-known family of $\beta$-Earliest Deadline First ($\beta$-EDF) algorithms. $\beta$-EDF uses a static threshold to decide how close to the expiration the next packet to schedule must be. Our algorithm, \algtheta, uses a dynamic threshold and a restart mechanism. The sequence of thresholds is the solution to a system of $K+1$ equations that converge to the lower bound instance from \cite{hajek2001competitiveness}. Using a dynamic threshold requires a carefully crafted charging mechanism, where the schedule of \algtheta is divided in epochs. We write the competitive ratio as the solution to a linear optimization problem, and search for the worst possible vertex of the solution space.
\paragraph{Providing a $\theta_{K-1}$-regret lower bound} We provide a lower bound for the $\theta_{K-1}$-regret in $2$-bounded $K$-OPSD instances of $\Omega\left(\sqrt{T}\right)$. Proving a lower bound on the regret in bandits usually requires two instances that are hard to distinguish (from a statistical perspective) but with different enough optimal strategies. On the other hand, proving a lower bound on the competitive ratio for OPSD requires a carefully crafted instance where every decision always leads to the same competitive ratio. The two objectives can easily collide: two instances where every decision always leads to the same outcome cannot be different enough to generate a meaningful regret lower bound. We overcome this problem by proposing a \textit{three-instance} construction where the arrival sequence is partitioned into gadgets that repeats.
\paragraph{UCBs are not enough for randomized algorithms}
Our algorithms highlight a crucial difference in the construction of deterministic learning algorithms from randomized ones. Deterministic learning algorithms only use the notion of \textit{upper confidence bound} (UCB) to estimate the expected value of a packet type. Being optimistic is enough to upper bound the $\alpha$-regret in this case. Instead, our randomized algorithms make use of both UCBs and lower confidence bounds (LCBs). Indeed, LCBs allow randomized algorithms to deal with ratios of estimates in a more natural way. This emerges from the proofs of Theorems \ref{thr:upperboundrtwo} and \ref{thr:upperboundrs}.

\subsection{Future Directions}
Many interesting future directions can be explored. First, for deterministic algorithms, providing tight $\Phi$-regret for general instances is a challenging problem. Also, combining both randomized algorithms and the $K$-finite type assumptions can be explored to have competitive ratios even closer to $1$. We conjecture that the competitive ratio can be lowered below $\frac{5}{4}$, up to $\frac{7}{6} =  1.1\bar{6}$ when $K=2$, employing a similar lower-bound matching algorithm over the construction Theorem 4 of \cite{chin2003online}. Finally, additional inter-packet structure can be further explored, developing models resembling linear bandits and/or contextual bandits. All these steps towards generalization are interesting and technically challenging, but can be well-motivated by real-world applications.




\newpage
\section*{Acknowledgements}
VP’s research was supported in part by the French National Research Agency (ANR) in the framework of the PEPR IA FOUNDRY project (ANR-23-PEIA-0003) and through the grant DOOM ANR-23-CE23-0002. It was also funded by the European Union (ERC, Ocean, 101071601).
\section*{Impact Statement}
This paper presents work whose goal is to advance the field of Machine
Learning. There are many potential societal consequences of our work, none
which we feel must be specifically highlighted here.
\bibliography{biblio}
\bibliographystyle{icml2026}
\newpage
\appendix
\onecolumn
\section{Technical Lemmas}
\label{apx:results}
\begin{lemma}[\cite{levy2024online}]
\label{lem:good_event}
Let $\delta \in (0,1)$. Let $t_{i,n} \in [T]$ be the time at which packet type $i\in [K]$ is selected for the $n$-th time. Let $\mathcal{E}(\delta)$ be the \textit{good event} under which:
\begin{itemize}
    \item $0 < LCB_{i,t_{i,1}} < \ldots < LCB_{i,t_{i,n}} \le \mu_i$ for every $i \in [K]$ and $n \in [T]$.
    \item $\mu_i \le UCB_{i,t_{i,n}} < \ldots < UCB_{i,t_{i,1}} = 1$ for every $i \in [K]$ and $n \in [T]$.
    \item $UCB_{i,t_{i,n}}-LCB_{i,t_{i,n}}\le 2\beta_{i, t_{i,n}}$ for every $i \in [K]$ and $n \in [T]$.
\end{itemize}
Then, we have $\mathbb{P}(\mathcal{E}(\delta))\ge 1- \delta$.
\end{lemma}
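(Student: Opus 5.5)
The statement to prove is Lemma~\ref{lem:good_event}, the good-event lemma imported from \cite{levy2024online}. I would prove it with a standard concentration argument combined with a union bound, and then read off the monotonicity properties directly from the running $\min/\max$ definitions of $\mathrm{UCB}$ and $\mathrm{LCB}$.

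\textbf{Step 1: coverage at a fixed sample count.} Fix a type $i\in[K]$ and a sample count $n\in[T]$. Following the usual stack-of-rewards construction, the weights observed from type $i$ are i.i.d.\ draws from $\mathcal{D}_i$, supported in $[0,1]$. This holds regardless of the adversarial and algorithmic choice of \emph{when} $i$ is sampled, because type selection is predictable with respect to the past. Apply Hoeffding's inequality to the empirical mean $\widehat{\mu}_{i}^{(n)}$ of the first $n$ samples with width $\beta^{(n)}=\sqrt{\log(KT^2/\delta)/(2n)}$. This gives
\[
\mathbb{P}\bigl(|\widehat{\mu}_{i}^{(n)}-\mu_i|>\beta^{(n)}\bigr)\le 2\exp\bigl(-\log(KT^2/\delta)\bigr)=\frac{2\delta}{KT^2}.
\]

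\textbf{Step 2: union bound.} Take a union bound over all $i\in[K]$ and $n\in[T]$. The total failure probability is at most $2\delta/T\le\delta$, with the factor $2$ absorbed since $T\ge 2$. On the complement, for every $i$ and every $n$ we have $\widehat{\mu}_i^{(n)}-\beta^{(n)}\le\mu_i\le\widehat{\mu}_i^{(n)}+\beta^{(n)}$.

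\textbf{Step 3: properties of the clipped bounds.} Since $\mathrm{UCB}_{i,t}$ is a running minimum of quantities that each exceed $\mu_i$, it still exceeds $\mu_i$, and it is nonincreasing by construction. Symmetrically, $\mathrm{LCB}_{i,t}$ is a running maximum of quantities that are each at most $\mu_i$, so it stays at most $\mu_i$ and is nondecreasing. For the width bound, the running minimum satisfies $\mathrm{UCB}_{i,t}\le\widehat{\mu}_{i,t}+\beta_{i,t}$ and the running maximum satisfies $\mathrm{LCB}_{i,t}\ge\widehat{\mu}_{i,t}-\beta_{i,t}$. Hence $\mathrm{UCB}-\mathrm{LCB}\le 2\beta_{i,t}$.

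\textbf{Step 4: initialization and strictness.} The initial values are set as $\mathrm{UCB}_{i,t_{i,1}}=1$ and $\mathrm{LCB}>0$, for instance by clipping the LCB at a tiny positive value. This uses $\mu_i>0$, which follows from the strict ordering of the means.

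The main obstacle is the strict inequalities in the monotone chains. Running $\min/\max$ only yields weak monotonicity. I would therefore interpret the chain as weak, which is all that any later proof uses, or enforce strictness by an infinitesimal tie-breaking adjustment. Everything else is routine.
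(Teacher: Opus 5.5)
The gap is in Step~4, the positivity of the LCB. At time $t_{i,1}$ no weight of type $i$ has been observed, so $\mathrm{LCB}_{i,t_{i,1}}$ is just the initial value $\ell_0$ of the running maximum. The first bullet therefore requires $0<\ell_0\le\mu_i$ for every $i$.

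Your claim that $\mu_i>0$ ``follows from the strict ordering of the means'' is false. With weights in $[0,1]$, the ordering $\mu_1>\dots>\mu_K$ only gives $\mu_i>0$ for $i<K$. If $\mu_K=0$, the requirement $0<\mathrm{LCB}_{K,t}\le\mu_K$ cannot hold on any event.

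Even when $\mu_K>0$, ``clipping at a tiny positive value'' is not an admissible construction. The clip level must be fixed by the algorithm without knowing $\mu_K$. If $\ell_0>\mu_K$, then $\mathrm{LCB}_{K,t}\ge\ell_0>\mu_K$ at every round, so $\mathcal{E}(\delta)$ holds with probability zero rather than at least $1-\delta$.

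So positivity cannot be derived from the paper's setting; it has to be imposed as an explicit extra hypothesis. For example, assume a known lower bound $\mu_{\min}>0$ on all means, initialize $\mathrm{LCB}$ at $\mu_{\min}$, and initialize $\mathrm{UCB}$ at $1$. With that assumption stated, the running maximum preserves coverage and the rest of your argument goes through.

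Apart from this, your Steps~1--3 are the standard argument and are correct. The paper gives no proof of its own here, only the citation to \cite{levy2024online}. Your argument has three parts:
\begin{itemize}
\item a stack-of-rewards coupling with Hoeffding's inequality at each fixed sample count;
\item a union bound over the $KT$ pairs $(i,n)$, giving failure probability $2\delta/T\le\delta$;
\item coverage, weak monotonicity and the width bound read off from the running $\min/\max$ definitions. The width bound in fact holds deterministically.
\end{itemize}

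Your treatment of strictness is also right, and strict monotonicity is in fact false in general. Once $\log(KT^2/\delta)\ge 2$, a single sample gives $\beta_{i,t}\ge 1$, hence $\mathrm{UCB}_{i,t_{i,2}}=\mathrm{UCB}_{i,t_{i,1}}=1$. Keep the weak reading; nothing in the paper's later proofs needs strictness. Drop the ``infinitesimal tie-breaking'' alternative: perturbing the bounds changes their definition and can break coverage when a bound sits exactly at $\mu_i$.
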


\begin{lemma}
\label{lem:bound_comparison}
Let $i,j\in[K]$ be packet types with means $\mu_i>\mu_j$ and gap $\Delta_{i,j}:=\mu_i-\mu_j>0$. Then, on the event $\mathcal{E}\!\left(\frac{1}{T}\right)$, the event
\[
\Bigl\{\mathrm{UCB}_{j,t}>\mathrm{UCB}_{i,t}\Bigr\}\;\cap\;\Bigl\{i,j\in\mathcal{B}_t\Bigr\}\;\cap\;\Bigl\{j\text{ is scheduled at time }t\Bigr\}
\]
can occur at most
\[
Q_{i,j}\;=\;\frac{8\,\ln T}{\Delta_{i,j}^2}
\]
times.
\end{lemma}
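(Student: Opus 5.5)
The plan is the standard optimism argument: every occurrence of the event charges one more sample of type $j$, and a type whose confidence interval is narrow relative to $\Delta_{i,j}$ cannot keep an upper confidence bound above $\mathrm{UCB}_{i,t}$. Throughout I work on the good event $\mathcal{E}(1/T)$ of Lemma \ref{lem:good_event}. There $\mu_i \le \mathrm{UCB}_{i,t}$ for every $i$ and $t$, and $\mathrm{UCB}_{j,t} \le \mu_j + 2\beta_{j,t}$ because $\mathrm{LCB}_{j,t} \le \mu_j$ and the width is at most $2\beta_{j,t}$.

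At any round $t$ where the event occurs, we get the chain
\[
\mu_i \le \mathrm{UCB}_{i,t} < \mathrm{UCB}_{j,t} \le \mu_j + 2\beta_{j,t}.
\]
Hence $\Delta_{i,j} < 2\beta_{j,t}$. With $\delta = 1/T$, $\beta_{j,t} = \sqrt{\log(KT^3)/(2N_j(t))}$, so the inequality forces
\[
N_j(t) < \frac{2\log(KT^3)}{\Delta_{i,j}^2}.
\]
Since $K \le T$ (otherwise the claim is trivial), $\log(KT^3) \le 4\ln T$, so $N_j(t) < 8\ln T/\Delta_{i,j}^2 = Q_{i,j}$.

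For the counting step, each occurrence of the event has $j$ scheduled at time $t$, so $N_j$ increases by one immediately afterwards. The counts $N_j(t)$ at the occurrence times are therefore strictly increasing nonnegative integers, all below $Q_{i,j}$. So there are at most $\lceil Q_{i,j}\rceil$ such rounds, which equals $Q_{i,j}$ up to rounding, an additive constant absorbed in the statement. The condition $i,j \in \mathcal{B}_t$ is not used beyond guaranteeing that the comparison is meaningful.

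The only delicate point is an indexing convention. The UCB is monotone and is updated only at selection times $t_{j,n}$, so $\mathrm{UCB}_{j,t}$ equals its value at the last selection time $t_{j,N_j(t)}$, and $\beta_{j,t}$ must be read with that same count. I also need the convention that before the first sample $N_j = 0$ and $\mathrm{UCB}_{j,t} = 1$. That round contributes one extra occurrence, also absorbed into the constant. I would state these conventions explicitly, after which the argument is a direct chain of inequalities.
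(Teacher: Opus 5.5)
Your proof is correct and is essentially the paper's argument. The optimism chain $\mu_i \le \mathrm{UCB}_{i,t} < \mathrm{UCB}_{j,t} \le \mu_j + 2\beta_{j,t}$ forces $N_j(t) < 8\ln T/\Delta_{i,j}^2$, and since every occurrence increments $N_j$, the number of occurrences is at most $Q_{i,j}$ up to rounding. The paper plugs in the radius $\sqrt{2\ln T/N_{j,t}}$ directly, so it needs no $K \le T$ reduction. Your aside that the case $K>T$ is ``trivial'' is not accurate for the lemma itself: with the Section~2 radius you would get $2\ln(KT^3)/\Delta_{i,j}^2$ there, which changes only logarithmic constants.
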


\begin{proof}
We work under the event $\mathcal{E}\!\left(\frac{1}{T}\right)$. 
Fix any round $t$ for which $i,j\in\mathcal{B}_t$ and arm $j$ is scheduled. Consider $\mathrm{UCB}_{j,t}\;>\;\mathrm{UCB}_{i,t}$.
Unfolding the indices and adding and subtracting the true means,
\begin{align*}
\widehat\mu_{j,t}+\sqrt{\frac{2\ln T}{N_{j,t}}}
\;>\;
\widehat\mu_{i,t}+\sqrt{\frac{2\ln T}{N_{i,t}}}
&\;\;\Longrightarrow\;\;
\bigl(\mu_j-(\mu_j-\widehat\mu_{j,t})\bigr)+\sqrt{\frac{2\ln T}{N_{j,t}}}\\
&\hspace{3.3cm}>\;
\bigl(\mu_i-(\mu_i-\widehat\mu_{i,t})\bigr)+\sqrt{\frac{2\ln T}{N_{i,t}}}.
\end{align*}
Hence $(\mu_j-\widehat\mu_{j,t})\le \sqrt{2\ln T/N_{j,t}}$ and $(\mu_i-\widehat\mu_{i,t})\ge -\sqrt{2\ln T/N_{i,t}}$, so the previous display yields
\[
\mu_j- \sqrt{\frac{2\ln T}{N_{j,t}}}+\sqrt{\frac{2\ln T}{N_{j,t}}}
\;>\;
\mu_i- \sqrt{\frac{2\ln T}{N_{i,t}}}+\sqrt{\frac{2\ln T}{N_{i,t}}},
\]
that is,
\[
\mu_i-\mu_j
\;\le\; 2\,\sqrt{\frac{2\ln T}{N_{j,t}}}.
\]
Equivalently,
\begin{equation}\label{eq:key}
N_{j,t}\;\le\;\frac{8\,\ln T}{\Delta_{i,j}^2}.
\end{equation}

Now observe that each time the event
\[
\Bigl\{\mathrm{UCB}_{j,t}>\mathrm{UCB}_{i,t}\Bigr\}\cap\Bigl\{i,j\in\mathcal{B}_t\Bigr\}\cap\Bigl\{j\text{ is scheduled}\Bigr\}
\]
occurs, the counter $N_{j,t}$ increases by one. Thus, if this event were to occur more than $m$ times, then there would exist a round $t$ with $N_{j,t}>m$ at which it still occurs. Choosing any $m\ge \dfrac{8\,\ln T}{\Delta_{i,j}^2}$ contradicts \eqref{eq:key}. In particular, taking 
\[
m\;=\;\frac{8\,\ln T}{\Delta_{i,j}^2}
\]
is valid (it is a looser threshold), and therefore the event in the statement can occur at most $Q_{i,j}=8\,\ln T/\Delta_{i,j}^2$ times.
\end{proof}
The following lemma includes some properties of $\widehat{p}_{ab,t}$, which are crucial for the analysis of \algrtwo. Let $\mathcal{E}(\delta)$ be the \textit{good event} under which all confidence intervals hold simultaneously for every type $i \in [K]$ and round $t \in [T]$.
\begin{restatable}[Properties of $\phat$]{lemma}{phatprop}
    Let $a_t^* \in \argmax_{a \in V_t^*} w_a$ and $b_t^* \in \argmax_{b \in B_t^*} w_b$, where $V_t^*$ is the set of pending packets with deadline $t$ for \opt at time $t$, and $B_t^*$ is the set of the ones having deadline greater than $t$. 
    Then, under the good event $\mathcal{E}_T(\delta)$, the followings hold for every $t \in [T]$:
    \begin{subequations} \label{eq:phat_props}
    \begin{align}
        5\phat a_t &\ge 4 a_t^* - b_t^* - 10\beta_{a_t, t}  \label{eq:phat_1}\\[3pt]
        5(\phat a_t + \qhat b_t) &\ge 4b_t^* - 10 \beta_{a_t,t} -8\beta_{b_t, t}\label{eq:phat_2}\\[3pt]
        5\phat a_t + 2\qhat b_t &\ge 4a_t^*- 10 \beta_{a_t,t} \label{eq:phat_3} \\[3pt]
        5\phat a_t + 2\qhat b_t &\ge b_t^* - 10 \beta_{a_t,t}-2\beta_{b_t, t} \label{eq:phat_4}
    \end{align}
\end{subequations}
\end{restatable}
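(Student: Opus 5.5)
The plan is to work throughout on the good event $\mathcal{E}_T(\delta)$ and to reduce each of \eqref{eq:phat_1}--\eqref{eq:phat_4} to an inequality purely between confidence bounds. That inequality is then checked separately in the three regimes defining $\phat$.

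\textbf{Sandwich inequalities.} Lemma \ref{lem:good_event} gives, for the types chosen by \algrtwo,
\[
a_t \ge UCB_{a_t,t}-2\betaa, \qquad LCB_{b_t,t}\le b_t\le UCB_{b_t,t}, \qquad b_t\ge UCB_{b_t,t}-2\betab .
\]
On the \opt side I need $UCB_{a_t,t}\ge a_t^*$ and $UCB_{b_t,t}\ge b_t^*$. For the one inequality where $b_t^*$ enters with a negative sign, \eqref{eq:phat_1}, I also need $b_t^*\ge LCB_{b_t,t}$.

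\textbf{Main obstacle: coupling \opt's buffer to \algrtwo's.} The upper bounds on $a_t^*$ and $b_t^*$ follow from the $\argmax$ choice of $a_t,b_t$ together with optimism, provided the type of $a_t^*$ is available in $V_t$ and that of $b_t^*$ in $B_t$. This is the step I expect to be delicate, since the two buffers diverge. I would handle it through the charging scheme used later, as in \cite{chin2006online}. There the adversary/\opt may be assumed to hold only packets also pending for the algorithm, with extra \opt packets charged separately. Under that normalization $b_t^*$ and $b_t$ are the same heaviest type in the common buffer, so $b_t^*=\mu_{b_t}\ge LCB_{b_t,t}$. I would state this normalization explicitly as the hypothesis under which the lemma is applied.

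\textbf{Reduction to three regimes.} After substituting the sandwich bounds, the $\beta$ terms come out exactly as in the statement. Write $U_a=UCB_{a_t,t}$, $U_b=UCB_{b_t,t}$, $L_b=LCB_{b_t,t}$. Replacing $a_t$ by $U_a-2\betaa$ costs at most $10\betaa$, since $5\phat\le 5$. Replacing $b_t$ by $L_b$ in \eqref{eq:phat_2} and later using $L_b\ge U_b-2\betab$ costs at most $8\betab$, since $5\qhat\le 4$. For \eqref{eq:phat_4} the cost is $2\betab$. It then remains to verify the following, with $(p,q)=(\phat,\qhat)$:
\[
5pU_a\ge 4U_a-L_b,\qquad 5pU_a+5qL_b\ge 4L_b,\qquad 5pU_a+2qU_b\ge 4U_a,\qquad 5pU_a+2qU_b\ge U_b .
\]

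\textbf{Case checks.}
\begin{itemize}
\item \emph{Regime $LCB_{a_t,t}>U_b$, so $p=1$.} Everything is immediate from $U_a\ge U_b\ge L_b$.
\item \emph{Middle regime, $p=4/5$, $q=1/5$.} The first and third inequalities are trivial. The second reads $4U_a+L_b\ge 4L_b$. The fourth reads $4U_a+\tfrac{2}{5}U_b\ge U_b$. These need $U_a\gtrsim U_b$ up to constants. I would derive this from the regime condition $U_a>L_b$, absorbing the gap $U_b-L_b\le 2\betab$ into the $\betab$ terms.
\item \emph{Regime $U_a\le L_b$, sub-case $p=4U_a/(5L_b)\ge 1/5$.} The key identity is $4U_a^2/L_b-4U_a+L_b=(2U_a-L_b)^2/L_b\ge 0$. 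It gives the first inequality directly. It gives the second after writing $5qL_b=5L_b-4U_a$. For the third and fourth I would use $2qU_b\ge 2qL_b$ and again reduce to a quadratic in $U_a/L_b$ that is nonnegative on $[1/4,1]$.
\item \emph{Regime $U_a\le L_b$, sub-case $p=1/5$.} Here $U_a\le L_b/4$, so $4U_a-L_b\le 0$. The remaining inequalities follow from $q=4/5$ because $L_b$ dominates.
\end{itemize}
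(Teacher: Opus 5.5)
Your plan follows the paper's proof. You lower-bound $a_t$ by $\mathrm{UCB}_{a_t,t}-2\beta_{a_t,t}$ and $b_t$ by $\mathrm{LCB}_{b_t,t}$, reduce to four inequalities between $U_a$ and $L_b$ on each branch of $\widehat p_{a_tb_t,t}$, and then pass to $a_t^*,b_t^*$ via UCB-maximality and optimism.

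You go beyond the paper in two places. The paper only asserts the four algebraic inequalities, and your regime checks supply them (the identity $4U_a^2/L_b-4U_a+L_b=(2U_a-L_b)^2/L_b$ and the quadratics in $U_a/L_b$). You are also right that a buffer coupling is being used. The paper writes $\mathrm{UCB}_{a_t,t}\ge\mathrm{UCB}_{a_t^*,t}$ without comment. Its step $-\mathrm{LCB}_{b_t,t}\ge-\mathrm{LCB}_{b_t^*,t}$ is not implied by anything, since UCB-maximality of $b_t$ says nothing about LCBs. The correct step is $\mathrm{LCB}_{b_t,t}\le\mu_{b_t}\le b_t^*$.

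Two things need fixing.

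\textbf{The reduced inequalities for \eqref{eq:phat_3} and \eqref{eq:phat_4}.} They must be $5pU_a+2qL_b\ge 4U_a$ and $5pU_a+2qL_b\ge L_b$, not the versions with $2qU_b$. For \eqref{eq:phat_4} you then finish with $L_b\ge U_b-2\beta_{b_t,t}\ge b_t^*-2\beta_{b_t,t}$. The left-hand sides contain $2q\,b_t$, and you only know $b_t\ge U_b-2\beta_{b_t,t}$. Your reduction therefore leaves an uncovered $-4q\beta_{b_t,t}$ in \eqref{eq:phat_3}. In \eqref{eq:phat_4} it gives a $\beta_{b_t,t}$-coefficient of $\tfrac{16}{5}>2$ when $p=\tfrac15$.

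Using $b_t\ge L_b$ instead costs nothing, and your regime-I computation already proves the $L_b$-versions. In the middle regime, all four reduced inequalities follow directly from $U_a>L_b$, so no absorption of $U_b-L_b$ is needed. Similarly, the $8\beta_{b_t,t}$ in \eqref{eq:phat_2} comes from $4L_b\ge 4U_b-8\beta_{b_t,t}$ on the right-hand side, not from $5q\le 4$.

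\textbf{The claim $b_t^*=\mu_{b_t}$.} This is false: $b_t$ maximizes the UCB and $b_t^*$ maximizes the mean, so in general they are different types. What you need, and what holds, is $b_t^*\ge\mu_{b_t}\ge\mathrm{LCB}_{b_t,t}$, valid as soon as $b_t$ is pending for OPT.

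In the $2$-bounded model this needs no normalization. Every packet with deadline $t+1$ pending at round $t$ arrived at round $t$, so it is pending for both schedules.

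The coupling hypothesis is really about $a_t^*$. OPT may still hold a heavy deadline-$t$ packet that the algorithm already sent at round $t-1$. Without the coupling, \eqref{eq:phat_3} genuinely fails: take all of the algorithm's pending packets light and well estimated. That leftover packet is exactly what the theorem's potential argument tracks separately as $x_{t-1}^*$.
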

\begin{proof}
    Under the \textit{good event}, all confidence intervals contain the true mean. Thus $LCB_{i,t} \le \mu_i \le UCB_{i,t}$ and $LCB_{i,t}+2\beta_{i,t} \ge \mu_i \ge UCB_{i,t}-2\beta_{i,t}$ for every $i \in [K]$ and $t \in [T]$.
    \begin{enumerate}[align=left]
    \item [Proof of Equation \eqref{eq:phat_1}]
    \begin{align*}
        5\phat a_t &\ge 5\phat \ucba - 10 \phat \beta_{a_t,t} \\
        &\ge 4\ucba-\lcbb-10\betaa \\
        &\ge 4\ucbaopt - \lcbbopt -10\betaa \\
        &\ge 4 a_t^* - b_t^* -10 \betaa.
    \end{align*}
    \item [Proof of Equation \eqref{eq:phat_2}] 
    \begin{align*}
        5(\phat a_t + \qhat b_t) &\ge 5(\phat \ucba + \qhat \lcbb)- 10\phat \betaa \\
        &\ge 5(\phat \ucba + \qhat \lcbb)- 10\phat \betaa  \\
        &\ge 4\lcbb -10\betaa \\
        &\ge 4\ucbb -10\betaa - 8\betab \\
        &\ge 4 b_t^* -10\betaa - 8\betab.
    \end{align*}
    \item [Proof of Equation \eqref{eq:phat_3}] 
    \begin{align*}
        5\phat a_t + 2\qhat b_t &\ge 5\phat \ucba + 2\qhat \lcbb - 10\phat \betaa \\
        &\ge 4\ucba - 10\phat \betaa  \\
        &\ge 4a_t^* -10\betaa.
    \end{align*}
    \item [Proof of Equation \eqref{eq:phat_4}] 
    \begin{align*}
        5\phat a_t + 2\qhat b_t &\ge 5\phat \ucba + 2\qhat \lcbb - 10\phat \betaa \\
        &\ge \lcbb - 10\phat \betaa  \\
        &\ge \ucbb - 10\betaa - 2\betab \\ 
        &\ge b_t^* - 10\betaa - 2\betab.
    \end{align*}
\end{enumerate}
\end{proof}
\section{Omitted Proofs}
\label{apx:proofs}
\reductionMABS*
\begin{proof}
    By the definition of $K$-armed sleeping bandits of \cite{kleinberg2010regret}, there exist $K$ types of actions and at every round $t \in [T]$ an adversary selects a subset $\mathcal{A}_t \subseteq [K]$ to propose to the learner. Actions selected in time $t$ are only available in that round, and unless the adversary selects them again, are not available anymore after $t$, since at most one action can be played per round. In $1$-bounded $K$-OPSD, we can just ignore packets with the same type except for one: the packets with the same type are indistinguishable and only one of them can eventually be played. For every action $a \in \mathcal{A}_t$, we create a buffer $\mathcal{B}_t$ s.t. there exists at least one packet $p_a$ of type $a \in \mathcal{B}_t$. Then, no matter how many copies of $p_a$ there are in the buffer, the decision space is always the same and is equivalent to the size of the minimal buffer, which is actually $\mathcal{A}_t$.
\end{proof}
\subsection{Deterministic Algorithms}
\begingroup
  \setcounter{lemma}{0}
  \renewcommand\thelemma{3.\arabic{lemma}}
\begin{lemma}[Sleeping Bandits are a special case of $K$-OPSD]
Every $1$-bounded instance of the $K$-OPSD problem can be reduced to a $K$-armed Sleeping Bandit problem, and viceversa.
\end{lemma}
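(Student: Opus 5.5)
The plan is to give two explicit, regret-preserving translations and check that each direction preserves both the feasible decisions at every round and the expected gain of every policy. In a $1$-bounded instance every packet satisfies $d_p = r_p$, so a packet can only be scheduled in its arrival slot. The buffer at time $t$ therefore consists exactly of the packets released at $t$, and everything not scheduled at $t$ is discarded. The history observed by the learner is the same in both models: the sequence of chosen types and the sampled weights $X_p \sim \mathcal{D}_{c_p}$.

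\emph{From $1$-bounded $K$-OPSD to sleeping bandits.} Given an instance $\mathcal{P}$, set $\mathcal{A}_t \coloneqq \{c_p : r_p = t\} \subseteq [K]$. The key observation is that packets of the same type in the same slot are indistinguishable to the scheduler: they have the same deadline, the same reward distribution, and an unobserved realisation. Hence choosing any packet of type $a$ at time $t$ is equivalent to pulling arm $a \in \mathcal{A}_t$, and duplicates can be removed without changing the decision space. Because no packet survives past its slot, a choice made at $t$ has no effect on the options available at $t+1$. The feasible action set at $t$ is therefore exactly $\mathcal{A}_t$, independently of the past.

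For the benchmark, \opt (which uses the means) picks $\argmax_{a\in\mathcal{A}_t}\mu_a$ in each slot. This is exactly the best per-round sleeping-bandit benchmark of \cite{kleinberg2010regret}. Consequently $\mathbb{E}[\Gopt]-\mathbb{E}[\Galg]$ coincides with the sleeping-bandit regret, i.e.\ the $\alpha$-regret with $\alpha=1$.

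\emph{From sleeping bandits to $1$-bounded $K$-OPSD.} Given adversarial sets $\mathcal{A}_t$, inject at each slot $t$ one packet $(t,t,X_p,a)$ for every $a\in\mathcal{A}_t$. By the same argument, the buffer-then-discard dynamics reproduce the sleeping-bandit interaction exactly, rewards included. Any learner in one model therefore runs verbatim in the other with identical expected gain.

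The only delicate point, and the step to state carefully, is the claim that buffers play no role. This requires $d_p=r_p$, so that there is no carry-over and thus no buffer divergence between \alg and \opt. It also requires that the adversary's power matches in both models: it chooses types but not realisations, and it may adapt to the past history. Both conditions follow directly from the definitions in Section~\ref{sec:prob_form}, so no computation beyond this bookkeeping is needed.
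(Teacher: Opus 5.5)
Your proposal is correct and follows essentially the same route as the paper. In both directions you collapse same-type packets, which are indistinguishable and mutually exclusive when $d_p=r_p$, so that the per-slot decision set is exactly $\mathcal{A}_t$; conversely, you inject one packet of type $a$ for each $a\in\mathcal{A}_t$. You are also more careful than the paper's proof, which only argues that the per-round decision spaces coincide and leaves two things implicit: that nothing carries over between slots, and that \opt coincides with the per-round best available arm, so the two regret notions match.
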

\endgroup
\begin{proof}
    By the definition of $K$-armed sleeping bandits of \cite{kleinberg2010regret}, there exist $K$ types of actions and at every round $t \in [T]$ an adversary selects a subset $\mathcal{A}_t \subseteq [K]$ to propose to the learner. Actions selected in time $t$ are only available in that round, and unless the adversary selects them again, are not available anymore after $t$, since at most one action can be played per round. In $1$-bounded $K$-OPSD, we can just ignore packets with the same type except for one: the packets with the same type are indistinguishable and only one of them can eventually be played. For every action $a \in \mathcal{A}_t$, we create a buffer $\mathcal{B}_t$ s.t. there exists at least one packet $p_a$ of type $a \in \mathcal{B}_t$. Then, no matter how many copies of $p_a$ there are in the buffer, the decision space is always the same and is equivalent to the size of the minimal buffer, which is actually $\mathcal{A}_t$.
\end{proof}
\algedflearnBound*
\begin{proof}
    We call $f_t$ the packet sent by \alg in $t$, and $j_t$ the packet sent by \opt in $t$. We also call $h_t$ the packet in the buffer of ALG at time $t$ such that $h_t \in \argmax_{h\in \mathcal{B}_t} UCB_{h,t}$. To lighten the notation, we let the name of the packet coincide with its type. 

    We consider the \textit{charging scheme} in which $j_t$ is charged to its copy in the \alg schedule up to $t$ if it exists, otherwise it is charged to $f_t$. 

    For $\delta \in (0,1)$, define the uniform ``good'' event
    \[
    \mathcal{E}\!\left(\delta\right)
    \;:=\;\bigcap_{k=1}^K\;\bigcap_{n=1}^{T}
    \left\{\left|\widehat{\mu}_{k,n}-\mu_k\right|\le \sqrt{\frac{2\ln \delta^{-1}}{n}}\right\},
    \]
    where $\widehat{\mu}_{k,n}$ denotes the empirical mean of arm $k$ after $n$ samples. Lemma \ref{lem:good_event} states that the good event holds with probability at least $1-\delta$. Let $\delta = \frac{1}{T}$. Since
    \begin{align*}
        \mathbb{E}[R_{\Phi,T}^{\text{\algedflearn}}] &= \mathbb{E}\left[R_{\Phi,T}^{\text{\algedflearn}}|\mathcal{E}(\delta)\right]\mathbb{P}\left(\mathcal{E}(\delta)\right) + \mathbb{E}\left[R_{\Phi,T}^{\text{\algedflearn}}|\mathcal{E}(\delta)^C\right]\mathbb{P}\left(\mathcal{E}(\delta)^C\right) \\
        &\le \mathbb{E}\left[R_{\Phi,T}^{\text{\algedflearn}}|\mathcal{E}(\delta)\right]\mathbb{P}\left(\mathcal{E}(\delta)\right) + \mathcal{O}(1),
    \end{align*}
    we focus on bounding the first addendum under the good event.
    
    We proceed by cases.

    \paragraph{Case $1$: $f_t$ only receives one charge}Ideally, we want to show that $f_t$ receives a charge that is at most $\Phi f_t$.
    \paragraph{Case $1a$: $f_t$ receives a charge from $f_t$} Then the total charge is obviously $f_t$. No $\Phi$-regret is accrued in this case.
    \paragraph{Case $1b$: $f_t$ receives a charge from $j_t$} Then, \alg hasn't send $j_t$ before $f_t$. This implies that $j_t \in \mathcal{B}_t$ and that $\Phi UCB_{j_t,t} < UCB_{h_t,t} \le \Phi UCB_{f_t,t}$, and $UCB_{j_t,t} < UCB_{f_t,t}$. If $j_t \le f_t$, then the total charge is trivially bounded by $f_t$. If $j_t > f_t$, the instantaneous $\Phi$-regret can be bounded as
    $$
    j_t - \Phi f_t < UCB_{j_t,t} - \Phi f_t< \Phi UCB_{f_t,t} - \Phi f_t\le \Phi \beta_{f_t,t}.
    $$
    Since $UCB_{j_t,t} < UCB_{f_t,t}$ can occur at most $Q_{j_t,f_t}= \frac{8\ln T}{\Delta_{j_t,f_t}^2}$ times (Lemma \ref{lem:bound_comparison}), the total $\Phi$-regret accrued in this case is bounded as
    $$
    \Phi \sum_{t=1}^{Q_{j_t,f_t}} \beta_{f_t,t} \le \frac{8\Phi \ln T}{\Delta_{j_t,f_t}}.
    $$
    \paragraph{Case $2$: $f_t$ receives a charge from $j_t$ and a charge from $f_t$} In this case, $j_t \in \mathcal{B}_t$ and \opt has scheduled $f_t$ after time $t$. Also, $\Phi UCB_{j_t,t} < UCB_{h_t,t} \le \Phi UCB_{f_t,t}$. We have to distinguish two cases.
    \paragraph{Case $2a$: $d_{f_t}=t+2$} In this case, we have $UCB_{f_t,t}=UCB_{h_t,t}$. If $\Phi j_t \leq f_t$, we have $j_t+f_t \le \left(1+\Phi^{-1}\right)f_t = \Phi f_t$, and no $\Phi$-regret is accrued. If $\Phi j_t > f_t$, we have that $\Phi UCB_{j_t,t} \le UCB_{f_t,t}$ can occur at most $Q_{j_t,t}^\Phi = \frac{8\ln T}{\left(\Delta_{f_t,j_t}^\Phi\right)^2}$ times (Lemma \ref{lem:bound_comparison}), where $\Delta_{f_t,j_t}^\Phi > 0$. The instantaneous $\Phi$-regret can be bounded as
    $$
    (j_t + f_t) - \Phi f_t< UCB_{j_t,t} - \Phi^{-1} f_t < \Phi^{-1}UCB_{f_t,t} - \Phi^{-1} f_t \le \Phi^{-1} \beta_{f_t,t}.
    $$
    The total $\Phi$-regret accrued in this case is bounded as
    $$
    \Phi^{-1} \sum_{t=1}^{Q_{j_t,f_t}^{\Phi}} \beta_{f_t,t} \le \frac{8\ln T}{\Delta_{f_t,j_t}^\Phi},
    $$
    with $\Delta_{f_t,j_t}^\Phi > 0$.
    \paragraph{Case $2b$: $d_{f_t}<t+2$} In this case, we have $d_{h_t}=t+2$ and $d_{j_t}\le d_{f_t}$, since the schedule of \opt is ordered with ascending deadlines. Since $f_t$ is played by \opt after $t$, and of course before $t+2$, it must be that $d_{f_t}=t+1$. So we consider the couple of packets scheduled by \alg in $t$ and $t+1$, a bound the $\Phi$-regret accrued in these two rounds jointly. Note that $f_{t+1}$ can receive a charge only from its copy, since \opt plays $f_t$ in $t+1$ and is charged to $f_t$ itself. Moreover, $\Phi UCB_{f_t,t} \ge UCB_{h_t,t}$. If $\Phi j_t < f_t$, we have $(j_t+f_t+f_{t+1})-\Phi (f_t + f_{t+1}) < (\Phi^{-1}f_t+f_t+f_{t+1})-\Phi(f_t+t_{t+1}) < 0$, and there is no $\Phi$-regret.
    If $\Phi j_t \ge f_t$, the instantaneous $\Phi$-regret can be bounded as 
    \begin{align*}
        (j_t+f_t+f_{t+1})-\Phi (f_t + f_{t+1}) &< UCB_{j_t,t} - \Phi^{-1}(f_t+f_{t+1}) \\
        & \le \Phi^{-1} \frac{1}{2} 2UCB_{h_t,t} - \Phi^{-1}(f_t+f_{t+1}) \\
        & \le \frac{1}{2\Phi} \Phi(UCB_{f_t,t} + UCB_{f_{t+1},t}) - \Phi^{-1}(f_t+f_{t+1}) \\
        & \le \frac{3}{2}(\beta_{f_t,t}+\beta_{f_{t+1},t}).
    \end{align*}
    Since $j_t \in \mathcal{B_t}$, this situation can occur until $UCB_{j_t,t} < UCB_{f_t,t}$, thus at most $Q_{j_t,f_t}$ times (Lemma \ref{lem:bound_comparison}). The total $\Phi$-regret accrued in this case is bounded as
    $$
    \frac{3}{2}\sum_{t=1}^{Q_{j_t,f_t}} \beta_{f_t,t} \le \frac{8\Phi\ln T}{\Delta_{j_t,f_t}}.
    $$
    Summing up all the possible regrets sources for every round, and recombining by packet type, yields the first result:
    \begin{align*}
        R_{\Phi,T}^{\text{\algedflearn}} \le \mathcal{O} \left(\sum_{t=1}^T \left(\frac{1}{\Delta_{j_t,f_t}} + \frac{1}{\Delta_{f_t,j_t}^\Phi} \right) \ln T\right) \\
        \le \mathcal{O}\left(K\sum_{i =1}^{K - 1} \frac{\ln T}{\Delta_{i, i+1}} + \sum_{\substack{i,j \in [K]\\ \Delta_{i,j}^\Phi > 0}} \frac{\ln T}{\Delta_{i,j}^\Phi}\right).
    \end{align*}
    The second bound can be obtained by noting that, every time a packet $f_t$ is played, the instantaneous $\Phi$-regret at round $t$ can be bounded by $\beta_{f_t,t}$. Summing up for every packet type $i\in [K]$, and for the number of times a packet of type $i$ has been scheduled, we get
    $$
    \sum_{i \in [K]} \sum_{n=1}^{N_{i,T}} \beta_{i,t_{i,n}} \le \widetilde{\mathcal{O}}\left( \sum_{i \in [K]}\sqrt{N_{i,T}} \right) \le \widetilde{\mathcal{O}}\left(\sqrt{KT}\right),
    $$ 
    where $t_{i,n}$ is the round in which a packet of type $i$ has been scheduled for the $n$-th time.
\end{proof}
\upperboundtheta*
\begin{proof}
    We analyze separately the two types of possible epochs, \emph{i.e.} the ones ending with $b_{t+\ell}$ (\textit{first type}) and the ones ending with $v_{t+\ell}$ (\textit{second type}), where $\ell$ will be used to indicate the length of an epoch. Note that, by construction, any epoch can be at most $K$ rounds long, \emph{i.e.} $\ell \le K$. We have to show two things: (1) every epoch starts with the buffer of \opt being entirely contained in the buffer of \algtheta, thus \opt has no advantage gained from previous epochs; (2) $\Gopt^E \le \theta_K \Galgtheta^E$ for every epoch $E$, where $\Gopt^E$ indicates the gain of \opt inside epoch $E$, and $\Galgtheta^E$ indicates the gain of \algtheta inside epoch $E$. These two facts combined yield a global competitive ratio of $\theta_K$.
    \paragraph{(1) \algtheta starts every epoch with a larger or equal buffer to \opt.} We prove this fact by allowing for a \textit{stronger} \opt. In particular, at the end of every epoch, we allow \opt to send additional packets until its buffer is entirely contained in the buffer of \algtheta. 
    \paragraph{(1a) $\{v_t, \ldots, b_{t+\ell}\}$-type epochs.} Consider an epoch $E$ starting in $t$ and ending in $t+\ell$ of the first type, then we have:
    \begin{align*}
        \Galgtheta^E = \sum_{j=0}^{\ell-1} v_{t+j} + b_{t+\ell}.
    \end{align*}
    Assuming that \opt starts with a buffer that is smaller of equal than the one of \algtheta, we upper bound its gain inside epoch $E$ as
    \begin{align}
    \label{eq:gopt_bound_i}
        \Gopt^E \le \sum_{j=0}^{\ell} b_{t+j} + v_{t+\ell},
    \end{align}
    which is the best possible gain inside epoch $E$ since $b_{t+j}>v_{t+j}$ for every $j<\ell$ by construction, $v_{t+\ell}+b_{t+\ell}\ge b_{t+\ell}$. We allowed \opt to schedule an extra packet w.r.t. to \algtheta, which is $v_{t+\ell}$. This way, when epoch $E$ terminates, the following epoch starts with the buffer of \opt contained in the one of \algtheta. 
    \paragraph{(2a) $\{v_t, \ldots, v_{t+\ell}\}$-type epochs} Consider an epoch $E$ starting in $t$ and ending in $t+\ell$ of the second type, then we have:
    \begin{align*}
        \Galgtheta^E = \sum_{j=0}^{\ell} v_{t+j}.
    \end{align*}
    Assuming that \opt starts with a buffer that is smaller of equal than the one of \algtheta, we upper bound its gain inside epoch $E$ as
    \begin{align}
    \label{eq:gopt_bound_ii}
        \Gopt^E \le \sum_{j=0}^{\ell-1} b_{t+j} + v_{t+\ell},
    \end{align}
    which is the best possible gain inside epoch $E$ since $b_{t+j}>v_{t+j}$ for every $j<\ell$ by construction, and $v_{t+\ell}\ge b_{t+\ell}$. When epoch $E$ terminates, the following epoch starts with the buffer of \opt contained in the one of \algtheta. 
    \paragraph{(2) Bounding the competitive ratio of \algtheta in every epoch.} We now prove that, given an epoch $E$ of any type, $\Gopt \le \theta_K\Galgtheta^E$.
    \paragraph{(2a) $\{v_t, \ldots, b_{t+\ell}\}$-type epochs.} Consider an epoch $E$ starting in $t$ and ending in $t+\ell$ of the first type, then we have:
    \begin{align}
    \label{eq:comp_ratio_theta_i}
        \theta_K\Galgtheta^E-\Gopt^E \ge \theta_K\sum_{j=0}^{\ell-1}v_{t+j} + \theta_K b_{t+\ell} - \sum_{j=0}^\ell b_{t+j} - v_{t+\ell},
    \end{align}
    which follows by Equation \eqref{eq:gopt_bound_i}. Our goal is to prove that the RHS of Equation \eqref{eq:comp_ratio_theta_i} is nonnegative for any choice of the sequences $\{v_{t+j}\}_{j=0}^\ell$ and $\{b_{t+j}\}_{j=0}^\ell$. The epoch termination conditions of \algtheta, together with the definitions of $v_{t+j}$ and $b_{t+j}$, impose a set of constraints on these sequences:
    \begin{align}
        v_{t+j}&\ge b_{t+j-1} ,~~ 1 \le j\le K-1, \label{eq:constraints_i}\\
        b_{t+j}&\ge v_{t+j},~~~~ j\ge 0, \label{eq:constraints_ii}\\
        v_{t+j} &\ge \frac{x_j}{x_{j+1}}b_{t+j},~~ 0 \le j \le \ell-1, \label{eq:constraints_iv} \\
        v_{t+\ell} &\le \frac{x_\ell}{x_{\ell+1}}b_{t+\ell}, \label{eq:constraints_v} 
    \end{align}
    where \eqref{eq:constraints_i} follows from the definition of $v_{t+j}$, \eqref{eq:constraints_ii} follows from the epoch termination conditions of \algtheta, and \eqref{eq:constraints_iv} and \eqref{eq:constraints_v} follow from the decision rule of \algtheta (and with the notational convention that $x_0=1$ and $x_K = x_{K-1}$). An important consequence of these conditions is that both $v_{t+j}$ and $b_{t+j}$ are strictly increasing sequences.

    Intuitively, to prove that under these constraints the RHS of Equation \eqref{eq:comp_ratio_theta_i} is nonnegative, we look at it from the perspective of a constrained minimization problem, and show that the feasible minimum set the RHS to $0$. In particular, note that both the objective function \eqref{eq:comp_ratio_theta_i} and the constraints \eqref{eq:constraints_i}-\eqref{eq:constraints_v} are linear in the parameters $v_{t+j}$ and $b_{t+j}$. Thus, the minimum must be in one of the edges of the decision space defined by the constraints. We now define an edge sequence $\widetilde{v}_{t+j}$ and $\widetilde{b}_{t+j}$ where inequalities are tight and the RHS of \eqref{eq:comp_ratio_theta_i} is $0$. Let $s>0$ be any scale parameter (the problem is homogeneous and the sign does not change with rescaling). Then, set
    \begin{align}
        \widetilde{b}_t&=s, \label{eq:edge_i}\\
        \widetilde{v}_t&=s(\theta_K-1), \label{eq:edge_ii}\\
        \widetilde{v}_{t+j}&=\widetilde{b}_{t+j-1},~~~ j\ge 1, \label{eq:edge_iii}\\
        \widetilde{v}_{t+j}&=\frac{x_j}{x_{j+1}}\widetilde{b}_{t+j},~~~ 0\le j\le \ell. \label{eq:edge_iv}
    \end{align}
    A direct implication of \eqref{eq:edge_i}-\eqref{eq:edge_iv} is that 
    \begin{align*}
        \widetilde{v}_{t+j} &= \widetilde{b}_{t+j-1} =\frac{x_{j}}{x_{j-1}}\widetilde{v}_{t+j-1} = \frac{x_j}{x_0} \widetilde{v}_t =s(\theta_K-1) x_j, \\
        \widetilde{b}_{t+j} &=\frac{x_{j+1}}{x_{j}}\widetilde{v}_{t+j} = \frac{x_{j+1}}{x_0} \widetilde{v}_t =s(\theta_K-1) x_{j+1}. \\
    \end{align*}
    We rearrange \eqref{eq:comp_ratio_theta_i}, plug the edge sequence and write
    \begin{align*}
        \theta_K\sum_{j=0}^{\ell-1}\widetilde{v}_{t+j} &+ \theta_K \widetilde{b}_{t+\ell} - \sum_{j=0}^\ell \widetilde{b}_{t+j} - \widetilde{v}_{t+\ell} = \sum_{j=0}^{\ell-1}(\theta_K \widetilde{v}_{t+j}-\widetilde{b}_{t+j})+(\theta_K-1)\widetilde{b}_{t+\ell}-\widetilde{v}_{t+\ell} \\
        &= s(\theta_K-1) \left[\theta_K(1+\ldots+x_{\ell-1}+x_{\ell+1}) - (x_1 + \ldots + 2x_\ell + x_{\ell+1}) \right] = 0,
    \end{align*}
    which is a direct consequence of the definition of $\{x_j\}_{j=0}^{K - 1}$ as a solution to \eqref{eq:system}. We proved that the sequences $\{\widetilde{v}_{t+j}\}_{j=0}^\ell$ and $\{\widetilde{b}_{t+j}\}_{j=0}^\ell$ are feasible and make all the constraints tight. This step of the proof can be concluded by noting that these edge sequences are actually the minimizer of the RHS of \eqref{eq:comp_ratio_theta_i}: the objective is linear with a positive coefficient in $\{v_{t+j}\}_{j=0}^{\ell-1}$ and $b_{t+\ell}$, and we set the minimum possible values for those (all lower bounds became equalities); the objective is linear with a negative coefficient in $\{b_{t+j}\}_{j=0}^{\ell-1}$ and $v_{t+\ell}$, and we set the maximum possible values for those (all upper bounds became equalities). Thus, the term cannot be reduced anymore and it is nonnegative for any feasible sequence of values, and $\Gopt^E\le \theta_K\Galgtheta^E$.
    \paragraph{(2b) $\{v_t, \ldots, v_{t+\ell}\}$-type epochs.} We closely follow the argument developed in the previous step. Consider an epoch $E$ starting in $t$ and ending in $t+\ell$ of the first type, then we have:
    \begin{align}
    \label{eq:comp_ratio_theta_ii}
        \theta_K\Galgtheta^E-\Gopt^E \ge \theta_K\sum_{j=0}^{\ell}v_{t+j} - \sum_{j=0}^{\ell-1} b_{t+j} - v_{t+\ell},
    \end{align}
    which follows by Equation \eqref{eq:gopt_bound_ii}. Our goal is to prove that the RHS of Equation \eqref{eq:comp_ratio_theta_ii} is nonnegative for any choice of the sequences $\{v_{t+j}\}_{j=0}^\ell$ and $\{b_{t+j}\}_{j=0}^\ell$. Constraints \eqref{eq:constraints_i}-\eqref{eq:constraints_iv} still hold, in addition we have
    \begin{align}
        v_{t+\ell} &\ge b_{t+\ell}, \label{eq:constraints_vi},
    \end{align}
    which follows from the epoch termination condition of \algtheta. As before, we need to prove that the RHS of   \eqref{eq:comp_ratio_theta_ii} is nonnegative under any sequence satisfying the constraints.

    We use the same constrained optimization argument as before, and define and edge sequence which makes the constraints tight:
    \begin{align}
        \widetilde{b}_t&=s, \label{eq:edge_i2}\\
        \widetilde{v}_t&=s(\theta_K-1), \label{eq:edge_ii2}\\
        \widetilde{v}_{t+j}&=\widetilde{b}_{t+j-1},~~~ j\ge 1, \label{eq:edge_iii2}\\
        \widetilde{v}_{t+j}&=\frac{x_j}{x_{j+1}}\widetilde{b}_{t+j},~~~ 0\le j\le \ell-1,  \label{eq:edge_iv2} \\
        \widetilde{v}_{t+\ell}&=\widetilde{b}_{t+\ell}, \label{eq:edge_v2}.
    \end{align}
    We now rewrite the RHS of \eqref{eq:comp_ratio_theta_ii} and plug the edge sequences to obtain:
    \begin{align*}
        \theta_K\sum_{j=0}^{\ell}\widetilde{v}_{t+j} &- \sum_{j=0}^{\ell-1} \widetilde{b}_{t+j} - \widetilde{v}_{t+\ell} = \sum_{j=0}^{\ell-1}(\theta_K\widetilde{v}_{t+j}-\widetilde{b}_{t+j})+(\theta_K-1)\widetilde{v}_{t+\ell} \\
        &= s(\theta_K-1)\left[\theta_K(1+\ldots+x_{\ell})-(x_1+\ldots+2x_{\ell})\right] \ge 0,
    \end{align*}
    which is a direct consequence of the definition of $\{x_j\}_{j=0}^{K - 1}$ as a solution to \eqref{eq:system}. We proved that the sequences $\{\widetilde{v}_{t+j}\}_{j=0}^\ell$ and $\{\widetilde{b}_{t+j}\}_{j=0}^\ell$ are feasible and make all the constraints tight. This step of the proof can be concluded by noting that these edge sequences are actually the minimizer of the RHS of \eqref{eq:comp_ratio_theta_ii}: the objective is linear with a positive coefficient in $\{v_{t+j}\}_{j=0}^{\ell}$, and we set the minimum possible values for those (all lower bounds became equalities); the objective is linear with a negative coefficient in $\{b_{t+j}\}_{j=0}^{\ell-1}$, and we set the maximum possible values for those (all upper bounds became equalities). Thus, the term cannot be reduced anymore, and it is nonnegative for any feasible sequence of values, and $\Gopt^E\le \theta_K\Galgtheta^E$.

    Putting all together completes the proof.
\end{proof}
\upperboundthetalearn*
\begin{proof}
    The proof follows the argument of the previous one with the addition of optimism. We work under the good event $\mathcal{E}(\delta)$. Due to \algthetalearn epoch termination conditions, all the packets in \opt buffer at the start of an epoch must also be contained in \algthetalearn buffer. We already proved this in the proof of Proposition 4, and the same argument still holds. What we prove now is that the $\theta_K$-regret accrued inside an epoch is bounded by the sum of the widths of the confidence intervals of the packets selected by \algthetalearn.
    \paragraph{(a) $\{v_t, \ldots, b_{t+\ell}\}$-type epochs.} Consider an epoch $E$ starting in $t$ and ending in $t+\ell$ of the first type.  
    Then
    \begin{align*}
        \mathbb{E}[\Gopt^E] \le \sum_{j=0}^{\ell} UCB_{b_{t+j},t} + UCB_{v_{t+\ell},t},
    \end{align*}
    which is an overestimation of the total reward that any algorithm can achieve within the epoch, due to epoch termination conditions of \algthetalearn. Note that under the good event, all confidence intervals contain the true mean, for every packet type and time $t\in [T]$. We want to lower bound the following:
    \begin{align}
        \theta_K\mathbb{E}[\Galgthetalearn^E]&-\mathbb{E}[\Gopt^E] \ge \theta_K\sum_{j=0}^{\ell-1}v_{t+j} + \theta_K b_{t+\ell} - \sum_{j=0}^\ell UCB_{b_{t+j},t} - UCB_{v_{t+\ell},t} \notag \\
        &\ge \theta_K\sum_{j=0}^{\ell-1}UCB_{v_{t+j},t} + \theta_K UCB_{b_{t+\ell},t} - \sum_{j=0}^\ell UCB_{b_{t+j},t} - UCB_{v_{t+\ell},t}+ \label{eq:comp_ratio_thetalearn_i}\\
        &\qquad- 2\theta_K \sum_{j=0}^{\ell-1}\beta_{v_{t+j},t} - 2\theta_K\beta_{b_{t+\ell},t}.\notag
    \end{align}
    We show that \eqref{eq:comp_ratio_thetalearn_i} is lower bounded by $0$. From the \algthetalearn perspective, the UCBs are equivalent to the actual weights. \algthetalearn behaves exactly like \algtheta, and the UCBs satisfy the same constraints. Minimizing \eqref{eq:comp_ratio_thetalearn_i} under the UCB constraints is exactly equivalent to minimize \eqref{eq:comp_ratio_theta_i} under the set of constraints defined in \eqref{eq:constraints_i}-\eqref{eq:constraints_v}. Thus, substituting the UCBs with the actual values and following the same argument of the proof of Proposition 4, step (2a), we get that \eqref{eq:comp_ratio_thetalearn_i} is nonnegative for every sequence of UCBs. 

    Thus, in epoch $E$, we have
    \begin{align}
        \theta_K\mathbb{E}[\Galgthetalearn
        ^E]&-\mathbb{E}[\Gopt^E] \ge - 2\theta_K \sum_{j=0}^{\ell-1}\beta_{v_{t+j},t} - 2\theta_K\beta_{b_{t+\ell},t}.\label{eq:comp_ratio_thetalearn_i_betas}
    \end{align}

    \paragraph{(b) $\{v_t, \ldots, v_{t+\ell}\}$-type epochs.} Consider an epoch $E$ starting in $t$ and ending in $t+\ell$ of the second type. Then
    \begin{align*}
        \mathbb{E}[\Gopt^E] \le \sum_{j=0}^{\ell-1} UCB_{b_{t+j},t} + UCB_{v_{t+\ell},t},
    \end{align*}
    which is an overestimation of the total reward than any algorithm can achieve inside the epoch, due to epoch termination conditions of \algthetalearn. Note that under the good event all confidence intervals contain the true mean, for every packet type and time $t\in [T]$. We want to lower bound the following:
    \begin{align}
        \theta_K\mathbb{E}[\Galgtheta^E]&-\mathbb{E}[\Gopt^E] \ge \theta_K\sum_{j=0}^{\ell}v_{t+j} - \sum_{j=0}^{\ell-1} UCB_{b_{t+j},t} - UCB_{v_{t+\ell},t} \notag \\
        &\ge \theta_K\sum_{j=0}^{\ell}UCB_{v_{t+j},t} - \sum_{j=0}^{\ell-1} UCB_{b_{t+j},t} - UCB_{v_{t+\ell},t}+ \label{eq:comp_ratio_thetalearn_ii}\\
        &\qquad- 2\theta_K \sum_{j=0}^{\ell}\beta_{v_{t+j},t}.\notag
    \end{align}
    We show that \eqref{eq:comp_ratio_thetalearn_ii} is lower bounded by $0$. From the \algthetalearn perspective, the UCBs are equivalent to the actual weights. \algthetalearn behaves exactly like \algtheta, and the UCBs satisfy the same constraints. Minimizing \eqref{eq:comp_ratio_thetalearn_ii} under the UCB constraints is exactly equivalent to minimize \eqref{eq:comp_ratio_theta_ii} under the set of constraints defined in \eqref{eq:constraints_i}-\eqref{eq:constraints_iv} and \eqref{eq:constraints_vi}. Thus, substituting the UCBs with the actual values and following the same argument of the proof of Proposition 4, step (2b), we get that \eqref{eq:comp_ratio_thetalearn_ii} is nonnegative for every sequence of UCBs. 

    Thus, in epoch $E$, we have
    \begin{align}
        \theta_K\mathbb{E}[\Galgthetalearn^E]&-\mathbb{E}[\Gopt^E] \ge - 2\theta_K \sum_{j=0}^{\ell}\beta_{v_{t+j},t}.\label{eq:comp_ratio_thetalearn_ii_betas}
    \end{align}
    \paragraph{Bounding the Confidence Intervals.} We proved, through \eqref{eq:comp_ratio_thetalearn_i_betas} and \eqref{eq:comp_ratio_thetalearn_ii_betas}, that the per-epoch $\theta_K$-regret is upper bounded by the sum of the widths of the confidence intervals of the packet types selected during that epoch. Let $N_{i,T}$ be the number of times a packet of type $i\in [K]$ has been selected up to time $T$, and $K_E$ be the set of packet types selected during epoch $E$. Also, let $t_E$ the starting time of epoch $E$.
    
    It is important to notice that, due to the constraints enforced on the UCBs of the selected packets, every packet type can only be selected at most once during an epoch. This is a consequence of the fact that the UCBs of the selected packets are strictly increasing during an epoch, and the UCBs are computed at the begin and kept fixed through $E$. 
    
    To complete the proof, we sum over the epochs:
    \begin{align*}
        \mathbb{E}[\Gopt]-\theta_K\mathbb{E}[\Galgthetalearn] &= \sum_{E}\mathbb{E}[\Gopt^E]-\theta_K\mathbb{E}[\Galgthetalearn^E] \\
        &\le 2\theta_K\sum_{E} \sum_{i \in K_E} 
        \beta_{i,t_E} \\
        &\le 2\theta_K \sum_{i \in [K]} \sum_{j=1}^{N_{i,T}}\beta_{i,t_{i,j}} \\
        &\le  2c\theta_K \sum_{i \in [K]} \sum_{j=1}^{N_{i,T}}\sqrt{\frac{\ln \frac{KT}{\delta}}{j}} \le \widetilde{\mathcal{O}}\left(\sqrt{KT}\right),
    \end{align*}
    where the $\widetilde{\mathcal{O}}$ only hides universal constants and logarithmic terms.

    Setting $\delta=\frac{1}{T}$, the good event holds with probability at least $1-\frac{1}{T}$ (Lemma \ref{lem:good_event}):
    \begin{align*}
        \mathbb{E}[R_{\theta_K,T}^{\text{\algthetalearn}}] &= \mathbb{E}[R_{\theta_K,T}^{\text{\algthetalearn}}|\mathcal{E}(\delta)]\mathbb{P}\left(\mathcal{E}(\delta)\right) + \mathbb{E}[R_{\theta_K,T}^{\text{\algthetalearn}}|\mathcal{E}(\delta)^C]\mathbb{P}\left(\mathcal{E}(\delta)^C\right) \\
        &\le \mathbb{E}[R_{\theta_K,T}^{\text{\algthetalearn}}|\mathcal{E}(\delta)] + T\frac{1}{T} \\
        &\le \widetilde{\mathcal{O}}\left(\sqrt{KT}\right).
    \end{align*}
\end{proof}
\lowerbound*
\begin{proof}
    Through the whole proof, we will refer to $\theta, x_1, \ldots, x_{K - 1}$ as the solution to system \ref{eq:system}. Note that $\theta < \Phi$ for every finite $K$. As the construction is very complex, for the sake of clarity, we start by proving the lower bound on a smaller problem for $\theta_K = \theta_2 = \sqrt{2}$. The full proof (generalized to an arbitrary $K$) can be found in the appendix.
    
    \paragraph{Lower Bound for $\theta_2$-regret.}
    We start by introducing some quantities. Note that $\theta_2 = \sqrt{2}$. Then, let $\delta=\frac{1}{\sqrt{T}}$, $\epsilon = \frac{1}{2\sqrt{T}}$, $\sigma = \frac{4}{\theta_2-1}$, $c = \frac{\theta_2-1}{8}$. Finally, let $X_0 \sim \mathcal{N}(1,\sigma)$ and $X_1 \sim \mathcal{N}(x_1, \sigma)$. Let \alg be an arbitrary, deterministic algorithm.

    \begin{figure}
        \centering
        \includegraphics[width=\linewidth]{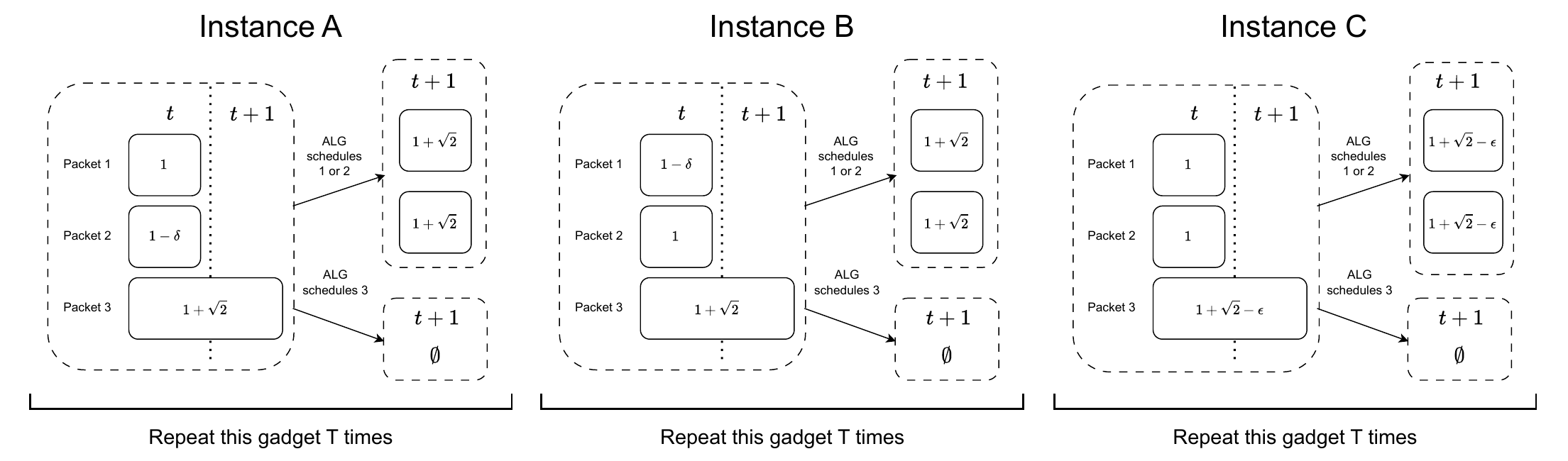}
        \caption{Exemplification of the lower bounds instances. Each gadget (of length two) is repeated $T$ times. What happens in the second round of a gadget depends on the action of \alg in the first.}
        \label{fig:lowerbounds}
    \end{figure}
    
    We consider three instances. Each instance is constructed by repeating a \textit{gadget} for $T$ times. A gadget is a sub-problem that is self-contained; in our case, a gadget is composed of two subsequent rounds, starting from the first. Thus, the total number of rounds will be $2T$. In Figure \ref{fig:lowerbounds}
    
    Instance \texttt{A} is constructed by repeating the following gadget $T$ times: in the first round $t$ of the gadget (\emph{i.e.}, the odd rounds) we have $\mathcal{B}_t = \{(r_1 = t, d_1 = t, X_0, c_1 = 1), (r_2 = t, d_2=t, X_0-\delta, c_2 = 2), (r_3=t, d_3=t+1, X_1, c_3 = 3)\}$; in the second round $t+1$ of the gadget (\emph{i.e.}, the even rounds) we have $\mathcal{B}_{t+1} = \emptyset$ if \alg sent the packet of type $3$ in $t$, and $\mathcal{B}_{t+1} = \{(r_3=t, d_3=t+1, X_1, c_3 = 3), (r_3=t, d_3=t+1, X_1, c_3 = 3)\}$ if \alg sent a packet of type $1$ or $2$ in $t$. Intuitively, \alg is faced with two expiring packets (one having a slightly worse mean reward) and one packet that can also be sent in the following round. If \alg sends the latter, no more packets arrive, and the $\theta_2$-regret w.r.t. to \opt would be $r_2^A = 0$ (by the definition of $\theta_2$ and $x_1$). If \alg acts greedily and sends one of the former, then a copy of the third packet arrives and the expected $\theta_2$-regret w.t. to \opt would be $r_1^A=0$ by scheduling the first and $\widetilde{r}_1^A=\delta\theta_K$ by scheduling the second.

    Instance \texttt{B} is analogous to \texttt{A}, with the difference that we flip type $1$ and type $2$. As a consequence, $r_1^B = \widetilde{r}_1^A$ and $\widetilde{r}_1^B = 0$.

    Instance \texttt{C} is constructed by repeating the following gadget $T$ times: in the first round $t$ of the gadget (\emph{i.e.}, the odd rounds) we have $\mathcal{B}_t = \{(r_1 = t, d_1 = t, X_0, c_1 = 1), (r_2 = t, d_2=t, X_0, c_2 = 2), (r_3=t, d_3=t+1, X_1-\epsilon, c_3 = 3)\}$; in the second round $t+1$ of the gadget (\emph{i.e.}, the even rounds) we have $\mathcal{B}_{t+1} = \emptyset$ if \alg sent the packet of type $3$ in $t$, and $\mathcal{B}_{t+1} = \{(r_3=t, d_3=t+1, X_1-\epsilon, c_3 = 3), (r_3=t, d_3=t+1, X_1-\epsilon, c_3 = 3)\}$ if \alg sent a packet of type $1$ or $2$ in $t$. Intuitively, \alg is faced with two identical expiring packets (even if \alg doesn't know this) and one packet that can also be sent in the following round. If \alg sends the latter, no more packets arrive, and the$\theta_2$-regret w.r.t. to \opt would be $r_2^C = (\theta_2-1)\epsilon$ (by the definitions of $\theta_2$ and $x_1$). If \alg acts greedily and sends one of the former, then a copy of the third packet arrives and the expected $\theta_2$-regret w.r.t. to \opt would be $r_1^C=\widetilde{r}_1^C=(\theta_2-2)\epsilon < 0$.

    Now we show that, no matter what the algorithm does, it is not possible to achieve $\mathcal{O}\left(\sqrt{T}\right)$ $\theta_2$-regret in \texttt{A} and \texttt{C} without achieving $\Omega\left(\sqrt{T}\right)$ $\theta_2$-regret in \texttt{B}. We call $N_{1,T}$, $\widetilde{N}_{1,T}$ and $N_{2,T}$ the number of times \alg selected packet type $1$, $2$, and $3$ in the first step of a gadget, respectively. Let's write the expected regrets for every instance:
    \begin{align*}
        \mathbb{E}_A[R_{\theta_2,T}] &= \theta_2\delta\mathbb{E}_A[\widetilde{N}_{1,T}], \\
        \mathbb{E}_B[R_{\theta_2,T}] &= \theta_2\delta\mathbb{E}_B[{N}_{1,T}], \\
        \mathbb{E}_C[R_{\theta_2,T}] &= (\theta_2-2)\epsilon\mathbb{E}_C[{N}_{1,T}+\widetilde{N}_{1,T}] + (\theta_2-1)\epsilon\mathbb{E}_C[N_{2,T}]. \\
    \end{align*}
    We force both $\mathbb{E}_A[R_{\theta_2,T}]$ and $\mathbb{E}_C[R_{\theta_2,T}]$ to be $\mathcal{O}\left(\sqrt{T}\right)$.
    \begin{align*}
        \mathbb{E}_A[R_{\theta_2,T}] &= \theta_2\delta\mathbb{E}_A[\widetilde{N}_{1,T}] \le c\sqrt{T}, \\
        \mathbb{E}_C[R_{\theta_2,T}] &= (\theta_2-2)\epsilon\mathbb{E}_C[{N}_{1,T}+\widetilde{N}_{1,T}] + (\theta_2-1)\epsilon\mathbb{E}_C[N_{2,T}] \\
        &= \epsilon\mathbb{E}_C[N_{2,T}]+T(\theta_2-2)\epsilon \le c\sqrt{T}.
    \end{align*}
    These conditions yield an upper bound on the expected number of times that \textit{bad} selections can be made in the respective instances. However, these selections are the \textit{good} ones in Instance \texttt{B}. 
    
    As is common in the bandit literature, we now show that the instances are close from a statistical point of view, and thus hard to distinguish, and then use these constraints to force the expected regret in Instance \texttt{B} to be above a certain threshold. We want to upper bound the KL divergence between the instances. We use Lemma 1 from \cite{gerchinovitz2016refined}, which allows us to decompose the KL between two instances after $T$ gadgets ($2T$ rounds). Note that, in every gadget, packet type $c_3$ is selected at most two times, while packets of types $c_1$ and $c_2$ can only be selected once and are mutually exclusive.
    \begin{align*}
        KL_T(\texttt{B},\texttt{A}) &= \mathbb{E}_B[N_{1,T}]KL\left(\mathcal{N}(1-\delta,\sigma), \mathcal{N}(1,\sigma)\right) + \mathbb{E}_B[\widetilde{N}_{1,T}]KL\left(\mathcal{N}(1,\sigma), \mathcal{N}(1-\delta,\sigma)\right) \\
        &\le 2T\frac{\delta^2}{\sigma^2},\\
        KL_T(\texttt{B},\texttt{C}) &= \mathbb{E}_B[N_{1,T}]KL\left(\mathcal{N}(1-\delta,\sigma), \mathcal{N}(1,\sigma)\right) + 2\mathbb{E}_B[{N}_{2,T}]KL\left(\mathcal{N}(x_1,\sigma), \mathcal{N}(x_1-\epsilon,\sigma)\right) \\
        &\le T\frac{\delta^2+2\epsilon^2}{\sigma^2},
    \end{align*}
    where the upper bound for the KL between two normal random variables is actually tight, since they all share the same variance $\sigma^2$.

    Now, we bound the expected number of pulls in \texttt{B} with their corresponding numbers in \texttt{A} and \texttt{C}, respectively. We make use of a classic Pinsker Inequality argument, plus the conditions previously derived.
    \begin{align*}
        \mathbb{E}_B[\widetilde{N}_{1,T}] &\le \mathbb{E}_A[\widetilde{N}_{1,T}] +T\sqrt{\frac{1}{2}KL_T(\texttt{B},\texttt{A})} \\
        &\le \frac{c}{\theta_2 \delta}\sqrt{T}+\sqrt{\frac{\delta^2}{\sigma^2}T^3},\\
        \mathbb{E}_B[{N}_{2,T}] &\le \mathbb{E}_C[{N}_{2,T}] +T\sqrt{\frac{1}{2}KL_T(\texttt{B},\texttt{C})} \\
        &\le \frac{c}{\epsilon}\sqrt{T}+(2-\theta_2)T+\sqrt{\frac{\delta^2+2\epsilon^2}{2\sigma^2}T^3}.
    \end{align*}
    Finally, we wrap up by explicitly lower bounding the regret in Instance \texttt{B}.
    \begin{align*}
        \mathbb{E}_B[R_{\theta_2,T}] &= \theta_2\delta\mathbb{E}_B[{N}_{1,T}] \\
        &=\theta_2\delta(T-\mathbb{E}_B[\widetilde{N}_{1,T}]-\mathbb{E}_B[{N}_{2,T}]) \\
        &\ge \theta_2 \delta \left(T-\frac{c}{\theta_2 \delta}\sqrt{T}+\sqrt{\frac{\delta^2}{\sigma^2}T^3}-\frac{c}{\epsilon}\sqrt{T}+(2-\theta_2)T+\sqrt{\frac{\delta^2+2\epsilon^2}{2\sigma^2}T^3}\right) \\
        &=\frac{c}{4}\sqrt{T},
    \end{align*}
    where the last inequality is obtained by simply plugging in the values defined at the beginning of the paragraph. To conclude, any algorithm \alg faced with Instances \texttt{A}, \texttt{B}, and \texttt{C} will have its regret lower bounded by $\frac{c}{4}\sqrt{T}$ in at least one of the three. Otherwise, we incur a contradiction.
\end{proof}
\subsection{Randomized Algorithms}
\label{sec:proof_randomized}

\upperboundlearnrandomtwo*
\begin{proof}
    Assume that the good event $\mathcal{E}(\delta')$ holds, for some $\delta'$ to be specified later. We follow similar steps as Theorem 4.1 of \cite{bienkowski2011randomized}. Without loss of generality, we assume that exactly one packet with deadline $t$ and one packet with deadline $t+1$ arrive at each round. All packets with deadline $t$ can be ignored except for the one with the largest upper confidence bound, and if no packets arrive, we can imagine a fictitious one with zero weight. At round $t$, all packets with deadline $t+1$ except for the one with the largest UCB can be ignored, and all of the others can just be considered in round $t+1$ as new packets. 
    
    We introduce a \textit{potential function} $F(\cdot)$, a function that maps every state to a real (positive) number. This function, intuitively, quantifies the advantage of \algrtwo w.r.t. \opt. When the state is empty, \emph{i.e.}, both \algrtwo and \opt buffers are empty, the potential is $F(\emptyset) = 0$. Thus, both at the start and at the end of the trial, the potential is null. Moreover, the potential is always nonnegative.
    
    Consider round $t-1$, the heaviest packet with deadline $t$ is $x_{t-1}^*$, and the packet with deadline $t$ and largest UCB is $x_{t-1}$. Let $\sigma_{t-1}$ be the probability with which \algrtwo sends $x_{t-1}$. Let $z_{t-1}^* \in \{0, x_{t-1}^*\}$ be a dummy variable s.t. $z_{t-1}^* = 0$ if \opt sends $x_{t-1}^*$ during round $t-1$ and $z_{t-1}^* = x_{t-1}^*$ otherwise (\emph{i.e.}, $x_{t-1}^*$ is still in \opt buffer at round $t$). We call the triple $(x_{t-1}, \sigma_{t-1}, z_{t-1}^*)$ the \textit{state} during round $t-1$, and define the potential at round $t-1$ as follows:
    \begin{equation}
        F(x_{t-1}, \sigma_{t-1}, z_{t-1}^*) = \begin{cases}
        0, ~~~&z_{t-1}^* = 0 \\
        \frac{1}{4}x_{t-1}\max\{5\sigma_{t-1}-1, 3\sigma_{t-1}\},~~~&z_{t-1}^* = x_{t-1}^*.
        \end{cases}
    \end{equation}
    Similarly, we describe the state during round $t$. Let $b_t$ be the packet with the largest UCB during round $t$, and $b_t^*$ be the heaviest. Similarly, consider $a_t$ and $a_t^*$. Let $v_t = a_t$ if $\ucba \ge UCB_{x_{t-1},t}$, and $v_t = x_{t-1}$ otherwise. Let $\sigma_t$ be the probability with which \algrtwo sends $b_t$ at round $t$, then we observe that $\sigma_t = \sigma_{t-1}\widehat{q}_{a_t, b_t} + (1-\sigma_{t-1})\widehat{q}_{v_t, b_t}$. Moreover, $z_t^*=0$ if \opt sends $b_t^*$ during $t$ and $z_t^*=b_t^*$ otherwise. The potential during round $t$ is then $F(b_{t}, \sigma_{t}, z_{t}^*)$.

    To prove our statement, we first prove that for every round $t \in [T]$ the following holds:
    \begin{align}
    \label{eq:delta_potential}
        1.25\mathbb{E}[\Delta \Galgrtwo^t] &+ F(x_{t-1}, \sigma_{t-1}, z_{t-1}^*)-F(b_{t}, \sigma_{t}, z_{t}^*) \ge \notag \\
        &\ge \mathbb{E}[\Delta \Gopt^t]-c(\sigma_{t-1}\beta_{a_t, t}+\sigma_{t-1}\beta_{x_{t-1}, t}+(1-\sigma_{t-1})\beta_{v_t, t}+\beta_{b_t, t}),
    \end{align}
    where $\Delta \Galgrtwo^t$ and $\Delta \Gopt^t$ are the expected weights of the packets sent at time $t$ by \algrtwo and \opt, respectively. We prove Equation \eqref{eq:delta_potential} under every possible scenario. For this first section of the proof, we assume without loss of generality that the burn-in period has finished. The regret accrued during that time represents an $\mathcal{O}\left(\log T\right)$ additive factor which is dominated by $\mathcal{O}(\sqrt{T})$.
    \paragraph{Case $1$: \opt sends $b_t^*$}
    In this case, $\mathbb{E}[\Delta \Gopt^t] = b_t^*$ and $z_t^*=0$. Thus, $F(b_{t}, \sigma_{t}, z_{t}^*)=0$. As the potential is always nonnegative, we are left to prove that $1.25\mathbb{E}[\Delta \Galgrtwo^t] \ge b_t^* - c(\sigma_{t-1}\beta_{a_t,t}+(1-\sigma_{t-1}\beta_{v_t, t}+\beta_{b_t,t})$. This can be proven by observing that \algrtwo selects $a_t$ with probability $\sigma_{t-1}\phat$, $a_t$ with probability $(1-\sigma_{t-1})\widehat{p}_{v_t b_t,t}$, and $b_t$ with probability $\sigma_t$. Thus
    \begin{align*}
        \frac{5}{4}\mathbb{E}[\Delta \Galgrtwo^t] &= \frac{5}{4}\sigma_{t-1}(\widehat{p}_{a_t, b_t,t}v_t + \widehat{q}_{a_t, b_t}b_t)+ \frac{5}{4}(1-\sigma_{t-1})(\widehat{p}_{v_t, b_t,t}v_t + \widehat{q}_{v_t, b_t}b_t) \\
        &\stackrel{\eqref{eq:phat_2}}{\ge} \sigma_{t-1}\left(b_t^*-\frac{5}{3}\beta_{a_t,t}-2\beta_{b_t,t}\right)+ (1-\sigma_{t-1})\left(b_t^*-\frac{5}{3}\beta_{v_t,t}-2\beta_{b_t,t}\right) \\
        &= b_t^* - \frac{5}{3}\sigma_{t-1}\beta_{a_t,t}- \frac{5}{3}(1-\sigma_{t-1})\beta_{v_t,t}-2\beta_{b_t,t}.
    \end{align*}
    \paragraph{Case $2$: \opt does not send $b_t^*$}
    In this case, $z_t^*=b_t^*$. Thus, $F(b_{t}, \sigma_{t}, z_{t}^*)=\frac{1}{4}b_{t}\max\{5\sigma_{t}-1, 3\sigma_{t-1}\}$. We have to prove that
    \begin{align*}
        \frac{5}{4}\mathbb{E}[\Delta \Galgrtwo^t]&=\frac{1}{4}\min\{b_t, 2\sigma_t b_t\}+\frac{5}{4}\sigma_{t-1}\phat a_t+\frac{5}{4}(1-\sigma_t)\widehat{p}_{v_t b_t,t} v_t+ F(x_{t-1}, \sigma_{t-1}, z_{t-1}^*) \\
        &\ge \mathbb{E}[\Delta \Gopt^t]-c(\beta_{a_t, t}+\beta_{v_t, t}+\beta_{b_t, t}).
    \end{align*}
    \paragraph{Case $2a$: \opt sends $a_t^*$}
    Let $v_t^* \coloneqq \max\{a_t^*, x_{t-1}^*\}$. In this case, $\mathbb{E}[\Delta \Gopt^t] = a_t^* \le v_t^*$. Moreover, $F(x_{t-1}, \sigma_{t-1}, z_{t-1}^*) \ge 0$. We first consider the case $b_t \le 2\sigma_t b_t$:
    \begin{align*}
        \frac{1}{4}b_t+\frac{5}{4}\sigma_{t-1}\phat a_t+\frac{5}{4}(1-\sigma_{t-q})\widehat{p}_{v_t b_t,t} v_t &\stackrel{\eqref{eq:phat_1}}{\ge} \sigma_{t-1}a_t^* +(1-\sigma_{t-1})v_t^*-\frac{5}{3}\sigma_{t-1}\betaa - \frac{5}{3}(1-\sigma_{t-1})\beta_{v_t,t}\\
        &\ge a_t^*-\frac{5}{3}\sigma_{t-1}\betaa - \frac{5}{3}(1-\sigma_{t-1})\beta_{v_t,t}.
    \end{align*}
    We then consider the case $b_t > 2\sigma_t b_t$:
    \begin{align*}
        \frac{1}{4}2 \sigma_t b_t+\frac{5}{4}\sigma_{t-1}\phat a_t+\frac{5}{4}(1-\sigma_{t-q})\widehat{p}_{v_t b_t,t} v_t &\stackrel{\eqref{eq:phat_3}}{\ge} \sigma_{t-1}a_t^* +(1-\sigma_{t-1})v_t^*-\frac{5}{3}\sigma_{t-1}\betaa - \frac{5}{3}(1-\sigma_{t-1})\beta_{v_t,t}\\
        &\ge a_t^*-\frac{5}{3}\sigma_{t-1}\betaa - \frac{5}{3}(1-\sigma_{t-1})\beta_{v_t,t}.
    \end{align*}
    \paragraph{Case $2a$: \opt sends $x_{t-1}^*$}
    In this case, $\mathbb{E}[\Delta \Gopt^t] = x_{t-1}^* = v_t^* \ge x_{t-1}$. We first consider the case $\ucbb \le LCB_{v_{t},t}$. Note that $F(x_{t-1}, \sigma_{t-1}, z_{t-1}^*) = F(x_{t-1}, \sigma_{t-1}, x_{t-1}^*) \ge \frac{1}{4}(5\sigma_{t-1}-1)x_{t-1}$ and $\widehat{p}_{v_{t} b_t, t} = 1$, which implies
    \begin{align*}
        \frac{5}{4}\mathbb{E}[\Delta \Galg^t] + F(x_{t-1}, \sigma_{t-1}, z_{t-1}^*) &\ge \frac{5}{4}(1-\sigma_{t-1})\widehat{p}_{v_{t} b_t, t}v_{t} + F(x_{t-1}, \sigma_{t-1}, z_{t-1}^*) \\
        &\ge \frac{5}{4}(1-\sigma_{t-1})v_{t}^*- \frac{5}{4}(1-\sigma_{t-1})\beta_{v_t, t}+ \frac{1}{4}(5\sigma_{t-1}-1)x_{t-1} \\
        &\ge \left(\frac{5}{4}v_{t}^*-\frac{1}{4}x_{t-1}\right)+ \frac{5}{4}\sigma_{t-1}(x_{t-1}-v_t^*) - \frac{5}{4}(1-\sigma_{t-1})\beta_{v_t, t}\\
        &\ge x_{t-1}^*+ \frac{5}{4}\sigma_{t-1}(x_{t-1}-v_t^*) - \frac{5}{4}(1-\sigma_{t-1})\beta_{v_t, t}\\
        &= x_{t-1}^*+ \frac{5}{4}\sigma_{t-1}(x_{t-1}-x_{t-1}^*) - \frac{5}{4}(1-\sigma_{t-1})\beta_{v_t, t}\\
        &\ge x_{t-1}^*- \frac{5}{4}\sigma_{t-1}\beta_{x_{t-1},t}- \frac{5}{4}(1-\sigma_{t-1})\beta_{v_t, t}.
    \end{align*}
    Second, consider the scenario $\ucbb > LCB_{v_{t},t}$ and $\lcbb < UCB_{v_{t},t}$, then $\widehat{p}_{v_t b_t, t} = \frac{4}{5}$:
    \begin{align*}
        \frac{5}{4}\mathbb{E}[\Delta \Galg^t] +F(x_{t-1}, \sigma_{t-1}, z_{t-1}^*)&\ge \frac{5}{4}(1-\sigma_{t-1})\widehat{p}_{v_t b_t, t}v_{t} + \frac{5}{4}\sigma_t b_t+ F(x_{t-1}, \sigma_{t-1}, z_{t-1}^*) \\
        &\ge (1-\sigma_{t-1})v_{t}^*- (1-\sigma_{t-1})\beta_{v_t,t}+\frac{5}{4}\sigma_t b_t+ F(x_{t-1}, \sigma_{t-1}, z_{t-1}^*) \\
        &\ge \frac{1}{4}(4v_{t}^*+b_t-x_{t-1}) + \frac{5}{4}\sigma_{t-1}\left(x_{t-1}-\frac{4}{5}v_{t}^*-\frac{1}{5}b_t\right) - (1-\sigma_{t-1})\beta_{v_t,t}\\
        &\ge \frac{1}{4}(5v_{t}^*-x_{t-1}) + \frac{5}{4}\sigma_{t-1}\left(x_{t-1}-v_{t}^*\right) - (1-\sigma_{t-1})\beta_{v_t,t}- \frac{1}{4}(1+\sigma_{t-1})\betab\\
        &\ge x_{t-1}^*- \frac{5}{4}\sigma_{t-1}\beta_{x_{t-1},t}- (1-\sigma_{t-1})\beta_{v_t, t}- \frac{1}{2}\betab.
    \end{align*}
    Finally, consider the scenario $\lcbb > UCB_{v_{t},t}$. If $b_t \le 2\sigma_{t}b_t$ we have:
    \begin{align*}
        \frac{5}{4}\mathbb{E}[\Delta \Galg^t]  &\ge \frac{1}{4}b_t+\frac{5}{4}(1-\sigma_{t-1})\widehat{p}_{v_t b_t,t} v_t \\
        &\stackrel{\eqref{eq:phat_1}}{\ge} \frac{1}{4}b_t^*+(1-\sigma_{t-1})v_t^* - \frac{5}{3}(1-\sigma_{t-1})\beta_{v_t,t}.
    \end{align*}
    If $b_t > 2\sigma_{t}b_t$ we have:
    \begin{align*}
        \frac{5}{4}\mathbb{E}[\Delta \Galg^t]  &\ge \frac{1}{4}2 \sigma_t b_t+\frac{5}{4}\sigma_{t-1}\phat a_t+\frac{5}{4}(1-\sigma_{t-q})\widehat{p}_{v_t b_t,t} v_t\\
        &= \frac{1}{4}\sigma_{t-1}(5\phat a_t + 2\qhat b_t) + \frac{1}{4}(1-\sigma_{t-1})(5\widehat{p}_{v_t b_t, t}v_t +  2\widehat{q}_{v_t b_t,t} b_t) \\
        &\stackrel{\eqref{eq:phat_4}}{\ge} \frac{1}{4}b_t^*+(1-\sigma_{t-1})v_t^* -  \frac{5}{3}\sigma_{t-1}\betaa-\frac{5}{3}(1-\sigma_{t-1})\beta_{v_t,t}.
    \end{align*}
    Noting that $F(x_{t-1}, \sigma_{t-1}, z_{t-1}^*) \ge \frac{3}{4}\sigma_{t-1}x_{t-1}$, we have:
    \begin{align*}
        \frac{5}{4}\mathbb{E}[\Delta \Galg^t] +F(x_{t-1}, \sigma_{t-1}, z_{t-1}^*)&\ge \frac{1}{4}b_t^*+(1-\sigma_{t-1})v_t^* + \frac{3}{4}\sigma_{t-1}x_{t-1} -  \frac{5}{3}\sigma_{t-1}\betaa-\frac{5}{3}(1-\sigma_{t-1})\beta_{v_t,t}\\
        &\ge v_t^* + \left(\frac{1}{4}b_t^* - \frac{1}{4} \sigma_{t-1}x_{t-1}\right)-  \frac{5}{3}\sigma_{t-1}\betaa-\frac{5}{3}(1-\sigma_{t-1})\beta_{v_t,t} \\
        &\ge v_t^* + \left(\frac{1}{4}b_t - \frac{1}{4} \sigma_{t-1}x_{t-1}\right)-  \frac{5}{3}\sigma_{t-1}\betaa-\frac{5}{3}(1-\sigma_{t-1})\beta_{v_t,t} \\
        &\ge v_t^* -  \frac{5}{3}\sigma_{t-1}\betaa-\frac{5}{3}(1-\sigma_{t-1})\beta_{v_t,t}. \\
    \end{align*}
    \paragraph{Bounding the Confidence Intervals}
    We have proven that, whatever \opt does, it holds
    \begin{align*}
        1.25\mathbb{E}[\Delta& \Galgrtwo^t] + F(x_{t-1}, \sigma_{t-1}, z_{t-1}^*)-F(b_{t}, \sigma_{t}, z_{t}^*) \ge \\
        &\ge\mathbb{E}[\Delta \Gopt^t]-2\left(\sigma_{t-1}\beta_{a_t, t}-\sigma_{t-1}\beta_{x_{t-1}, t}-(1-\sigma_{t-1})\beta_{v_t, t}-\beta_{b_t, t}\mathbbm{1}_{\{\ucbb > LCB_{v_{t},t}\}}\right).
    \end{align*}
    We start by lower bounding (in high probability) the number of times every packet is sent up to time $t$. Let $\mathcal{F}_{t-1}$ be the filtration up to time $t-1$, \emph{i.e.}, the $\sigma$-algebra generated by the sequence of events observed by \algrtwo. We define an auxiliary random variable $B_{i,t}$ which is conditionally Bernoulli s.t. $\mathbb{P}\left(B_{i,t} = 1 \mid \mathcal{F}_{t-1}\right) $ represents the probability that packet $i$ is sent at time $t$. Let $N_{i,t}$ be the (stochastic) number of times that packet $i \in [K]$ was sent up to time $t \in [T]$, \emph{i.e.} $N_{i,t} = \sum_{\ell=1}^t B_{i,\ell}$.
    \paragraph{(i) Bounding $\sum_{t=1}^T\sigma_{t-1}\betaa$}
     At time $t$, we have 
     \begin{align*}
         \mathbb{P}(B_{a_t,t} = 1 \mid \mathcal{F}_{t-1} ) \ge \sigma_{t-1}\phat \ge \sigma_{t-1}\frac{1}{5}.
     \end{align*}
     Fix $i\in [K]$. Let $G_{i,\ell}^a = \mathbbm{1}_{\{a_{l} = i\}}$, and $E_{i,t}^a = \{ \ell \in [t] :  G_{i,\ell}^a=1\}$. $G_{i,{\ell}}^a$ is $\mathcal{F}_{t-1}$-predictable for every $\ell \in [t]$. Let $N_{i,t}^a = \sum_{\ell=1}^{t} B_{a_{\ell}, \ell} G_{i,\ell}^a$. By algorithm's definition, for every $i \in [K]$, we get that $B_{i,t'}G_{i,t'}^a = 1$ for every $t'$ s.t. $\sum_{\ell=1}^{t'} G_{i,\ell}^a < 2\log \frac{1}{\delta}$. Thus, $N_{i,t'}^a \ge 2\log \frac{1}{\delta}$. Then, noting that $N_{i,t}^a \mathbb{P}(B_{i,t} = 1 \mid \mathcal{F}_{t-1} )G_{i,t}^a$ is a martingale difference sequence, we let $\delta \in (0,1)$ and $t\ge t'$, and by Azuma-Hoeffding Inequality with optional skipping we get
    \begin{align*}
        N_{i,t}^a &= \sum_{\ell = 1}^{t} B_{a_{\ell}, \ell} G_{i,\ell}^a\\
        &\ge  \sum_{\ell = 1}^{t} \sigma_{\ell-1} \frac{1}{5}G_{i,\ell}^a - \sqrt{\frac{1}{10}\sum_{\ell = 1}^{t}G_{i,\ell}^a \log \frac{1}{\delta}} \\
        &\ge N_{i,t'}^a+\frac{1}{10}\sum_{\ell = t'}^{t} \sigma_{\ell-1} G_{i,\ell-1}^a \\
        &\ge 2\log\frac{1}{\delta}+\frac{1}{10}\sum_{\ell = t'}^{t} \sigma_{\ell-1} G_{i,\ell-1}^a.
    \end{align*}
    It follows that, for every $i \in [K]$:
    \begin{align*}
        \sum_{t =1}^T\frac{\sigma_{t-1}}{\sqrt{N_{i,t}}}G_{i,t}^a &\le t' + \sum_{t \in E_{i,t}^a \cap \{t\ge t'\}}\frac{\sigma_{t-1}}{\sqrt{N_{i,t}^a}} \\
        &\le 2\log \frac{1}{\delta} + \sqrt{2 5E_{i,t}^a}.
    \end{align*}
    Thus, since $\beta_{a_{t-1},t} = c\sqrt{\frac{\ln \frac{KT}{\delta'}}{N_{i,t}^a}}$, for some $\delta' \in (0,1)$, we have that the following holds
    \begin{align*}
        \sum_{t=1}^T \sigma_{t-1}\beta_{a_{t-1},t} &\le \sum_{i \in [K]} \sum_{t \in E_{i,T}^a} c\sigma_{t-1}\sqrt{\frac{\ln \frac{KT}{\delta'}}{N_{i,t}^a}}  \\
        &\le 2cK\log \frac{1}{\delta} \sqrt{ \ln \frac{KT}{\delta'}} + c\sum_{i \in [K]}\sqrt{2 5E_{i,t}^a \ln \frac{KT}{\delta'}} \\
        &\le 2cK\log \frac{1}{\delta} \sqrt{\ln \frac{KT}{\delta'}} + c\sqrt{25 KT \ln \frac{KT}{\delta'}},
    \end{align*}
    where the last inequality follows from the concavity of the square root.
    
    \paragraph{(ii) Bounding $\sum_{t=1}^T\betab \mathbbm{1}_{\{\ucbb > LCB_{v_{t},t}\}}$}
    At time $t$, we have 
     \begin{align*}
         \mathbb{P}(B_{b_t,t} = 1 \mid \mathcal{F}_{t-1} ) \ge \sigma_{t}\ge \frac{1}{5}\mathbbm{1}_{\{\ucbb > LCB_{v_{t},t}\}}
     \end{align*}
    Let $G_{i,\ell}^b = \mathbbm{1}_{\{b_l = i\} \cap \{\ucbb > LCB_{v_{t},t}\}}$ and $E_{i,t}^b = \{ \ell \in [t] :  G_\ell^b=1\}$. $G_{i,\ell}^b$ is $\mathcal{F}_{t-1}$-predictable for every $\ell \in [t]$. Note that $G_t^b (B_{i,t}-\mathbb{P}(B_{i,t} = 1 \mid \mathcal{F}_{t-1} ))$ is a martingale difference sequence. By algorithm's definition, for every $i \in [K]$, we get that $B_{i,t'}G_{i,t'}^b = 1$ for every $t'$ s.t. $\sum_{\ell=1}^{t'} G_{i,\ell}^b < 50\log \frac{1}{\delta}$. Thus, $N_{i,t'}^b \ge 50\log \frac{1}{\delta}$. Let $\delta \in (0,1)$ and $t \ge t'$, then we use the Azuma-Hoeffding inequality with optional skipping to get that, with probability at least $1-\delta$, it holds
    \begin{align*}
        N_{i,t}^b &\ge \sum_{\ell=1}^t G_{i, \ell}^b\mathbb{P}\left(B_{i,\ell} = 1 \mid \mathcal{F}_{\ell-1}\right) - \sqrt{\frac{1}{2} \sum_{\ell=1}^t G_{i, \ell}^b\mathbb{P}\left(B_{i,\ell} = 1 \mid \mathcal{F}_{\ell-1}\right)\log \frac{1}{\delta}} \\
        &\ge \frac{1}{5}\sum_{\ell=1}^t G_{i, \ell}^b - \sqrt{\frac{1}{2} \sum_{\ell=1}^t G_{i, \ell}^b\log \frac{1}{\delta}}\\
        &\ge \frac{1}{5}\sum_{\ell=1}^t G_{i, \ell}^b - \sqrt{\frac{1}{2} \sum_{\ell=1}^t G_{i, \ell}^b\log \frac{1}{\delta}}\\
        &\ge 50\log \frac{1}{\delta} + \frac{1}{10}\sum_{\ell=t'}^t G_{i, \ell}^b.
    \end{align*}
    It follows that, for every $i \in [K]$:
    \begin{align*}
        \sum_{t =1}^T\frac{1}{\sqrt{N_{i,t}}}G_{i,t}^b &\le t' + \sum_{t \in E_{i,t}^b \cap \{t\ge t'\}}\frac{1}{\sqrt{N_{i,t}^b}} \\
        &\le  50\log \frac{1}{\delta}+ \sqrt{2 E_{i,t}^b}.
    \end{align*}
    Thus, since $\beta_{b_t,t} \le c\sqrt{\frac{\ln \frac{KT}{\delta'}}{N_{i,t}^b}}$, for some $\delta' \in (0,1)$, we have that the following holds
    \begin{align*}
        \sum_{t=1}^T \beta_{b_t,t} &\le \sum_{i \in [K]} \sum_{t \in E_{i,T}^b} c\sigma_{t-1}\sqrt{\frac{\ln \frac{KT}{\delta'}}{N_{i,t}^b}}  \\
        &\le c50K\log \frac{1}{\delta} \sqrt{ \ln \frac{KT}{\delta'}} + c\sum_{i \in [K]}\sqrt{50 E_{i,t}^b \ln \frac{KT}{\delta'}} \\
        &\le c50K\log \frac{1}{\delta} \sqrt{\ln \frac{KT}{\delta'}} + c\sqrt{50 KT \ln \frac{KT}{\delta'}} .
    \end{align*}
    where the last inequality follows from the concavity of the square root.
    \paragraph{(iii) Bounding $\sum_{t=1}^T (1-\sigma_{t-1})\beta_{v_t,t}$}
    At time $t$, we have 
     \begin{align*}
         \mathbb{P}(B_{v_t,t} = 1 \mid \mathcal{F}_{t-1} ) \ge(1- \sigma_{t-1})\widehat{p}_{v_t b_t,t} \ge (1-\sigma_{t-1})\frac{1}{5}.
     \end{align*}
    Following an analogous procedure as in paragraph (i), we get
    \begin{align*}
        \sum_{t=1}^T (1-\sigma_{t-1})\beta_{v_{t},t} &\le \sum_{i \in [K]} \sum_{t \in E_{i,T}^v} c(1-\sigma_{t-1})\sqrt{\frac{\ln \frac{KT}{\delta'}}{N_{i,t}^v}}  \\
        &\le 2cK\log \frac{1}{\delta} \sqrt{ \ln \frac{KT}{\delta'}} + c\sum_{i \in [K]}\sqrt{2 5E_{i,t}^v \ln \frac{KT}{\delta'}} \\
        &\le 2cK\log \frac{1}{\delta} \sqrt{\ln \frac{KT}{\delta'}} + c\sqrt{25 KT \ln \frac{KT}{\delta'}}.
    \end{align*}
    \paragraph{(iv) Bounding $\sum_{t=1}^T \sigma_{t-1}\beta_{x_{t-1},t}$}
    Fix $i\in [K]$. Let $G_{i,\ell-1}^x = \mathbbm{1}_{\{x_{l-1} = i\}}$, and $E_{i,t}^x = \{ \ell \in [t] :  G_{i,\ell-1}^x=1\}$. $G_{i,{\ell-1}}^x$ is $\mathcal{F}_{t-2}$-predictable for every $\ell \in [t]$. Also, $B_{x_{t-1}, t-1}$ belongs to $\mathcal{F}_{t-1}$. Note that $\mathbb{P}(B_{x_{t-1}, t-1}=1 \mid \mathcal{F}_{t-2}) = \sigma_{t-1}$. Let $N_{i,t-1}^x = \sum_{\ell=1}^{t} B_{x_{t-1}, \ell-1} G_{i,\ell-1}^x$. By algorithm's definition, for every $i \in [K]$, we get that $B_{i,t'}G_{i,t'}^x = 1$ for every $t'$ s.t. $\sum_{\ell=1}^{t'} G_{i,\ell}^x < 2\log \frac{1}{\delta}$. Thus, $N_{i,t'}^x \ge 2\log \frac{1}{\delta}$. Then, noting that $N_{i,t-1}^x-\sum_{\ell = 1}^{t} \sigma_{\ell-1} G_{i,\ell-1}^x$ is a martingale difference sequence, we let $\delta \in (0,1)$ and $t\ge t'$, and by Azuma-Hoeffding Inequality with optional skipping we get
    \begin{align*}
        N_{i,t-1}^x &= \sum_{\ell = 1}^{t} B_{x_{\ell-1}, \ell-1} G_{i,\ell-1}^x\\
        &\ge  \sum_{\ell = 1}^{t} \sigma_{\ell-1} G_{i,\ell-1}^x - \sqrt{\frac{1}{2}\sum_{\ell = 1}^{t} \sigma_{\ell-1} G_{i,\ell-1}^x \log \frac{1}{\delta}} \\
        &\ge N_{i,t'}^x+\frac{1}{2}\sum_{\ell = t'}^{t} \sigma_{\ell-1} G_{i,\ell-1}^x \\
        &\ge 2\log\frac{1}{\delta}+\frac{1}{2}\sum_{\ell = t'}^{t} \sigma_{\ell-1} G_{i,\ell-1}^x.
    \end{align*}
    It follows that, for every $i \in [K]$:
    \begin{align*}
        \sum_{t =1}^T\frac{\sigma_{t-1}}{\sqrt{N_{i,t}}}G_{i,t-1}^x &\le t' + \sum_{t \in E_{i,t}^x \cap \{t\ge t'\}}\frac{\sigma_{t-1}}{\sqrt{N_{i,t}^x}} \\
        &\le 2\log \frac{1}{\delta} + \sqrt{2 E_{i,t}^x}.
    \end{align*}
    Thus, since $\beta_{x_{t-1},t} \le c\sqrt{\frac{\ln \frac{KT}{\delta'}}{N_{i,t}^x}}$, for some $\delta' \in (0,1)$, we have that the following holds
    \begin{align*}
        \sum_{t=1}^T \sigma_{t-1}\beta_{x_{t-1},t} &\le \sum_{i \in [K]} \sum_{t \in E_{i,T}^x} c\sigma_{t-1}\sqrt{\frac{\ln \frac{KT}{\delta'}}{N_{i,t}^x}}  \\
        &\le 2cK\log \frac{1}{\delta} \sqrt{ \ln \frac{KT}{\delta'}} + c\sum_{i \in [K]}\sqrt{2 E_{i,t}^x \ln \frac{KT}{\delta'}} \\
        &\le 2cK\log \frac{1}{\delta} \sqrt{\ln \frac{KT}{\delta'}} + c\sqrt{2 KT \ln \frac{KT}{\delta'}},
    \end{align*}
    where the last inequality follows from the concavity of the square root.

    Setting $\delta'=\frac{1}{T}$ we get that $\mathbb{P}(\mathcal{E}(\delta'))\ge 1-\frac{1}{T}$ (Lemma \ref{lem:good_event}). Setting $\delta=\frac{1}{KT^2}$ (to account for the $K$ bounds to hold simultaneously at every round), and by a standard union bound argument, we get that all Azuma-Hoeffding bounds used in paragraphs (i)-(iv) hold simultaneously with probability greater than $1-\frac{1}{KT^2}$. We call this event $\widetilde{\mathcal{E}}(\delta)$ and we get $\mathbb{P}(\widetilde{\mathcal{E}}(\delta')) \ge 1-\frac{1}{KT^2}$.
    
    \begin{align*}
        \mathbb{E}[R_{\frac{5}{4},T}^{\text{\algrtwo}}] &= \mathbb{E}[R_{\frac{5}{4},T}^{\text{\algrtwo}}|\mathcal{E}(\delta)\cap \widetilde{\mathcal{E}}(\delta)]\mathbb{P}\left(\mathcal{E}(\delta)\cap \widetilde{\mathcal{E}}(\delta)\right) + \\
        \qquad \qquad&+\mathbb{E}[R_{\frac{5}{4},T}^{\text{\algrtwo}}|(\mathcal{E}(\delta)\cap \widetilde{\mathcal{E}}(\delta))^C]\mathbb{P}\left((\mathcal{E}(\delta)\cap \widetilde{\mathcal{E}}(\delta))^C\right) \\
        &\le \widetilde{\mathcal{O}}\left(\sqrt{KT}\right).
    \end{align*}

    Putting all together completes the proof.
\end{proof}
\upperboundlearnrandoms*
\begin{proof}
    Assume that the good event $\mathcal{E}(\delta)$ holds, for some $\delta$ to be specified later. Without loss of generality, assume that \opt schedules its packets in an increasing order of deadline. We define the set $\mathcal{B}_{t}^{\text{\opt}}$ as the set of packets in the buffer of \opt at round $t$ that \opt will schedule in the future rounds. We introduce a potential function $F(\cdot)$ defined as $$F(\mathcal{B}_t, \mathcal{B}_t^{\text{\opt}})=\sum_{w \in \mathcal{B}_t^{\text{\opt}} \setminus \mathcal{B}_t} w.$$
    Incoming new packets and expirations do not modify $F$, and $F$ only varies when \algrs executes a packet. Let $f_t$ be the packet selected at time $t$ by \algrs, and $j_t$ be the packet selected at time $t$ by \opt.
    
    \paragraph{Case $1$: $j_t \in \mathcal{B}_t^{\text{\opt}} \setminus \mathcal{B}_t$}~~In this case, $F$ decreases by $j_t$, and increases by $f_t$, thus $\Delta_t F = f_t-j_t$. Thus $j_t-\frac{e}{e-1}f_t + \Delta_t F = -\frac{1}{e-1}f_t \le 0$. 
    
    \paragraph{Case $2$: $j_t \in \mathcal{B}_t^{\text{\opt}} \cap \mathcal{B}_t$}~~In this case, $F$ increases at most by $f_t$, thus $\Delta_t F \le f_t$.
    
    If $UCB_{j_t,t}\ge e^{x_t} LCB_{\underline{h}_t,t}$, then $\Delta_t F = 0$. Indeed, if $j_t$ satisfies the \algrs condition, it is either $j_t=f_t$ or $f_t$ has a closer deadline than $j_t$. As \opt breaks ties in favor of the earliest deadline, then $f_t \notin \mathcal{B}_{t+1}^{\text{\opt}}$, and $\Delta_t F = 0$.

    Let $z_t = \max\left\{-1+\ln \frac{UCB_{\bar{h}_t,t}}{LCB_{\underline{h}_t,t}}, \ln\frac{UCB_{j_t,t}}{LCB_{\underline{h}_t,t}}\right\}$. Then, if $x_t \in \left[-1+\ln \frac{UCB_{\bar{h}_t,t}}{LCB_{\underline{h}_t,t}}, z_t\right]$ we have $\Delta_t F = 0$. Else, we have $0\le \Delta_t F \le f_t$. We can then take the expectation also w.r.t. to the randomization of $x_t$ and write:
    \begin{align*}
    \mathbb{E}_{x_t}&\left[{j_t}-\frac{e}{e-1}f_t+\Delta_t F\right] = j_t-\frac{1}{e-1}\mathbb{E}_{x_t}\left[f_t\right]-\mathbb{E}_{x_t} \left[f_t- \Delta_t F\right]\\
    &\le j_t-\frac{1}{e-1}\mathbb{E}_{x_t}\left[UCB_{f_t,t}-\beta_{f_t,t}\right]-\mathbb{E}_{x_t} \left[UCB_{f_t,t}-\beta_{f_t,t}- \Delta_t F\right]\\
    &\le j_t-\frac{1}{e-1}\int_{-1+\ln \frac{UCB_{\bar{h}_t, t}}{LCB_{\underline{h}_t,t}}}^{\ln \frac{UCB_{\bar{h}_t, t}}{LCB_{\underline{h}_t,t}}} e^x LCB_{\underline{h}_t,t} dx - \int_{-1+\ln \frac{UCB_{\bar{h}_t, t}}{LCB_{\underline{h}_t,t}}}^z e^x LCB_{\underline{h}_t,t} dx + 2\mathbb{E}_{x_t}[\beta_{f_t,t}] \\
    &\le j_t - \frac{1}{e-1}\left(\frac{UCB_{\bar{h}_t, t}}{LCB_{\underline{h}_t,t}}-\frac{1}{e}\frac{UCB_{\bar{h}_t, t}}{LCB_{\underline{h}_t,t}}\right)LCB_{\underline{h}_t,t} - \left(\frac{UCB_{j_t,t}}{LCB_{\underline{h}_t,t}}- \frac{1}{e}\frac{UCB_{\bar{h}_t, t}}{LCB_{\underline{h}_t,t}}\right)LCB_{\underline{h}_t,t} + 2\mathbb{E}_{x_t}[\beta_{f_t,t}] \\
    &\le j_t - UCB_{j_t,t} - LCB_{\underline{h}_t,t}\left(\frac{1}{e-1}\frac{UCB_{\bar{h}_t,t}}{LCB_{\underline{h}_t,t}} -\frac{1}{e-1}\frac{UCB_{\bar{h}_t, t}}{LCB_{\underline{h}_t,t}}\right)  + 2\mathbb{E}_{x_t}[\beta_{f_t,t}] \\
    &\le 2\mathbb{E}_{x_t}[\beta_{f_t,t}].
    \end{align*}
    \paragraph{Bounding the Confidence Intervals}
    The packet selected by \algrs is random, thus, we rewrite
    \begin{align*}
        \sum_{t=1}^T\mathbb{E}_{x_t}[\beta_{f_t,t}] &=  \sum_{t=1}^T\sum_{i \in[K]} \mathbb{P}_t(f_t=i)\beta_{i,t_{i,j}},
    \end{align*}
    where $\mathbb{P}_t(f_t=i)$ is the probability that the selected packet is $i$ at time $t$. Let $N_{i,t}$ be the number of times a packet of type $i$ has been selected up to time $t$. We consider a worst-case allocation, where we maximize the probability of the packet with the largest UCB to be selected. Then
    \begin{align*}
        \sum_{t=1}^T\sum_{i \in[K]} \mathbb{P}_t(f_t=i)\beta_{i,t} &\le \sum_{t=1}^T \max_{i\in[K]}{\beta_{i,t}} \\
        &=  \sum_{t=1}^Tc\max_{i\in[K]}{\sqrt{\frac{\ln \frac{KT}{\delta}}{N_{i,t}}}}\\
        &\le   \sum_{t=1}^Tc{\sqrt{\frac{\ln \frac{KT}{\delta}}{\frac{t}{K}}}} \\
        &\le \widetilde{\mathcal{O}}\left(\sqrt{KT}\right),
    \end{align*}
    where we used the fact that the worst-case allocation is performing round-robin, where the $\widetilde{\mathcal{O}}$ only hides universal constants and logarithmic terms. 

    Setting $\delta=\frac{1}{T}$, the good event holds with probability at least $1-\frac{1}{T}$:
    \begin{align*}
        \mathbb{E}[R_{\frac{e}{e-1},T}^{\text{\algrs}}] &= \mathbb{E}[R_{\frac{e}{e-1},T}^{\text{\algrs}}|\mathcal{E}(\delta)]\mathbb{P}\left(\mathcal{E}(\delta)\right) + \mathbb{E}[R_{\frac{e}{e-1},T}^{\text{\algrs}}|\mathcal{E}(\delta)^C]\mathbb{P}\left(\mathcal{E}(\delta)^C\right) \\
        &\le \widetilde{\mathcal{O}}\left(\sqrt{KT}\right).
    \end{align*}

    Putting all together completes the proof.
\end{proof}
\end{document}